\documentclass{article}

\usepackage[main, final, nonatbib]{neurips_2026}

\usepackage[T1]{fontenc}    
\usepackage{url}            
\usepackage{booktabs}       
\usepackage{amsfonts}       
\usepackage{nicefrac}       
\usepackage{microtype}      
\usepackage{xcolor}         
\usepackage[numbers,sort&compress]{natbib}
\usepackage{bm}
\usepackage{bbm}
\usepackage{amsmath}
\usepackage{pifont}
\usepackage{graphicx}
\usepackage{enumitem}
\usepackage{fontspec}
\usepackage{amsthm}
\usepackage{amsfonts}
\usepackage{algorithm}
\usepackage{algpseudocode}
\newtheorem{theorem}{Theorem}
\usepackage{subcaption}
\usepackage{newtxtext,newtxmath}
\usepackage{xcolor}
\usepackage[colorlinks]{hyperref}
\usepackage{multirow}
\usepackage{colortbl}
\usepackage[most]{tcolorbox}
\usepackage{tabularx}
\usepackage{array}
\usepackage{mathtools}
\usepackage{adjustbox}
\usepackage{wrapfig}
\usepackage{tikz}
\newcommand{\circnum}[1]{%
  \tikz[baseline=(char.base)]{
    \node[shape=circle, fill=black, text=white,
          inner sep=0.9pt, font=\scriptsize\sffamily\bfseries] (char) {#1};
  }%
}
\usepackage{float}
\usepackage{booktabs,tabularx}
\newcolumntype{Y}{>{\raggedright\arraybackslash}X}

\newtheorem{proposition}{Proposition}
\newtheorem{assumption}{Assumption}

\definecolor{MyCustomBlue}{HTML}{92CE6A}
\definecolor{MyCustomPink}{HTML}{C7513E}
\hypersetup{
    citecolor=MyCustomBlue,
    linkcolor=MyCustomPink,
}

\title{DRIVE: Fine-tuning via Data Contribution- and Diversity-aware Weighting with Prior Regularization}

\author{%
  \textbf{Qing Liu}\textsuperscript{1} \quad
  \textbf{Xinrui Chen}\textsuperscript{1} \quad
  \textbf{Yi Du}\textsuperscript{1,2*} \quad
  \textbf{Ou Wu}\textsuperscript{1} \quad
  \textbf{Weiyao Zhu}\textsuperscript{3} \\[0.08in]
  \textsuperscript{1}Hangzhou Institute for Advanced Study,\\
  University of Chinese Academy of Sciences, Hangzhou, China\\
  \textsuperscript{2}Computer Network Information Center,
  Chinese Academy of Sciences, Beijing, China\\
  \textsuperscript{3}School of Mathematics, Tianjin University,
  Tianjin, China\\[0.05in]
  \texttt{\{liuqing25,chenxinrui25\}@mails.ucas.ac.cn} \\[0.05in]
\texttt{duyi@cnic.cn} \quad 
\texttt{wuou@ucas.ac.cn} \quad \texttt{weiyaozhu@tju.edu.cn}
}

\begin{document}

\maketitle

\begingroup
\renewcommand{\thefootnote}{\fnsymbol{footnote}}
\footnotetext[1]{Corresponding author.}
\endgroup

\begin{abstract}

Single-stage fine-tuning is pervasive, yet catastrophic forgetting persists when the source pre-training data is inaccessible. Data weighting affords a natural lever to modulate sample influence. Yet existing weighting methods lack a principled mechanism to explicitly arbitrate knowledge retention against new-task adaptation, and provide no unified differentiable objective to accommodate auxiliary signals such as sample diversity and sample characteristics. This is because current schemes privilege one goal over the other, and no explicit gradient-compatible encoding exists for such signals. To address this, we propose $\textbf{DRIVE}$, a principled and efficient weight assignment framework. It derives per-sample weights via a unified differentiable objective where three coupled signals co-evolve: Shapley additivity splits contribution into anti-forgetting and adaptation components, which are reshaped by a covariance-based diversity term and stabilized by a sample-characteristic prior. Far from a mere combination, this yields an end-to-end gradient-based scheme balancing stability against plasticity. Theoretically, we prove that DRIVE enjoys superior stability-plasticity guarantees over existing weighting schemes and enhances parameter estimation robustness. Empirically, it attains superior overall performance, faster convergence, and stronger robustness. Code at \url{https://github.com/luciabcd/DRIVE}.

\end{abstract}

\section{Introduction}
\label{submission}
Fine-tuning is a key paradigm for improving downstream  performance in modern deep learning~\cite{he2016deep,devlin2019bert}. Recent studies show that even in a single round of fine-tuning, catastrophic forgetting can occur~\cite{wu2026mitigating,wang2026on,harmon2026mapping,lin2026sft}. Because the original pre-training data is typically unavailable in practice, prior work has explored data-oblivious strategies in both parameter space and sample space. Parameter-space methods mitigate forgetting by constraining update magnitudes~\cite{lubana2022quadratic,chen2024mofo}, updating only a subset of parameters, introducing additional parameter modules~\cite{panda2024lottery,hu2022lora,biderman2024lora}, or using model merging~\cite{ilharco2023editing,kleiman2025soup}. Sample-space methods mainly include sample weighting, selection, and augmentation~\cite{li2025meta,thakkar-etal-2023-self,sow2025dynamic}.

Data weighting enables fine-grained control over sample influence by upweighting high-contribution samples~\cite{zhao2026rethinking,zhou2022understanding,shu2023cmw}. In practice, most studies measure contribution by a sample's effect on new task learning, for example through weighting based on validation loss, gradient magnitude, or sample loss~\cite{zhao2026rethinking,li2025meta,thakkar-etal-2023-self,sow2025dynamic}, whereas FLOW measures it by the sample's role in preserving pre-trained knowledge during fine-tuning~\cite{sanyal2025upweighting}. While effective, these methods still ground sample contribution in either adaptation-oriented or retention-oriented signals, rather than explicitly balancing stability and plasticity, and thus they cannot fully characterize each sample's contribution to the fine-tuning.

Beyond sample contribution, sample diversity and sample characteristics have also been widely considered in practice. 
Regarding sample diversity, existing studies usually rely on clustering, DPP, or similarity coverage for selection~\cite{liu2024what,das2024deft,li2025unlocking,zhang2025survey,wang2025diversity}, making it difficult to encode diversity in a continuous weighting objective. For sample characteristics, studies typically use signals such as sample length and entropy to reflect the  quality or complexity of samples~\cite{liu2024selectit,li-etal-2024-one,li-etal-2024-quantity,luo2025semi,zhang2025d3}, and some directly treat them as proxies for sample contribution to determine sample weights~\cite{han2022umix,zhang2025dorm}. However, these signals are inherently task-agnostic properties with no direct correspondence to a sample's actual contribution during fine-tuning. We therefore treat them as complementary to sample contribution rather than substitutes for it. Overall, within sample weighting, existing methods usually incorporate only part of the above information into weight assignment. More importantly, because diversity is difficult to encode as a differentiable weighting term, existing sample-weighting methods lack a unified differentiable objective that jointly incorporates sample contribution, sample diversity, and sample characteristics. This not only makes it difficult to realize an end-to-end weighting scheme, limiting full data utilization, but also hinders a more principled analysis of optimization behavior and convergence.

\begin{figure*}[t]
\centering
\includegraphics[width=\textwidth]{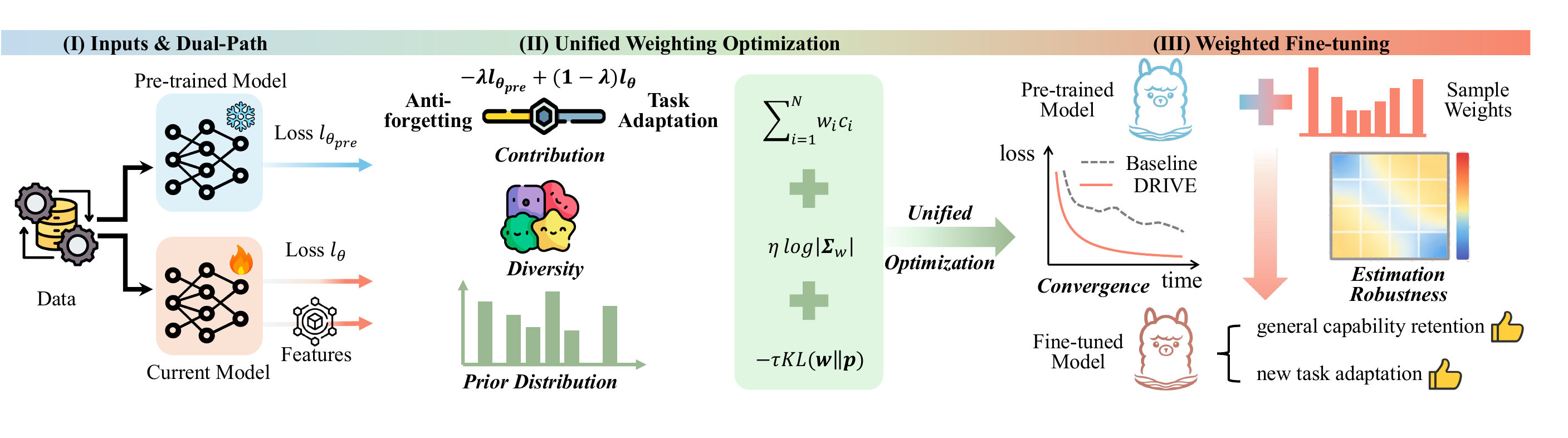}
\caption{Overview of DRIVE's unified differentiable data weighting objective.}
\label{fig:DRIVE_framework}
\vspace{-0.1in}
\end{figure*}

To address the above issues, we propose \textbf{DRIVE} (Fine-tuning via \textbf{D}ata Cont\textbf{RI}bution- and Di\textbf{V}ersity-aware W\textbf{E}ighting with prior Regularization), a principled and efficient weighting framework. Rather than stacking existing signals, \textbf{DRIVE reformulates each ingredient for joint compatibility within one differentiable objective}. Sample contribution is recast through a Shapley-additivity decomposition that disentangles anti-forgetting from task adaptation and yields a tunable stability--plasticity trade-off. Diversity is made differentiable via a covariance-determinant term, while sample characteristics serve as a KL-anchored prior over weights. These reformulations enable end-to-end weighting. Theoretically, DRIVE enjoys improved stability--plasticity guarantees and enhanced parameter-estimation robustness. Empirically, across vision models and LLMs, it attains stronger overall performance. Fig.~\ref{fig:DRIVE_framework} illustrates the overall framework. Our key contributions are summarized as follows:



\begin{itemize}[leftmargin=*,nosep]
\item We propose DRIVE, a novel weighting framework that jointly reformulates components rather than off-the-shelf aggregation. Shapley additivity yields an anti-forgetting and task-adaptation decomposition; a covariance-determinant term renders diversity differentiable, to our knowledge the first such term for sample weighting; and KL anchors weights to a sample-characteristic prior.
    \item We develop a provably convergent iterative solver for the coupled DRIVE objective. Theoretical analysis in linear/local nonlinear regimes shows contribution decomposition tightens the stability--plasticity trade-off, while diversity provably accelerates convergence and enhances robustness.

    \item Across vision models and LLMs, DRIVE consistently outperforms strong baselines, with better overall performance, faster convergence, and stronger robustness.
\end{itemize}

\section{Related Work}
\textbf{Mitigating Catastrophic Forgetting in Fine-tuning.}
Fine-tuning is pivotal for pre-trained models~\cite{hu2023llm,wu2025llm,kuang2024federatedscope}, yet balancing new skills with prior capabilities remains challenging. To mitigate catastrophic forgetting, methods target parameter or sample spaces~\cite{chen2026algorithmgoalsdualscoring,zhang2026nads}: regularization constrains weight drift~\cite{kirkpatrick2017overcoming,schwarz2018progress,aljundi2018memory,lee2020continual,sliwa2025mitigating}; optimization ensures orthogonal updates~\cite{chaudhry2018efficient,zeng2019continual,farajtabar2020orthogonal}; architectures isolate parameter subsets~\cite{rusu2016progressive,wang2023incorporating,gurbuz2022nispa,jung2020continual}; and replay injects past samples~\cite{aljundi2019gradient,bang2021rainbow,tiwari2022gcr,de2021continual}. In contrast, we introduce a data weighting perspective, optimizing the trade-off between anti-forgetting and task-adaptation.

\textbf{Data Weighting in Fine-tuning.}
Recent data weighting strategies  adopt fine-grained evaluations. Existing works optimize training dynamics via loss alignment, meta-learning, or robust dynamic weighting to mitigate noise and forgetting~\cite{zhou2022understanding, shu2023cmw, chen2024take, sow2025dynamic}. Others utilize quality proxies, such as reward scores, consistency metrics, or information entropy, to guide data selection or weighting~\cite{mukherjee2025learning, kuo2025not, liu2024selectit, luo2025semi, han2025automatic, zhang2025d3}. Further studies isolate fine-tuning contributions through distribution shifts, trajectory analysis, or orthogonal model decomposition~\cite{kang2024get, zeng2025lensllm, ju2024mitigating, nuti2025tuco}. Unlike these isolated perspectives, we propose a unified weighting objective that jointly models data contribution, diversity, and sample prior.

\textbf{Data Diversity in Fine-tuning.}
Data diversity is vital for model generalization and handling unseen semantics~\cite{stap2025effect,wang2025multilingual,li2024entropic,zhang2025diversification}, supporting performance across contexts.
Existing strategies rely on discrete data selection, employing clustering-based sampling to prune redundancy~\cite{li2025unlocking}, coreset selection~\cite{chai2023goodcore,das2024deft}, determinantal selection maximizing volume~(DPP)~\cite{wang2025diversity}, or complexity-aware filtering like DEITA~\cite{liu2024what,zhang2025survey}.
Others emphasize response-level diversity for alignment~\cite{li2025macro,song2024scaling,chen2025enhancing}.
Unlike these, we integrate a covariance determinant regularizer into the loss-weighting objective.

\section{Methodology}
\label{sec:method}
We consider a realistic deployment scenario: the single-task, data-oblivious setting of fine-tuning on a dataset $\mathcal{D} = \{(\bm{x}_i, \bm{y}_i)\}_{i=1}^N$ without access to pre-training data or continual task sequences (unlike continual learning with sequential task boundaries). We denote the parameters of pre-trained, current, and ideal models by $\bm{\theta}_\text{pre}$, $\bm{\theta}$, and $\bm{\theta}^*$, respectively. With model output $f(\bm{x}_i,\bm{\theta})$ and loss $\ell(\cdot,\cdot)$, we assign a weight $w_i (\ge 0)$ to $\bm{x}_i$, satisfying $\sum_i w_i = 1$. This section infers the optimal weights via the DRIVE method, which are then employed to fine-tune the current model as shown in Fig.~\ref{fig:DRIVE_framework}.

\subsection{Data Contribution and Diversity Measures}
\label{sec:contribution}

\textbf{Measuring Contribution via Shapley-inspired Scores.}
Data contribution quantifies the utility of individual samples for task performance.
 We disentangle fine-tuning contribution as balancing two distinct cooperative games over dataset
$\mathcal{D}$: the anti-forgetting game $v_{\text{pre}}$, whose utility encodes 
retention of pre-trained knowledge, and the task-adaptation game $v_{\text{task}}$, encoding  target-task performance.

Let \(\phi_i(v)\) denote the Shapley value assigned to sample \(\bm{x}_i\) under game \(v\). By the \textbf{Additivity Axiom of Shapley values}~\citep{shapley1953value},
the contribution of sample $\bm{x}_i$ in the composite game
$v_{\lambda} = \lambda v_{\text{pre}} + (1-\lambda)v_{\text{task}}$ with
trade-off coefficient $\lambda \in [0,1]$ must satisfy
\begin{equation}
    \phi_i(v_{\lambda})
    =
    \lambda \, \phi_i(v_{\text{pre}})
    +
    (1-\lambda) \, \phi_i(v_{\text{task}}).
    \label{eq:additivity}
\end{equation}
This axiomatically justifies linear aggregation. We denote the anti-forgetting
contribution by $a_i = \phi_i(v_{\text{pre}})$ and the task-adaptation contribution
by $t_i = \phi_i(v_{\text{task}})$, and define the contribution as
\begin{equation}
\label{eq:joint-contrib2}
c_i = \mathcal{C}(a_i, t_i) \coloneqq \lambda\, a_i + (1-\lambda)\, t_i.
\end{equation}
Estimating \(a_i\) and \(t_i\) requires a per-sample contribution measure. The canonical
instrument, gradient inner products against a held-out validation
set~\citep{wang2025data} (see Appendix~\ref{app:why_not_shapley}), is foreclosed by
our strict data-oblivious constraint, where neither pre-training data nor original
validation sets are available. We therefore use a loss-based proxy computable from the
model and sample alone. For cross-entropy objectives, loss measures the model's
deviation from the target and provides a scalar, summary of the
optimization signal carried by a sample. Thus, near-zero-loss samples typically induce
limited further updates, whereas high-loss samples remain active learning signals~\cite{chen2024take}.
Empirically, loss-based estimates correlate with validation-based Shapley contributions
at Spearman's \(\rho = 0.83\) and match gradient-based variants within \(0.3\%\) in
accuracy while running \(\sim 2\times\) faster
(Appendices~\ref{app:proxy_support} and~\ref{app:alternative_metrics}). We therefore
adopt the efficient contribution estimates
\begin{equation}
\label{eq:utilities}
a_i = -\ell(f(\bm{x}_i,\bm{\theta}_{\text{pre}}), \bm{y}_i),
\qquad
t_i = \ell(f(\bm{x}_i,\bm{\theta}), \bm{y}_i).
\end{equation}
\textbf{Measuring Diversity via the Covariance Determinant.}
Diversity captures set-level characteristics across samples and is critical for reducing redundancy and improving downstream generalization~\citep{liu2024what,das2024deft,li2025unlocking,zhang2025survey}. The most natural choice, clustering-based selection, is discrete and non-differentiable and thus incompatible with our continuous weight optimization. Yet a differentiable, principled set-level alternative is nontrivial to construct: combinatorial coverage terms are non-smooth, pairwise-repulsion surrogates scale quadratically and conflate global coverage with local separation, and even DPPs encode set-level volume only over discrete subsets. We therefore lift this set-level volume principle to continuous data weighting via D-optimal design~\citep{chaloner1995bayesian,alexanderian2021optimal}, to our knowledge the first such adaptation. 

Under weights $\bm{w}=\{w_i\}$, we define $\bm{\Sigma}_w=\sum_{i=1}^{N} w_i \bar{\bm{x}}_i \bar{\bm{x}}_i^\top$ with $\bar{\bm{x}}_i$ the fixed, unweightedly centered feature vector from the current model, and set $\mathcal{D}(\bm{w})\coloneqq \log|\bm{\Sigma}_w|$, which tracks the evolving representation rather than frozen pre-trained features. The objective is smooth in $\bm{w}$ with closed-form gradient $\frac{\partial}{\partial w_i}\log|\bm{\Sigma}_w| = \bar{\bm{x}}_i^\top \bm{\Sigma}_w^{-1} \bar{\bm{x}}_i$, the statistical leverage of sample $i$, so upweighting yields a large marginal diversity gain precisely when its features are poorly explained by the rest. This makes $\mathcal{D}(\bm{w})$ a principled, differentiable plug-in for the weighting objective introduced next.

\subsection{Data Weight Optimization Objective via Prior Regularization}
\label{sec:prior_objective}

Beyond contribution and diversity, samples possess intrinsic characteristics such as quality and complexity that are properties of the data itself and largely task-agnostic. Folding such information into $c_i$ would conflate these sample-intrinsic properties with the task-specific contribution it is meant to capture. We instead encode them as a prior distribution $\bm{p}=\{p_i\}$ over samples, which serves as a KL-anchored reference that regularizes the weight distribution toward sample-intrinsic preferences and jointly determines sample weights together with contribution and diversity.

Given the above, data weights are obtained by solving the following optimization objective
\begin{equation}
\label{eq:weight-optim}
\{w_i^*\}
=
\arg\max_{\bm{w}} J(\bm{w}),
\quad
J(\bm{w})
=
\sum_{i=1}^{N} w_i c_i
+
\eta \log |\bm{\Sigma}_w|
-
\tau KL(\bm{w}\|\bm{p})
\quad
\text{s.t.}
\sum_{i=1}^{N} w_i = 1,\;
w_i \ge 0.
\end{equation}
Here, the first term of $J(\bm{w})$ captures sample contribution, the second promotes diversity through feature-volume expansion with strength $\eta$, and the third incorporates a prior reference distribution $\bm{p}$ with strength $\tau$. The stationarity condition of Eq.~\eqref{eq:weight-optim} admits a closed-form expression for $w_i$, but it cannot be evaluated directly because $\bm{\Sigma}_w$ on its right-hand side is itself a function of $\bm{w}$. We therefore solve this fixed-point equation by iteration, which we observe to converge within a few steps. Starting from the maximum-entropy uniform $\bm{w}^{(0)}$ with induced covariance $\bm{\Sigma}_w^{(0)}$, each iteration $t\geq 1$ uses the previous covariance $\bm{\Sigma}_w^{(t-1)}$ in place of $\bm{\Sigma}_w$ to update the weights as
\begin{equation}
\label{eq:DRIVEr_fixed_point}
w_i^{(t)}
\propto
p_i \exp\left(
\frac{
c_i+\eta\,\bar{\bm{x}}_i^\top(\bm{\Sigma}_w^{(t-1)})^{-1}\bar{\bm{x}}_i
}{\tau}
\right).
\end{equation}
The updated weights are normalized to satisfy the simplex constraint, and the procedure iterates until convergence or the maximum iteration count is reached. This update admits the interpretation of entropic mirror ascent under generalized KL regularization, with the KL term serving as the proximal regularizer. Moreover, for \(\tau>0\), \(J(\bm{w})\) is strictly concave on the simplex, so Eq.~\eqref{eq:weight-optim} admits a unique global maximizer; the fixed-point iteration converges linearly under the standard contraction condition. Detailed derivations and proofs are provided in Sec.~\ref{sec:theory_weight_convergence} and Appendix~\ref{sec:appendix_optimization}.

\begin{wraptable}{r}{0.7\columnwidth}
\vspace{-10pt}
\caption{Special cases of DRIVE under a uniform prior}
\vspace{-7pt}
\label{tab:weight_special_cases}
\centering
\scriptsize
\setlength{\tabcolsep}{2.2pt}
\renewcommand{\arraystretch}{1.05}
\begin{tabular}{@{}l >{\centering\arraybackslash}p{0.38\columnwidth} >{\centering\arraybackslash}p{0.20\columnwidth}@{}}
\toprule
Case & Weight form & Meaning \\
\midrule

$\eta=0$
& $w_i^* \propto \exp[(-\lambda l_i^{\theta_{\mathrm{pre}}}+(1-\lambda)l_i^\theta)/\tau]$
& Without diversity \\

\hspace{0.8em}- $\lambda=1$
& $w_i^* \propto \exp(-l_i^{\theta_{\mathrm{pre}}}/\tau)$
& Anti-forgetting~\citep{sanyal2025upweighting} \\

\hspace{0.8em}- $\lambda=0$
& $w_i^* \propto \exp(l_i^\theta/\tau)$
& Task adaptation~\citep{sow2025dynamic} \\

\hspace{0.8em}- $\lambda=0.5$
& $\bm{\theta}\!=\!\bm{\theta}_{\mathrm{pre}}\Rightarrow w_i^* \propto 1/N$
& Standard FT (init.) \\

\midrule

$\eta>0$
& $w_i^{(t)} \propto \exp[(c_i+\eta\,\bar{\bm{x}}_i^\top(\bm{\Sigma}_w^{(t-1)})^{-1}\bar{\bm{x}}_i)/\tau]$
& Full DRIVE \\
\bottomrule
\end{tabular}
\vspace{-1pt}
\end{wraptable}

In this study, we adopt a uniform prior (i.e., $p_i \equiv \frac{1}{N}$) as the default non-informative choice in our main experiments, since it introduces minimal assumptions. When other signals are available, however, the framework naturally allows more informative priors; recent studies suggest that sample length or entropy can serve as proxies for sample quality in sample selection~\citep{liu2024selectit, luo2025semi, zhang2025d3}, and our ablation in Sec.~\ref{sec:ablation} further supports the effectiveness of entropy-informed priors. Under a uniform prior, the objective admits several special cases (Table~\ref{tab:weight_special_cases}).

We primarily anneal $\lambda$ via a linear decay ($1\to 0$) over fine-tuning, with a task-specific constant as alternative. The decay enacts a curriculum: large $\lambda$ early preserves pretrained capability, while $\lambda\to 0$ late prioritizes task adaptation. This mirrors the recall-then-learn principle of \cite{chen-etal-2020-recall}, which anneals a pretraining-recall loss against the task loss, thereby reconciling early stabilization with late adaptation without per-task tuning (Table~\ref{tab:lambda_ablation}).

\begin{wrapfigure}{r}{0.5\columnwidth}
\centering
\includegraphics[width=0.48\columnwidth, trim=4mm 0mm 4mm 4mm, clip]{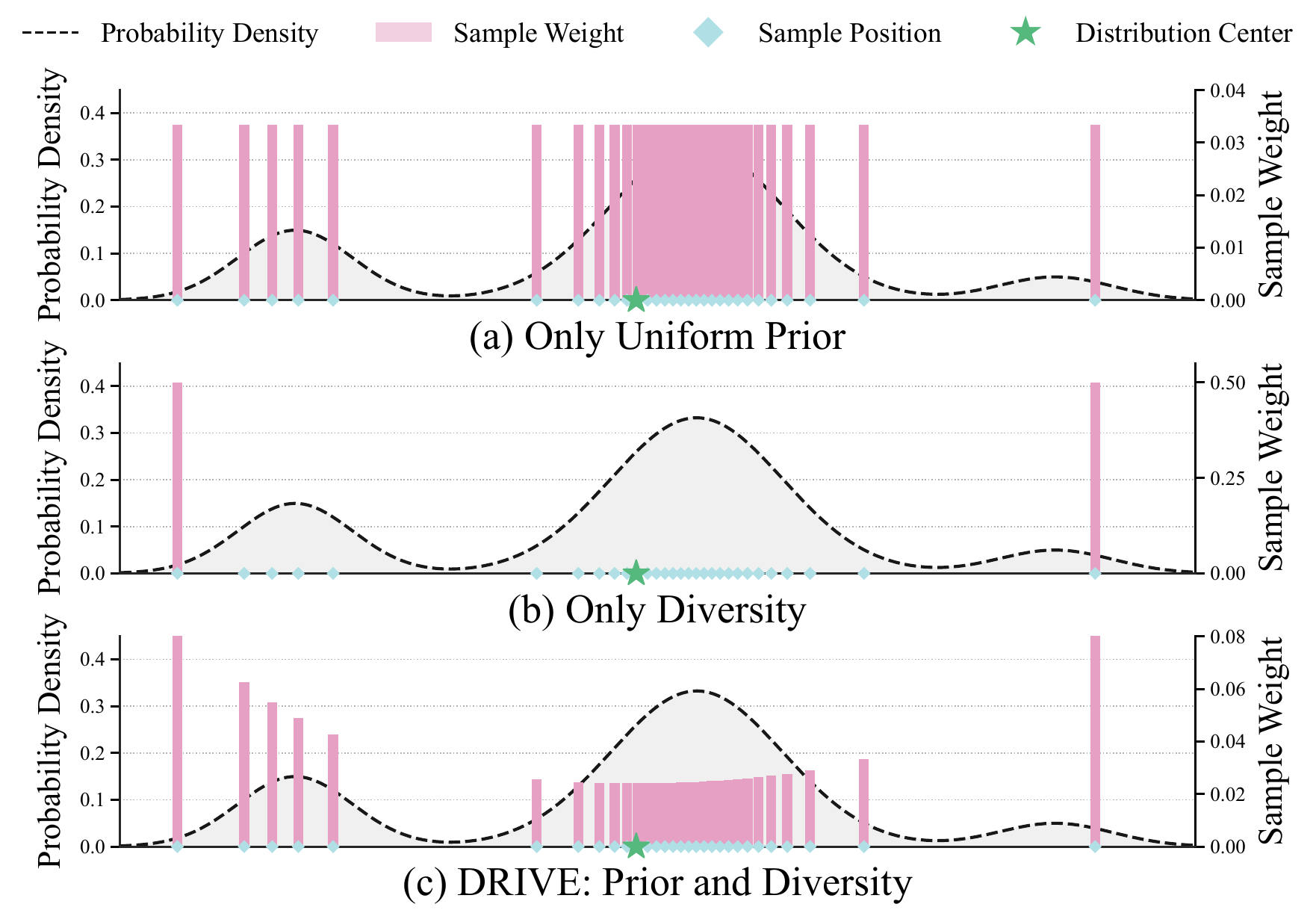}
\caption{Impact of Uniform Prior and Determinant Regularization on Weight Distribution.}
\vspace{-0.1in}
\label{fig:diversity_viz}
\end{wrapfigure}

Fig.~\ref{fig:diversity_viz} shows the impacts of the uniform prior and diversity terms, as well as their synergistic effect in a 1D scenario. In Fig.~\ref{fig:diversity_viz}(a), the prior term enforces equal sample weights, causing the model to over-concentrate on dense regions dominated by redundant samples. Conversely, maximizing the diversity term alone (Fig.~\ref{fig:diversity_viz}(b)) biases weights toward peripheral samples at the expense of the majority of data. DRIVE (Fig.~\ref{fig:diversity_viz}(c)) reconciles these tendencies by integrating both; it up-weights peripheral samples to promote diversity while anchoring to the uniform prior to ensure  coverage. We prove in Sec.~\ref{sec:theory} that incorporating diversity accelerates convergence and enhances estimation robustness. Conventional diversity strategies, such as clustering, are typically discrete and hard to incorporate into unified objectives. Our determinant-based approach offers a differentiable alternative that empirically outperforms the heuristic clustering method DEFT-UCS~\cite{das2024deft} in Sec.~\ref{sec:main_results}.

\begin{wrapfigure}{R}{0.6\columnwidth}
\vspace{-0.34in}
\begin{minipage}{\linewidth}
\begin{algorithm}[H]
\caption{DRIVE~(Given step-wise inputs $c_i$ and $\bar{\bm{x}}_i$.)}
\label{alg:reweighting}
\begin{algorithmic}[1]
\Require $\mathcal{D}=\{(\bm{x}_i,\bm{y}_i)\}_{i=1}^S$, $c_i$, $\bar{\bm{x}}_i$, $k$, $\tau$, $\eta$, $\epsilon$, $\tilde{w}_i^{(0)} \leftarrow 1/S$.
\Ensure Optimal weights $\bm{w}^*$.
\State Generate a Gaussian random matrix $\mathbf{R} \in \mathbb{R}^{d \times k}$.
\State Project features: $\tilde{\bm{x}}_i \leftarrow \mathbf{R}^\top \bar{\bm{x}}_i$ for all $i=1,\dots,S$.
\For{$t=1$ to $T_{\max}$}
    \State $\bm{\Sigma}_w^{(t-1)} \leftarrow \sum_{i=1}^S \tilde{w}_i^{(t-1)} \tilde{\bm{x}}_i \tilde{\bm{x}}_i^\top + \epsilon \mathbf{I}$
    \State Compute $w_i^{(t)}$ via Eq.~\eqref{eq:DRIVEr_fixed_point} for all $i$
    \State $\tilde{w}_i^{(t)} \leftarrow w_i^{(t)} \big/ \sum_{j=1}^S w_j^{(t)}$
    \State If converged, then break
\EndFor
\State \textbf{return} $\bm{w}^* \leftarrow \tilde{\bm{w}}^{(t)}$
\end{algorithmic}
\end{algorithm}
\end{minipage}
\vspace{-15pt}
\end{wrapfigure}

\subsection{Efficient Data Weighting for Fine-tuning}
\label{sec:efficient_reweighting_finetuning}

While the iterative solver for Eq.~(\ref{eq:weight-optim}) is theoretically convergent,
using it in fine-tuning faces a key challenge: inverting $\bm{\Sigma}_w$ costs
$\mathcal{O}(d^3)$, prohibitive for high-dimensional
$\bm{x}\in\mathbb{R}^d$.

To bound the inversion cost, we sketch the centered feature $\bar{\bm{x}}_i$,
extracted from the current model's penultimate layer (Vision)~\cite{Wei_2024_CVPR}
or final-block layer (LLM)~\cite{khosla2025magnet}, through a Gaussian random
matrix $\mathbf{R}\in\mathbb{R}^{d\times k}$ with $k\ll d$, giving
$\tilde{\bm{x}}_i=\mathbf{R}^\top\bar{\bm{x}}_i\in\mathbb{R}^k$.

The Johnson--Lindenstrauss lemma suggests that the Gaussian sketch approximately
preserves inner products and pairwise distances on the dominant low-rank local
subspace (Appendix~\ref{app:random_projection}), while reducing inversion to
$\mathcal{O}(k^3)$. $\mathbf{R}$ is generated once
and shared across all steps, and ensembling projections yields negligible gains
(Appendix~\ref{sec:appendix_ensemble}).
Since a single mini-batch $\mathcal{B}$ may not contain enough samples for
$\bm{\Sigma}_w$ to be invertible, we augment it with a FIFO buffer of the $128$
most recently seen samples, forming a local support set of size $S$ on which both
$\bm{\Sigma}_w$ and the centering mean are estimated. We set
$k=128$ to match the buffer size, aligning the sketch dimension with the local
support scale and avoiding overly aggressive compression. Sensitivity to $k$ is
reported in Appendix~\ref{app:k_sensitivity}. For numerical stability, we add a
small ridge $\epsilon\mathbf{I}$ to $\bm{\Sigma}_w$ before inversion. Algorithm~\ref{alg:reweighting} summarizes the per-step procedure. At step $s$, we compute $c_i$ and $\tilde{\bm{x}}_i$ for the current batch and reuse cached FIFO values to form the local support set, then run the projected fixed-point iteration to obtain $\bm{w}^*$. We restrict and renormalize $\bm{w}^*$ within $\mathcal{B}$, form the weighted loss, and update $\bm{\theta}^{(s)}$ to $\bm{\theta}^{(s+1)}$.

\section{Theoretical Analysis}
\label{sec:theory}

Following FLOW~\citep{sanyal2025upweighting} and related analyses of weighted fine-tuning, we study DRIVE in both linear and nonlinear regimes. The linear case serves as a standard analytical abstraction, widely adopted in prior work~\cite{deshpande2021linearized,tomihari2024understanding}. To better reflect deep fine-tuning in practice, we further extend the analysis to the nonlinear regime under local smoothness and local linearization assumptions. In addition, we analyze the iterative solver for Eq.~\eqref{eq:weight-optim} and establish the linear convergence of the resulting fixed-point iteration. The empirical results in Secs.~\ref{sec:main_results} and~\ref{convergence_and_robustness} are consistent with these theoretical insights.

\subsection{Linear and Local Nonlinear Regimes}
\label{sec:theory_regimes}

\textbf{Linear regime.}
We first consider a linear regression abstraction, following prior analyses of FLOW~\cite{sanyal2025upweighting}. Under sample weights $w(\bm{x})$, the training dynamics are governed by the weighted covariance
$\bm{\Sigma}_w=\mathbb{E}[w(\bm{x})\bm{x}\bm{x}^\top]$,
and convergence along direction $\bm{u}$ is controlled by
$\mu_{\bm{u}}=\bm{u}^\top\bm{\Sigma}_w\bm{u}$.
For directions where pre-training and fine-tuning conflict, stability-oriented weighting such as FLOW can suppress the spectrum and stall adaptation toward the ideal target $\bm{\theta}^{*}$. In contrast, DRIVE uses
$c_i=\lambda a_i+(1-\lambda)t_i$,
so decreasing $\lambda$ reallocates weight to task-relevant conflicting samples, restores learning in those directions, and reduces the fitting gap to $\bm{\theta}^{*}$ when the local approximation error is controlled. Moreover, maximizing $\log|\bm{\Sigma}_w|$ is equivalent to D-optimal design under the simplex constraint: it enlarges the weighted feature volume, reduces generalized variance, and improves the condition number of $\bm{\Sigma}_w$, thereby accelerating convergence and enhancing estimator stability. Therefore, in the linear regime, DRIVE improves both adaptation toward $\bm{\theta}^{*}$  and optimization robustness.

\textbf{Local nonlinear regime.}
We consider deep fine-tuning under local smoothness around $\bm{\theta}_{\mathrm{pre}}$. In this setting, the weighted Hessian decomposes into a Gauss--Newton component, a non-Gauss--Newton residual, and a local nonlinear remainder. The principal Gauss--Newton block is represented by the weighted matrix $\bm{G}_w(\bm{\theta})$. Stability-oriented weighting can assign insufficient mass to task-relevant conflicting directions, slowing movement toward $\bm{\theta}^{*}$. Shifting $\lambda$ from anti-forgetting toward task adaptation reallocates weight to these directions and restores the spectrum of $\bm{G}_w(\bm{\theta})$, reducing the local residual gradient toward $\bm{\theta}^{*}$ when the nonlinear remainder is controlled. Meanwhile, the diversity term $\log|\bm{\Sigma}_w|$, defined on the centered feature covariance, emphasizes under-covered feature directions and discourages redundancy. This promotes a better-covered representation geometry, which supports better-conditioned local optimization through the weighted Gauss--Newton block. 

\subsection{Convergence Analysis of the Iterative Solver for Eq.~(\ref{eq:weight-optim})}
\label{sec:theory_weight_convergence}

With the ridge-regularized covariance used in the solver, $J(\bm{w})$ is $\tau$-strongly concave on $\Delta_{N-1}$ for $\tau>0$ and therefore Eq.~\eqref{eq:weight-optim} admits a unique maximizer $\bm{w}^*$. Let $\{\bm{w}^{(t)}\}_{t\geq 0}$ be the sequence generated by Eq.~\eqref{eq:DRIVEr_fixed_point}. If the diversity-gradient term $\bm h(\bm w)$ is $L_h$-Lipschitz and $\eta L_h<\tau$, then the fixed-point map is contractive, so there exists $\xi=\eta L_h/\tau<1$ such that
$\|\bm{w}^{(t)}-\bm{w}^*\|_1
\le
\xi^t \|\bm{w}^{(0)}-\bm{w}^*\|_1$.
Therefore, the iterate after $T$ steps satisfies $\|\bm{w}^{(T)}-\bm{w}^*\|_1=\mathcal{O}(\xi^T)$. Hence, under the stated contraction condition, the proposed solver returns a near-optimal weight vector after finitely many iterations. Proofs are deferred to Appendix~\ref{sec:appendix_optimization} and Appendix~\ref{sec:appendix_proofs}.

\section{Experiments}
\label{sec:exp}

\begin{table*}[t]
\caption{Performance comparison of models fine-tuned on vision benchmarks~(Accuracy in \%). Best in \textbf{bold}, second in \underline{underlined}.}
\vspace{-2pt}
\label{tab:DRIVE}
\centering
\small
\setlength{\tabcolsep}{4pt}

\renewcommand{\arraystretch}{1.1}
\definecolor{highlightblue}{HTML}{E0F7FA} 
\begin{adjustbox}{max width=\textwidth}
\begin{tabular}{@{}c|c|ccc|ccc|ccc@{}}
\toprule
\multirow{2}{*}{\textbf{Model}} & \multirow{2}{*}{\textbf{Method}}
& \multicolumn{3}{c|}{\textbf{CIFAR10}}
& \multicolumn{3}{c|}{\textbf{CIFAR100}}
& \multicolumn{3}{c}{\textbf{Flowers102}} \\
\cmidrule(lr){3-5}\cmidrule(lr){6-8}\cmidrule(lr){9-11}
& & \textbf{IN-1K Acc.} & \textbf{Target Acc.} & \textbf{Avg Acc.}
  & \textbf{IN-1K Acc.} & \textbf{Target Acc.} & \textbf{Avg Acc.}
  & \textbf{IN-1K Acc.} & \textbf{Target Acc.} & \textbf{Avg Acc.} \\
\midrule
\multirow{8}{*}{\rotatebox{90}{\textbf{ResNet-18}}}
& Pre-trained
& \textbf{69.76$_{\pm 0.00}$} & -- & --
& \textbf{69.76$_{\pm 0.00}$} & -- & --
& \textbf{69.76$_{\pm 0.00}$} & -- & -- \\
\cmidrule(lr){2-11}

& Standard FT
&34.53$_{\pm 0.03}$ & \textbf{96.13}$_{\pm 0.06}$ & 65.33$_{\pm 0.04}$
& 3.28 $_{\pm 0.06}$ &\textbf{84.12$_{\pm 0.04}$} & 43.70$_{\pm 0.05}$
& 5.28$_{\pm 0.01}$ &\underline{92.00$_{\pm 0.03}$} & 48.64$_{\pm 0.02}$ \\

& Linear Probe
& \textbf{69.76$_{\pm 0.00}$} & 81.24$_{\pm 0.12}$ & 75.50$_{\pm 0.06}$
& \textbf{69.76$_{\pm 0.00}$} &59.01$_{\pm 0.02}$ & 64.39$_{\pm 0.01}$
& \textbf{69.76$_{\pm 0.00}$ }&86.86$_{\pm 0.02}$ & 78.31$_{\pm 0.01}$ \\

& WiSE-FT
& 57.04$_{\pm 0.07}$ &87.91$_{\pm 0.03}$ & 72.48$_{\pm 0.03}$
& 49.31$_{\pm 0.04}$ & 62.18$_{\pm 0.01}$ & 55.75$_{\pm 0.02}$
& 49.60$_{\pm 0.01}$ & 75.69$_{\pm 0.05}$ & 62.65$_{\pm 0.02}$ \\

& $\ell_2$-Reg.
& 37.86$_{\pm 0.05}$ & \underline{95.53$_{\pm 0.04}$} & 66.70$_{\pm 0.05}$
& 29.86$_{\pm 0.04}$ & \underline{81.82$_{\pm0.07}$} & 55.84$_{\pm 0.06}$
& 19.34$_{\pm 0.06}$ &\textbf{92.11$_{\pm 0.06}$} & 55.73$_{\pm 0.04}$ \\

& DEFT-UCS
& 47.56$_{\pm 0.05}$ & 87.26$_{\pm 0.06}$ &  67.41$_{\pm 0.06}$
& 49.63$_{\pm 0.07}$ & 79.63$_{\pm 0.05}$ &  64.63$_{\pm 0.06}$
& 62.45$_{\pm 0.04}$ & 87.94$_{\pm0.07}$ & 75.20$_{\pm 0.06}$ \\

& FLOW
& \underline{68.83$_{\pm 0.02}$} & 88.44$_{\pm 0.05}$ & \underline{78.64$_{\pm 0.04}$}
&53.05$_{\pm 0.07}$ & 77.07$_{\pm 0.02}$ & \underline{65.06$_{\pm 0.02}$}
& \underline{69.47$_{\pm 0.06}$} & 87.64$_{\pm 0.02}$ & \underline{78.56$_{\pm 0.04}$} \\

\rowcolor{highlightblue} 
\cellcolor{white} 
& \textbf{DRIVE (Ours)}
& 67.17$_{\pm 0.03}$ & 92.38$_{\pm 0.04}$ & \textbf{79.78$_{\pm 0.03}$}
& \underline{54.66$_{\pm 0.01}$} & 80.39$_{\pm 0.01}$ & \textbf{67.53$_{\pm 0.01}$}
& 68.20$_{\pm 0.02}$ & 90.65$_{\pm 0.05}$ & \textbf{79.43$_{\pm 0.03}$} \\
\midrule

\multirow{8}{*}{\rotatebox{90}{\textbf{ResNet-50}}}
& Pre-trained
& \textbf{80.10$_{\pm 0.00}$} & -- & --
& \textbf{80.10$_{\pm 0.00}$} & -- & --
& \textbf{80.10$_{\pm 0.00}$} & -- & -- \\
\cmidrule(lr){2-11}
& Standard FT
& 29.99$_{\pm 0.04}$ & \textbf{97.19$_{\pm 0.03}$} & 63.59$_{\pm 0.03}$
& 2.50$_{\pm 0.06}$ & \textbf{86.10$_{\pm 0.07}$} & 44.30$_{\pm 0.05}$
& 6.70$_{\pm 0.07}$ & \textbf{92.70$_{\pm 0.06}$} & 49.70$_{\pm 0.05}$ \\

& Linear Probe
& \textbf{80.10$_{\pm 0.00}$} &82.46$_{\pm 0.03}$& 81.28$_{\pm 0.02}$
& \textbf{80.10$_{\pm 0.00}$} &61.11$_{\pm 0.05}$ & 70.61$_{\pm 0.02}$
&  \textbf{80.10$_{\pm 0.00}$} & 85.84$_{\pm 0.04}$ & \underline{82.97$_{\pm0.02}$ }\\

& WiSE-FT
& 63.17$_{\pm 0.04}$ & 91.71$_{\pm 0.02}$ & 77.44$_{\pm 0.02}$
& 55.42$_{\pm 0.03}$ & 71.33$_{\pm 0.05}$ & 63.38$_{\pm 0.03}$
& 58.07$_{\pm 0.03}$ &85.82$_{\pm 0.07}$ & 71.95$_{\pm 0.03}$ \\

& $\ell_2$-Reg.
& 33.98$_{\pm 0.05}$ & \underline{97.00$_{\pm 0.03}$ }& 65.49$_{\pm 0.02}$
& 47.16$_{\pm 0.03}$ & \underline{85.77$_{\pm 0.01}$} &  66.47$_{\pm 0.02}$
& 62.85$_{\pm 0.06}$ & 91.67$_{\pm 0.03}$ &77.26$_{\pm 0.05}$ \\

& DEFT-UCS
& 61.73$_{\pm 0.07}$ & 90.92$_{\pm 0.03}$ & 76.33$_{\pm 0.05}$
& 57.92$_{\pm 0.06}$ & 82.11$_{\pm 0.08}$ & 70.02$_{\pm 0.07}$
&70.42$_{\pm 0.05}$ &89.77$_{\pm 0.09}$ & 80.10$_{\pm 0.07}$ \\

& FLOW
& \underline{75.15$_{\pm 0.04}$} & 89.18$_{\pm 0.02}$ & \underline{82.17$_{\pm 0.02}$}
& 60.85$_{\pm 0.05}$ & 80.74$_{\pm 0.05}$ &\underline{70.80$_{\pm 0.03}$}
&72.45$_{\pm 0.01}$ & 88.93$_{\pm 0.07}$ & 80.69$_{\pm 0.04}$ \\

\rowcolor{highlightblue} 
\cellcolor{white} 
& \textbf{DRIVE (Ours)} 
&73.89$_{\pm 0.03}$ & 94.45$_{\pm 0.03}$ & \textbf{84.17$_{\pm 0.03}$}
& \underline{61.64$_{\pm 0.05}$} &83.03$_{\pm 0.06}$ & \textbf{72.34$_{\pm 0.02}$}
& \underline{75.02$_{\pm 0.01}$} & \underline{92.67$_{\pm 0.04}$} & \multicolumn{1}{c@{}}{\cellcolor{highlightblue}\textbf{83.85$_{\pm 0.01}$}} \\
\bottomrule
\end{tabular}
\end{adjustbox}
\end{table*}

Our comprehensive empirical analysis is structured to address five core research questions: \circnum{1} Does DRIVE outperform SOTAs in balancing task adaptation and anti-forgetting capabilities? \circnum{2} Do individual components of DRIVE contribute effectively to the fine-tuning objectives? \circnum{3} How sensitive is DRIVE to choices of essential hyper-parameters, including $\lambda$, $\tau$, and $\eta$? \circnum{4} Does diversity regularization accelerate convergence and enhance estimation robustness? \circnum{5} Do loss-based contribution scores effectively guide weighting?

\subsection{Setup}
\textbf{Baselines.} We compare DRIVE with methods in the data-oblivious setting: Standard FT, linear probing, $\ell_2$-regularization~\cite{kirkpatrick2017overcoming}, WiSE-FT~\cite{wortsman2022robust}, LoRA~\cite{hu2022lora}, DEFT-UCS~\cite{das2024deft}, FLOW~\cite{sanyal2025upweighting}, and TALR~\cite{lin2026sft}.

\textbf{Datasets.} Vision uses ImageNet-1K (anti-forgetting) and nine benchmarks: CIFAR-10/100~\cite{krizhevsky2009learning}, Flowers102~\cite{nilsback2008automated}, Caltech101~\cite{fei2004learning}, Cars/Dogs~\cite{krause20133d,khosla2011novel}, Food101~\cite{bossard2014food}, Beans~\cite{elfatimi2022beans}, and EuroSAT~\cite{helber2019eurosat}. LLMs fine-tune on MetaMathQA~\cite{yu2024metamath}/Magicoder~\cite{wei2024magicoder} targeting GSM8K~\cite{cobbe2021training}/HumanEval~\cite{chen2021evaluating}, evaluated on MMLU~\cite{hendrycks2021measuring}, Commonsense, and MBPP~\cite{austin2021program}/GSM8K.

\textbf{Evaluation metrics.} For vision, we report fine-tuning accuracy and ImageNet-1K top-1 accuracy (restoring pre-trained heads) for anti-forgetting, alongside their average. For language, we employ \texttt{lm-evaluation-harness}~\cite{gao12608602framework} to assess target and general tasks, reporting the Overall metric~(the average of target accuracy and general capability). Details in Appendix~\ref{sec:llm_eval_details}.

\textbf{Implementation details.} For vision tasks, we use ImageNet-1K-pretrained ResNet-18 and ResNet-50~\cite{wightman2021resnet}. For language, we use Gemma-2-2B~\cite{team2024gemma}, Llama-3.2-3B and Llama-3.1-8B~\cite{grattafiori2024llama}. Across domains, we decay $\lambda$ from 1 to 0, set $\tau$ to the 10th percentile of pre-training loss, $\eta=1$, $k=128$, $T_{\text{max}}=10$, and $\epsilon=1e-5$; see Appendix~\ref{full_details}. Experiments used four A100 80GB GPUs.

\begin{table*}[t]
\caption{Performance comparison of models fine-tuned on the MetaMathQA dataset~(Accuracy in \%). Best in \textbf{bold}, second in \underline{underlined}.}
\vspace{-2pt}
\label{tab:main-language}
\footnotesize
\setlength{\tabcolsep}{12pt}
\renewcommand{\arraystretch}{1.1}
\definecolor{highlightblue}{HTML}{E0F7FA}
\centering
\begin{adjustbox}{max width=\textwidth}
\begin{tabular}{@{}c|c|cccc|c|>{\columncolor{white}[\tabcolsep][0pt]}c@{}}
\toprule
\multirow{2}{*}[-0.6ex]{\centering\textbf{Model}} &
\multirow{2}{*}[-0.6ex]{\centering\textbf{Method}} &
\multicolumn{4}{c|}{\textbf{General Capability Score~(\%)}} &
\textbf{Target Acc.~(\%)} &
\textbf{Overall~(\%)} \\
\cmidrule(lr){3-6} \cmidrule(lr){7-7} \cmidrule(lr){8-8}
& & \textbf{CS} & \textbf{MMLU} & \textbf{MBPP} & \textbf{Avg.} & \textbf{GSM8K} & \textbf{Avg.} \\
\midrule

\multirow{9}{*}{\rotatebox{90}{\textbf{Gemma-2-2B}}}
& Pre-trained
& 57.23$_{\pm 0.00}$ & 53.14$_{\pm 0.00}$ & 28.60$_{\pm 0.00}$ & 46.32$_{\pm 0.00}$ & 23.65$_{\pm 0.00}$ & 34.99$_{\pm 0.00}$ \\
\cmidrule(lr){2-8}
& Standard FT
& 54.89$_{\pm 1.01}$ & 49.26$_{\pm 0.22}$ & 23.40$_{\pm 2.29}$ & 42.52$_{\pm 0.82}$ & \textbf{63.81$_{\pm 0.67}$} & 53.17$_{\pm 0.73}$ \\
& WiSE-FT
& 55.28$_{\pm 0.62}$ & 50.42$_{\pm 0.50}$ & 26.80$_{\pm 1.41}$ & 44.17$_{\pm 0.65}$ & 54.00$_{\pm 0.51}$ & 49.09$_{\pm 0.44}$ \\
& $\ell_2$-Reg.
& \textbf{58.76$_{\pm 0.97}$} & \textbf{53.92$_{\pm 0.15}$} & 30.72$_{\pm 1.08}$ & \textbf{47.80$_{\pm 0.48}$} & 58.50$_{\pm 0.44}$ & 53.15$_{\pm 0.40}$ \\
& LoRA
& 55.58$_{\pm 0.47}$ & 50.34$_{\pm 0.39}$ & 27.20$_{\pm 1.55}$ & 44.37$_{\pm 0.67}$ & 61.50$_{\pm 0.21}$ & 52.94$_{\pm 0.36}$ \\
& DEFT-UCS
& 52.05$_{\pm 0.75}$ & 44.55$_{\pm 0.23}$ & 22.80$_{\pm 1.44}$ & 39.80$_{\pm 0.67}$ & 56.63$_{\pm 0.44}$ & 48.22$_{\pm 0.42}$ \\
& FLOW
& \underline{58.00$_{\pm 2.35}$} & 53.50$_{\pm 0.32}$ & 31.00$_{\pm 2.38}$ & 47.50$_{\pm 1.46}$ & 60.20$_{\pm 0.49}$ & 53.85$_{\pm 0.82}$ \\
& TALR
& 57.98$_{\pm 1.87}$ & 53.58$_{\pm 0.26}$ & \underline{31.04$_{\pm 1.22}$} & 47.53$_{\pm 0.67}$ & 60.20$_{\pm 0.48}$ & \underline{53.87$_{\pm 0.35}$} \\
\rowcolor{highlightblue}
\cellcolor{white}
& \textbf{DRIVE (Ours)}
& 57.92$_{\pm 1.46}$ & \underline{53.59$_{\pm 0.14}$} & \textbf{31.40$_{\pm 1.90}$} & \underline{47.64$_{\pm 1.05}$} & \underline{63.46$_{\pm 0.43}$} & \textbf{55.55$_{\pm 0.45}$} \\
\midrule

\multirow{9}{*}{\rotatebox{90}{\textbf{Llama-3.2-3B}}}
& Pre-trained
& 54.62$_{\pm 0.00}$ & 56.54$_{\pm 0.00}$ & 38.60$_{\pm 0.00}$ & 49.92$_{\pm 0.00}$ & 27.75$_{\pm 0.00}$ & 38.84$_{\pm 0.00}$ \\
\cmidrule(lr){2-8}
& Standard FT
& 52.00$_{\pm 1.03}$ & 52.50$_{\pm 0.43}$ & 33.52$_{\pm 2.20}$ & 46.01$_{\pm 1.13}$ & \textbf{65.81$_{\pm 1.05}$} & 55.91$_{\pm 0.08}$ \\
& WiSE-FT
& 54.57$_{\pm 1.40}$ & 52.01$_{\pm 0.94}$ & 33.80$_{\pm 1.58}$ & 46.79$_{\pm 0.70}$ & 56.01$_{\pm 0.41}$ & 51.40$_{\pm 0.49}$ \\
& $\ell_2$-Reg.
& \textbf{56.01$_{\pm 0.60}$} & 51.99$_{\pm 0.45}$ & 40.52$_{\pm 1.59}$ & 49.51$_{\pm 0.62}$ & 60.00$_{\pm 0.89}$ & 54.76$_{\pm 0.44}$ \\
& LoRA
& 54.17$_{\pm 1.16}$ & 53.33$_{\pm 0.22}$ & 35.92$_{\pm 1.41}$ & 47.81$_{\pm 0.48}$ & 63.50$_{\pm 0.48}$ & 55.66$_{\pm 0.16}$ \\
& DEFT-UCS
& 51.28$_{\pm 1.55}$ & 46.03$_{\pm 0.38}$ & 23.80$_{\pm 1.58}$ & 40.37$_{\pm 0.83}$ & 60.38$_{\pm 0.58}$ & 50.38$_{\pm 0.49}$ \\
& FLOW
& \underline{56.00$_{\pm 1.19}$} & \textbf{56.50$_{\pm 0.33}$} & \underline{41.12$_{\pm 2.33}$} & \underline{51.21$_{\pm 1.06}$} & 61.20$_{\pm 0.48}$ & 56.21$_{\pm 0.48}$ \\
& TALR
& 55.98$_{\pm 2.35}$ & \underline{56.43$_{\pm 0.58}$} & \textbf{41.96$_{\pm 1.93}$} & \textbf{51.46$_{\pm 1.46}$} & 61.05$_{\pm 0.38}$ & \underline{56.26$_{\pm 0.61}$} \\
\rowcolor{highlightblue}
\cellcolor{white}
& \textbf{DRIVE (Ours)}
& 55.92$_{\pm 2.68}$ & 55.71$_{\pm 0.28}$ & 41.00$_{\pm 1.08}$ & 50.88$_{\pm 1.05}$ & \underline{64.50$_{\pm 0.53}$} & \textbf{57.69$_{\pm 0.36}$} \\
\midrule

\multirow{9}{*}{\rotatebox{90}{\textbf{Llama-3.1-8B}}}
& Pre-trained
& 61.59$_{\pm 0.00}$ & 65.32$_{\pm 0.00}$ & 47.40$_{\pm 0.00}$ & 58.10$_{\pm 0.00}$ & 50.95$_{\pm 0.00}$ & 54.53$_{\pm 0.00}$ \\
\cmidrule(lr){2-8}
& Standard FT
& 56.50$_{\pm 1.03}$ & 58.50$_{\pm 0.37}$ & 33.52$_{\pm 1.14}$ & 49.51$_{\pm 0.33}$ & \underline{76.91$_{\pm 0.56}$} & 63.21$_{\pm 0.22}$ \\
& WiSE-FT
& 57.46$_{\pm 0.58}$ & 62.37$_{\pm 0.58}$ & 40.60$_{\pm 1.39}$ & 53.48$_{\pm 0.80}$ & 72.89$_{\pm 0.66}$ & 63.19$_{\pm 0.36}$ \\
& $\ell_2$-Reg.
& 60.09$_{\pm 1.27}$ & 62.06$_{\pm 0.56}$ & 44.60$_{\pm 2.32}$ & 55.58$_{\pm 1.14}$ & 73.21$_{\pm 0.78}$ & 64.40$_{\pm 0.83}$ \\
& LoRA
& \underline{60.90$_{\pm 1.20}$} & \underline{63.40$_{\pm 0.44}$} & 46.40$_{\pm 0.51}$ & 56.90$_{\pm 0.59}$ & 74.50$_{\pm 0.42}$ & \underline{65.70$_{\pm 0.49}$} \\
& DEFT-UCS
& 60.78$_{\pm 0.41}$ & 63.35$_{\pm 0.19}$ & 38.20$_{\pm 2.37}$ & 54.11$_{\pm 0.87}$ & 74.37$_{\pm 0.43}$ & 64.24$_{\pm 0.43}$ \\
& FLOW
& 60.00$_{\pm 0.65}$ & 63.00$_{\pm 0.44}$ & \textbf{48.00$_{\pm 1.70}$} & 57.00$_{\pm 0.70}$ & 73.90$_{\pm 0.16}$ & 65.45$_{\pm 0.40}$ \\
& TALR
& 60.82$_{\pm 0.77}$ & 63.36$_{\pm 0.26}$ & 47.48$_{\pm 2.16}$ & \underline{57.22$_{\pm 0.81}$} & 73.84$_{\pm 0.41}$ & 65.53$_{\pm 0.51}$ \\
\rowcolor{highlightblue}
\cellcolor{white}
& \textbf{DRIVE (Ours)}
& \textbf{60.95$_{\pm 0.54}$} & \textbf{63.98$_{\pm 0.33}$} & \underline{47.60$_{\pm 2.25}$} & \textbf{57.51$_{\pm 0.71}$} & \textbf{77.03$_{\pm 0.49}$} & \textbf{67.27$_{\pm 0.52}$} \\
\bottomrule
\end{tabular}
\end{adjustbox}

\par\vspace{2pt}
\begin{minipage}{\textwidth}
\scriptsize
\end{minipage}
\vspace{-0.2in}
\end{table*}

\subsection{Comparison on Vision and Language Tasks~(Q1)}
\label{sec:main_results}
In the vision domain, where Standard FT often incurs severe catastrophic forgetting, DRIVE better reconciles adaptation and retention. As detailed in Table~\ref{tab:DRIVE}, Standard FT excels in adaptation but suffers severe forgetting; while FLOW attains competitive results  overall average. Specifically, using ResNet-50, DRIVE achieves average accuracy gains of 2.0\%, 1.54\%, and 3.16\% over FLOW on CIFAR-10, CIFAR-100, and Flowers102, respectively. These improvements are significant, stemming from higher target accuracy while largely retaining ImageNet-1K performance across  benchmarks. Extensive experiments on six additional fine-grained datasets (Appendix~\ref{app:extended_vision_experiments})  corroborate these findings, showing consistent performance gains across diverse visual domains.

Table~\ref{tab:main-language} shows that Standard FT achieves strong target performance but suffers from general capability degradation, dropping by up to 8.59\% on Llama-3.1-8B. Retention-oriented baselines such as FLOW and TALR better preserve general capabilities, but lag behind in target accuracy. In contrast, DRIVE improves the general capability average over Standard FT by 5.12\%, 4.87\%, and 8.00\% on Gemma-2-2B, Llama-3.2-3B, and Llama-3.1-8B, respectively, while maintaining competitive target accuracy. It ranks second on Gemma-2-2B and Llama-3.2-3B, and slightly surpasses Standard FT by 0.12\% on Llama-3.1-8B.

\begin{wraptable}{r}{0.5\columnwidth}
    \vspace{-0.15in}
    \caption{DRIVE acts as a plug-and-play module, boosting performance in $\ell_2$-Reg. and LoRA.}
    \vspace{-7pt}
    \label{tab:comprehensive_compatibility}
    \centering
    \footnotesize
    \setlength{\tabcolsep}{4pt}
    \renewcommand{\arraystretch}{0.9}
    \resizebox{\linewidth}{!}{%
        \begin{tabular}{@{}l|c|c|c@{}}
        \toprule
        \textbf{Method} & \textbf{Gen Acc.} & \textbf{Target Acc.} & \textbf{Overall Avg.} \\
        \midrule
        $\ell_2$-Reg. & 55.58$_{(+0.00)}$  & 73.21$_{(+0.00)}$ & 64.40$_{(+0.00)}$ \\
        \quad + FLOW & 56.08$_{(+0.50)}$ & 74.80$_{(+1.59)}$ & 65.44$_{(+1.04)}$ \\
        \quad + \textbf{DRIVE} & \textbf{56.92$_{(+1.34)}$} & \textbf{76.19$_{(+2.98)}$} & \textbf{66.56$_{(+2.16)}$} \\
        \midrule
        LoRA & 56.90$_{(+0.00)}$ & 74.50$_{(+0.00)}$ & 65.70$_{(+0.00)}$ \\
        \quad + FLOW & 57.52$_{(+0.62)}$ & 74.91$_{(+0.41)}$ & 66.22$_{(+0.52)}$ \\
        \quad + \textbf{DRIVE} & \textbf{57.54$_{(+0.64)}$} & \textbf{76.41$_{(+1.91)}$} & \textbf{66.98$_{(+1.28)}$} \\
        \bottomrule
        \end{tabular}%
    }
    \vspace{-10pt}
\end{wraptable}

These stability and plasticity gains yield the best overall performance across all three LLMs. DRIVE surpasses the strongest baseline by 1.68\%, 1.43\%, and 1.57\% overall on Gemma-2-2B, Llama-3.2-3B, and Llama-3.1-8B, respectively. Overall, DRIVE secures high target performance while substantially reducing general capability degradation. Appendix~\ref{app:extended_llm_experiments} extends our evaluation to a code dataset, and Appendix~\ref{app:alternative_metrics} reports a replacement study on contribution scoring.

DRIVE serves as a plug-and-play module that can be directly integrated with parameter-space fine-tuning algorithms. Table~\ref{tab:comprehensive_compatibility} validates DRIVE as a versatile module for parameter-space algorithms like $\ell_2$-Regularization and LoRA on Llama-3.1-8B. In the $\ell_2$-Regularization setting, DRIVE unlocks adaptation potential suppressed by rigid penalties, boosting target accuracy by 3.0\% over the baseline, whereas FLOW improves it by 1.6\%. For LoRA, DRIVE provides complementary regularization, surpassing LoRA+FLOW by 0.76\% in overall average.

These stability and plasticity gains yield the best overall
performance across all three LLMs. DRIVE exceeds the strongest
baseline by 1.68, 1.43, and 1.57 percentage points on Gemma-2-2B,
Llama-3.2-3B, and Llama-3.1-8B, respectively.
To assess the statistical reliability of these improvements,
we perform two-sided paired $t$-tests on the Overall scores
from five runs with matched random seeds and data orders,
using the existing training logs without additional training.
All three improvements remain statistically significant after
Holm correction across the three model-wise comparisons
(adjusted $p \leq 0.0152$), and each individual 95\% confidence
interval for the mean paired difference lies entirely above zero.
These results support the reliability of the observed gains
under the evaluated matched-seed protocol.
Appendix~\ref{app:paired_significance} provides the complete
analysis, including confidence intervals and paired effect sizes.

Overall, DRIVE maintains competitive target performance while
substantially reducing general capability degradation.
Appendix~\ref{app:extended_llm_experiments} extends our evaluation
to code fine-tuning, and
Appendix~\ref{app:alternative_metrics} reports a replacement study
on contribution scoring.

\subsection{Ablation Study~(Q2)}
\label{sec:ablation}

\begin{wraptable}{r}{0.5\columnwidth}
    \vspace{-8pt}
    \caption{Ablation study of DRIVE components.}
    \vspace{-7pt}
    \label{tab:ablation_llama_8b}
    \centering
    \footnotesize
    \setlength{\tabcolsep}{4pt}
    \renewcommand{\arraystretch}{0.9}
    \resizebox{\linewidth}{!}{%
        \begin{tabular}{@{}lccc@{}}
        \toprule
        \textbf{Method} & \textbf{General} & \textbf{Target} & \textbf{Avg} \\
        \midrule
        \textbf{DRIVE} & 57.51 & 77.03 & \textbf{67.27} \\
        \quad - w/o Task-adaptation ($\lambda=1$) & \textbf{58.42} & 73.19 & 65.81 \\
        \quad - w/o Anti-forgetting ($\lambda=0$) & 52.24 & \textbf{79.83} & 66.04 \\
        \quad - w/o Diversity ($\eta=0$) & 56.18 & 74.78 & 65.48 \\
        \quad - w/o KL Prior & 53.96 & 70.61 & 62.29 \\
        \bottomrule
        \end{tabular}%
    }
    \vspace{-2pt}
\end{wraptable}

Table~\ref{tab:ablation_llama_8b} details the ablation results on Llama-3.1-8B using MetaMathQA. 
Removing any component degrades performance. 
Exclusively focusing on anti-forgetting or task adaptation fails to reach an optimal equilibrium; the former reduces target accuracy by 3.84\%, while the latter degrades general capabilities by 5.27\%.
Likewise, removing uniform prior regularization causes the most severe impact, reducing average accuracy by 4.98\%; see Appendix~\ref{app:no_prior_optimization} for its no-prior optimization.
Removing diversity regularization results in a 1.79\% accuracy drop. 
Consistent trends also appear in vision tasks (Appendix~\ref{app:ablation_vision}).

We examine whether DRIVE can benefit from an informative prior. Specifically, on Llama-3.1-8B using MetaMathQA, we compare the default uniform prior with an entropy-informed prior computed from the pre-trained model's predictive entropy. Let $H_i$ denote the predictive entropy of sample $\bm{x}_i$. We assign larger prior mass to lower-entropy samples by setting $p_i \propto \exp(-H_i)$, followed by normalization over the training set. Table~\ref{tab:entropy_prior_comparison} shows that the entropy-informed prior achieves slightly better general and overall performance than the uniform prior, while maintaining comparable target accuracy. This suggests that DRIVE can benefit from simple auxiliary signals when available, while the uniform prior remains a strong default without additional estimators.

\begin{figure}[h]
   \vspace{-2pt}
\centering
\begin{minipage}[t]{0.38\linewidth}
    \vspace{0pt}
    \begin{minipage}{\linewidth}
        \centering
        \captionof{table}{Entropy vs.\ uniform prior.}
        \vspace{-2pt}
        \label{tab:entropy_prior_comparison}
        \fontsize{6.3pt}{7.0pt}\selectfont
        \setlength{\tabcolsep}{5.2pt}
        \renewcommand{\arraystretch}{0.95}
        \resizebox{\linewidth}{!}{%
        \begin{tabular}{@{}lccc@{}}
            \toprule
            \textbf{Prior} & \textbf{General} & \textbf{Target} & \textbf{Avg.} \\
            \midrule
            Uniform Prior & 57.51 & \textbf{77.03} & 67.27 \\
            Entropy Prior & \textbf{58.07} & 76.94 & \textbf{67.51} \\
            \bottomrule
        \end{tabular}}
    \end{minipage}

    \vspace{5pt}

    \begin{minipage}{\linewidth}
        \centering
        \captionof{table}{Diversity comparison.}
        \label{tab:diversity_comparison}
        \vspace{-2pt}
        \fontsize{6.3pt}{7.0pt}\selectfont
        \setlength{\tabcolsep}{5.2pt}
        \renewcommand{\arraystretch}{0.95}
        \resizebox{\linewidth}{!}{%
        \begin{tabular}{@{}lccc@{}}
            \toprule
            \textbf{Method} & \textbf{General} & \textbf{Target} & \textbf{Avg.} \\
            \midrule
            DEFT-UCS & 54.11 & 74.37 & 64.24 \\
            DRIVE-Cluster & 55.20 & 74.80 & 65.00 \\
            \textbf{DRIVE} & \textbf{57.51} & \textbf{77.03} & \textbf{67.27} \\
            \bottomrule
        \end{tabular}}
    \end{minipage}
\end{minipage}
\hfill
\begin{minipage}[t]{0.58\linewidth}
    \vspace{0pt}
    \centering
    \includegraphics[width=\linewidth, trim=1mm 4mm 1mm 2mm, clip]{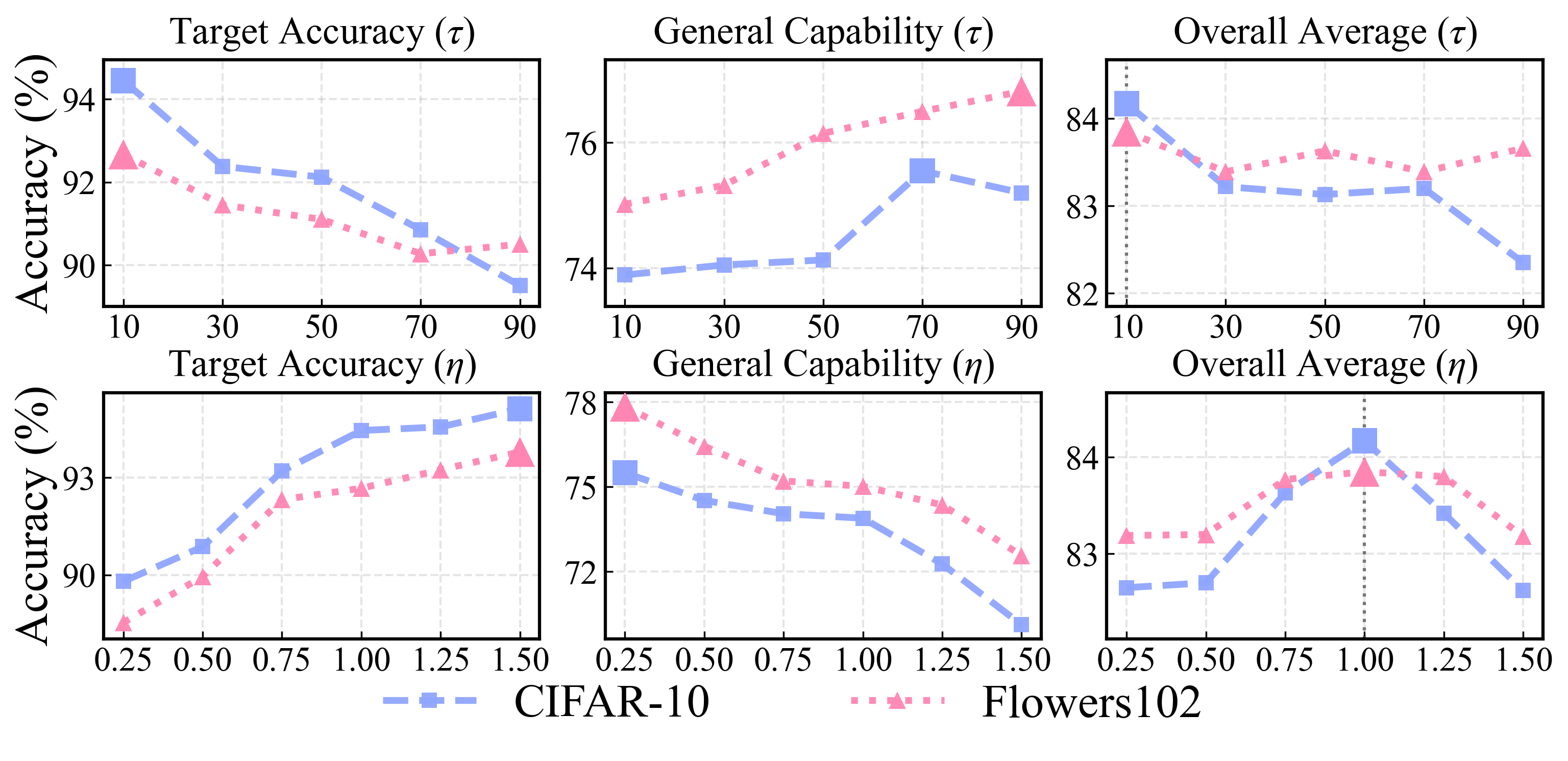}
    \vspace{-10pt}
    \captionof{figure}{Sensitivity analysis of $\tau$ and $\eta$ on ResNet-50.}
    \label{fig:hyper_sensitivity}
\end{minipage}
    \vspace{-10pt}
\end{figure}

To validate the effectiveness of our differentiable covariance diversity regularizer, we compare DRIVE against DEFT-UCS and a discrete variant, DRIVE-Cluster, on Llama-3.1-8B. 
We construct DRIVE-Cluster by replacing the differentiable diversity regularization with a discrete clustering mechanism. Specifically, it partitions data into $K=7$ semantic groups. For a sample $\bm{x}_i$ assigned to the $k$-th cluster $C_k$, we adopt a balanced prior $p_i \equiv 1/(|C_k|\cdot K)$, to enforce equal aggregate cluster weights while determining within-cluster weights (details in Appendix~\ref{app:DRIVE_cluster}). 
As shown in Table~\ref{tab:diversity_comparison}, DRIVE-Cluster outperforms DEFT-UCS, highlighting the benefit of explicit diversity control. Crucially, DRIVE further surpasses DRIVE-Cluster by 2.27\%, indicating that differentiable diversity regularization provides a more adaptive mechanism than rigid cluster balancing.

\subsection{Hyperparameter Sensitivity Analysis~(Q3)}

To analyze how DRIVE mediates the stability-plasticity trade-off, we evaluate   $\lambda$ under static and dynamic settings. For static settings (0.0 to 1.0 at intervals of 0.2), Table~\ref{tab:lambda_ablation} on ResNet-50 shows that lower values favor plasticity but worsen forgetting, whereas higher values prioritize retention, achieving an optimal static balance at $\lambda=0.8$. Among dynamic schedules, the decaying strategy ($1 \to 0$) yields superior average accuracy, outperforming the best static configuration ($\lambda=0.8$).

Regarding hyperparameters in Fig.~\ref{fig:hyper_sensitivity}, $\tau$
controls weight sharpness through smoothing. Small values favor target adaptation, moderate values
give the best overall trade-off, and overly large values make weights nearly
uniform and can weaken retention. Increasing $\eta$ improves target performance up to
$\eta=1.0$, after which generalization degrades.

\subsection{Analysis of Convergence and Robustness~(Q4)}
\label{convergence_and_robustness}
We investigate the optimization dynamics induced by diversity regularization and evaluate estimation robustness by analyzing convergence curves, measuring pairwise parameter distances across four seeds, and examining performance on the top 10\% samples with the highest pre-training loss.

\begin{figure*}[t]
    \centering
    \vspace{8pt}

    \begin{minipage}[t]{0.49\textwidth}
        \centering
        \includegraphics[width=\linewidth, trim=2mm 2mm 0mm 0mm, clip]{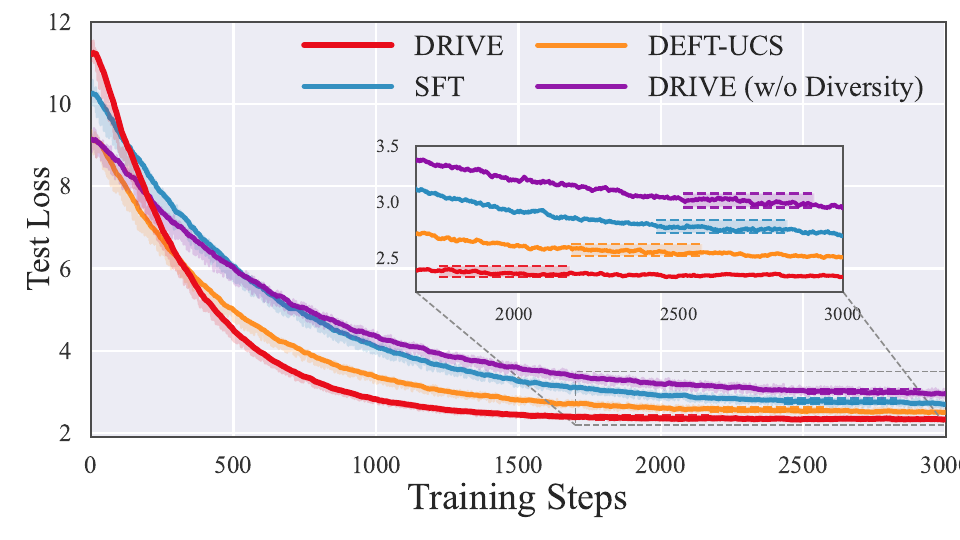}
        \vspace{-18pt}
        \captionof{figure}{Convergence curves across baselines.}
        \label{fig:convergence_panel}
    \end{minipage}
    \hfill
    \begin{minipage}[t]{0.49\textwidth}
        \centering
        \includegraphics[width=\linewidth, trim=2mm 2mm 1mm 0mm, clip]{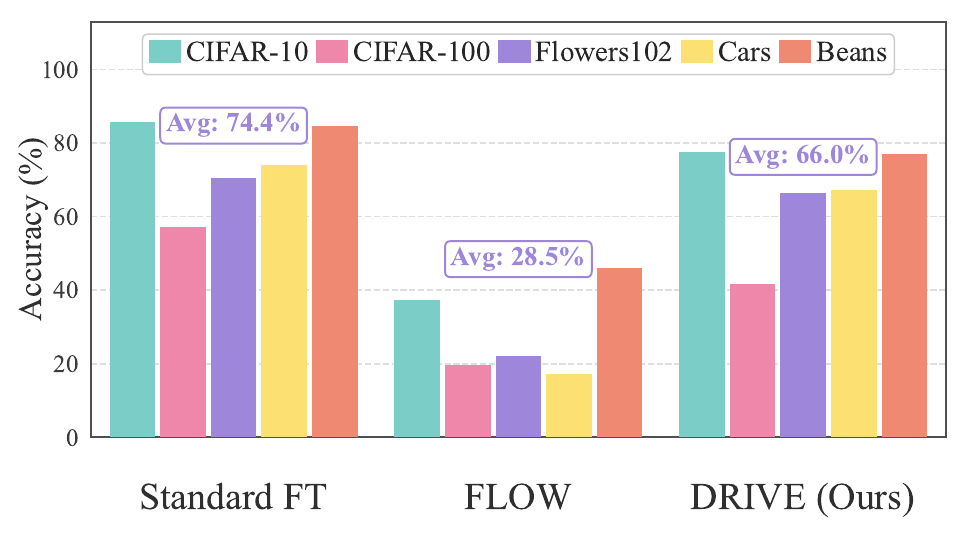}
        \vspace{-18pt}
        \captionof{figure}{Hard-sample accuracy across datasets}
        \label{fig:accuracy_panel}
    \end{minipage}

    \vspace{-10pt}
\end{figure*}

\begin{figure}[h]
\vspace{0.1in}
\centering
\begin{minipage}[t]{0.42\columnwidth}
    \centering
    \captionof{table}{Sensitivity Analysis of $\lambda$.}
    \vspace{-2pt}
    \label{tab:lambda_ablation}
    \footnotesize
    \definecolor{highlightblue}{HTML}{E0F7FA}
    \setlength{\tabcolsep}{3pt}
    \renewcommand{\arraystretch}{1}
    \resizebox{\linewidth}{!}{%
        \begin{tabular}{lcccccc}
        \toprule
        & \multicolumn{3}{c}{\textbf{CIFAR-10}} & \multicolumn{3}{c}{\textbf{Flowers102}} \\
        \cmidrule(lr){2-4} \cmidrule(lr){5-7}
        \textbf{$\lambda$} & \textbf{IN-1K} & \textbf{Target} & \textbf{Avg.} & \textbf{IN-1K} & \textbf{Target} & \textbf{Avg.} \\
        \midrule
        0.0 & 61.84 & \textbf{97.10} & 79.47 & 71.98 & \textbf{93.67} & 82.83 \\
        0.2 & 64.92 & \underline{96.08} & 80.50 & 72.47 & 92.63 & 82.55 \\
        0.4 & 67.31 & 95.21 & 81.26 & 73.39 & 92.55 & 82.97 \\
        0.6 & 70.28 & 94.63 & 82.46 & 75.06 & 91.53 & 83.30 \\
        0.8 & 73.02 & 94.19 & \underline{83.61} & \underline{75.42} & 91.69 & \underline{83.56} \\
        1.0 & \textbf{76.10} & 90.02 & 83.06 & \textbf{76.26} & 90.73 & 83.50 \\
        \midrule
        0 $\to$ 1 & 70.95 & 94.88 & 82.92 & 73.67 & \underline{93.01} & 83.34 \\
        \rowcolor{highlightblue}
        1 $\to$ 0 & \underline{73.89} & 94.45 & \textbf{84.17} & 75.02 & 92.67 & \textbf{83.85} \\
        \bottomrule
        \end{tabular}%
    }
\end{minipage}
\hfill
\begin{minipage}[t]{0.56\columnwidth}
    \vspace{2pt}
    \centering
    \includegraphics[width=\linewidth]{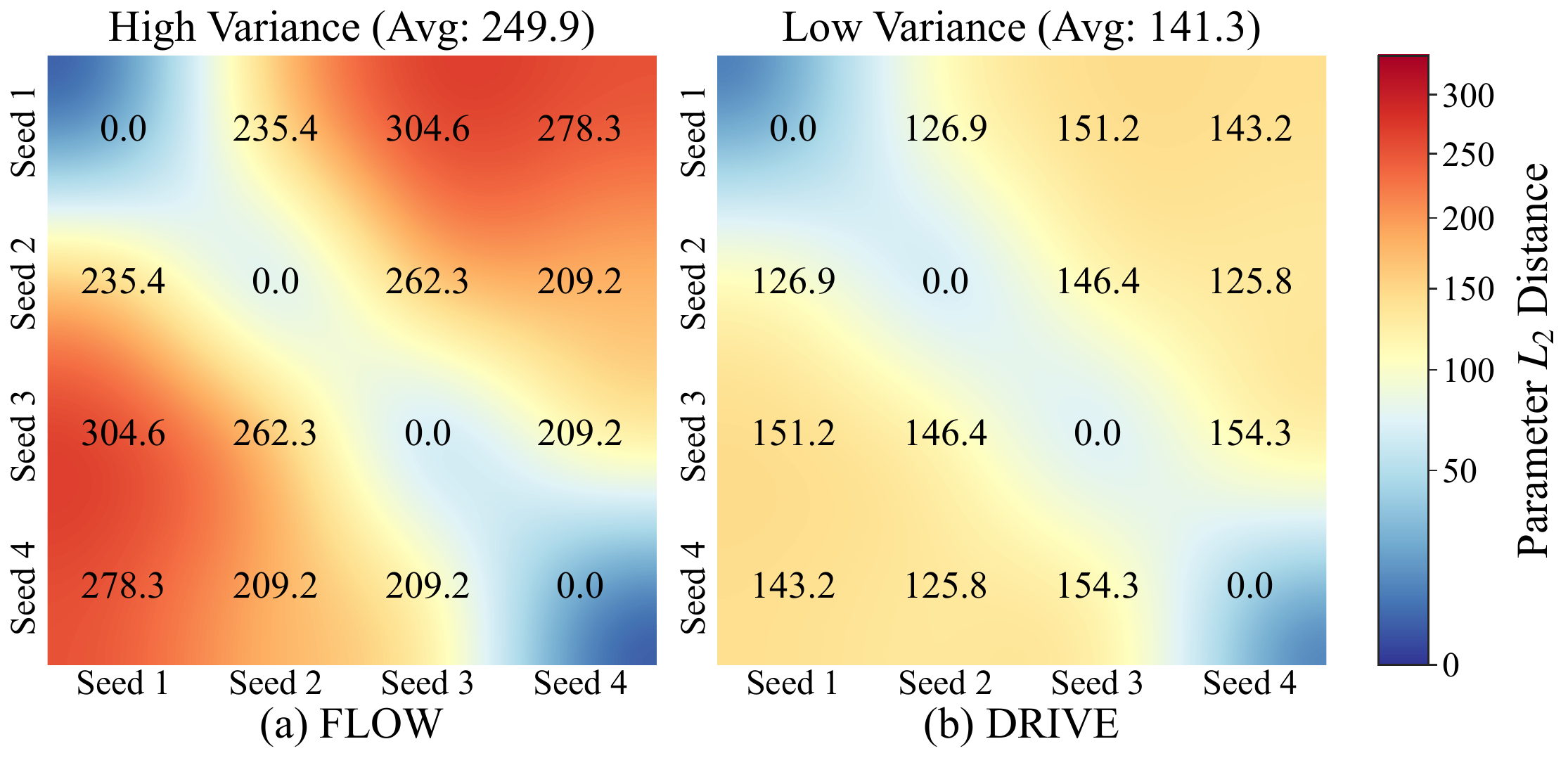}
    \vspace{-12pt}
    \captionof{figure}{Pairwise parameter distance heatmap.}
    \label{fig:param_heatmap}
\end{minipage}
\vspace{-2pt}
\end{figure}

Fig.~\ref{fig:convergence_panel} illustrates diversity term's impact on training dynamics using Llama-3.1-8B on MetaMathQA. We define convergence as the point where the relative fluctuation of the smoothed loss remains within 2\% over a 400-step window. DRIVE stabilizes significantly earlier than baselines lacking diversity constraints.
This supports our claim that diversity accelerates convergence.

To verify DRIVE mitigates the under-fitting inherent in stability-oriented methods like FLOW, we further tested ResNet-50 on the most challenging 10\% samples by pre-training loss. In Fig.~\ref{fig:accuracy_panel}, FLOW performs poorly on these challenging samples. Standard FT excels but suffers catastrophic forgetting on the retention set. Conversely, DRIVE maintains satisfactory performance on difficult samples due to its explicit contribution weighting mechanism. More detailed analysis is provided in Appendix~\ref{sec:appendix_hard_sample_dynamics}.

To quantify  estimation robustness, we analyzed geometric consistency across four random seeds via the Euclidean ($L_2$) distance between flattened fine-tuned parameter vectors. As shown in Fig.~\ref{fig:param_heatmap} for ResNet-50 on CIFAR-100, FLOW exhibits volatility and pronounced seed sensitivity. In contrast, DRIVE enhances geometric stability, reducing the average pairwise parameter distance by 43.46\%. This contraction further confirms that our diversity regularization imposes an effective constraint, minimizing the estimator's generalized variance and yielding more reliable parameter estimates.

\subsection{Validation of Loss-Based Contribution Scores~(Q5)}
\label{sec:proxy_validation}

\begin{wraptable}{r}{0.4\columnwidth}
    \vspace{-8pt}
    \centering
    \captionsetup{font=footnotesize,skip=3pt}
   \caption{Effect of score perturbations.}
    \label{tab:contribution_perturbation}
    \fontsize{7.5pt}{9pt}\selectfont
    \setlength{\tabcolsep}{2pt}
    \renewcommand{\arraystretch}{1.0}
    \begin{tabular*}{\linewidth}{@{\extracolsep{\fill}}lccc@{}}
        \toprule
        \textbf{Setting} & \textbf{General}
        & \textbf{Target} & \textbf{Overall} \\
        \midrule
        DRIVE & \textbf{57.48}
        & \textbf{77.10} & \textbf{67.29} \\
        Shuffled & 56.31 & 73.92 & 65.12 \\
        Reversed & 50.02 & 70.43 & 60.23 \\
        \bottomrule
    \end{tabular*}
    \vspace{-6pt}
\end{wraptable}

On ResNet-50/Flowers102 (1,020 training samples; $\lambda=0.8$), $c_i$ correlates with validation-gradient-based Shapley approximations~\cite{wang2025data}, achieving Spearman's $\rho=0.83$ and Pearson's $r=0.81$ (both $p<0.001$). Reference validation sets are used only for diagnostics (Appendix~\ref{app:proxy_support}). In a single-seed diagnostic on Llama-3.1-8B/MetaMathQA, shuffling or reversing score rankings while preserving score values and other settings reduces Overall by 2.17 and 7.06 percentage points, respectively (Table~\ref{tab:contribution_perturbation}), supporting the utility of sample-specific contribution scores.

\section{Conclusion}
We presented DRIVE, a novel data weighting framework for data-oblivious fine-tuning. Rather than aggregating off-the-shelf signals, DRIVE recasts contribution, diversity, and sample prior into a single coherent differentiable objective. Concretely, contribution is partitioned into retention and adaptation via a tunable Shapley-additive decomposition, diversity is captured through a differentiable covariance-determinant term, and weights are anchored by a prior regularizer. In linear and local nonlinear regimes, this unified modeling tightens the stability--plasticity trade-off. Diversity provably accelerates convergence and bolsters estimation robustness; the solver converges linearly. Across vision and language benchmarks, DRIVE consistently surpasses SOTA methods. Future work explores samples beneficial to both contributions and token-level extensions.

\section*{Acknowledgments and Disclosures}
This work was supported by the National Natural Science Foundation of China under Grant Nos. 62476191 and 42550187, with computing resources provided by our research group. The authors declare no competing interests.

{
\small
\bibliographystyle{IEEEtran}
\bibliography{reference}
}

\newpage
\appendix
\onecolumn

\newpage

\newcommand{\appendixentry}[2]{%
  \noindent\hyperref[#1]{\ref*{#1}\quad #2}%
  \dotfill
  \hyperref[#1]{\pageref*{#1}}%
  \par
}

\newcommand{\appendixsubentry}[2]{%
  \noindent\hspace{2.2em}\hyperref[#1]{\ref*{#1}\quad #2}%
  \dotfill
  \hyperref[#1]{\pageref*{#1}}%
  \par
}

{
\hypersetup{linkcolor=black}

\section*{Organization of the Appendix}
\vspace{-0.5em}

\appendixentry{app:more_details_contribution}{More Details Supporting Our Contribution Function $\mathcal{C}$}
\appendixsubentry{app:why_not_shapley}{Inapplicable Contribution Measures}
\appendixsubentry{app:proxy_support}{Statistical Support for the Loss-Based Contribution Proxies}
\appendixsubentry{app:alternative_metrics}{Alternative Validation-Free Metrics}

\appendixentry{sec:appendix_optimization}{Solution for Objective~\eqref{eq:weight-optim} and Convergence Analysis}
\appendixsubentry{sec:appendix_convexity}{Concavity and Uniqueness of Eq.~\eqref{eq:weight-optim}}
\appendixsubentry{sec:appendix_derivation}{Derivation of the Iterative Update Rule}
\appendixsubentry{app:no_prior_optimization}{Optimization for the No-Prior Ablation}
\appendixsubentry{sec:appendix_weight_solver}{Convergence of the Weight Solver}
\appendixsubentry{app:random_projection}{Justification for Randomized Dimensionality Reduction}

\appendixentry{sec:appendix_proofs}{Detailed Theoretical Proofs}
\appendixsubentry{app:proof_linear_regime}{Proof for the Linear Regime in Sec.~\ref{sec:theory_regimes}}
\appendixsubentry{app:proof_local_nonlinear_regime}{Local Nonlinear Analysis with Hessian Remainder Control}

\appendixentry{full_details}{More Experimental Details}
\appendixsubentry{app:baselines}{Baseline Details}
\appendixsubentry{sec:evaluation_details}{Vision Datasets and Evaluation Details}
\appendixsubentry{sec:llm_eval_details}{LLM Datasets and Evaluation Details}
\appendixsubentry{app:impl_details}{Implementation Details}

\appendixentry{app:extended_experiments}{Additional Experimental Results and Analysis}
\appendixsubentry{app:extended_vision_experiments}{Analysis of Vision Tasks}
\appendixsubentry{app:vit_experiments}{Experiments on Vision Transformers}
\appendixsubentry{app:extended_llm_experiments}{Experiments on Language Models}
\appendixsubentry{app:lambda_schedule}{Impact of Balance Parameter $\lambda$ on Language Tasks}
\appendixsubentry{app:tau_impact}{Temporal Dynamics of $\lambda$: Increasing vs. Decaying Schedule}
\appendixsubentry{app:regularization_tau}{Impact of Prior Regularization $\tau$}
\appendixsubentry{app:eta_impact}{Impact of Diversity Regularization $\eta$ on Language Tasks}
\appendixsubentry{app:ablation_vision}{Ablation Study on Vision Tasks}
\appendixsubentry{appendix:efficiency}{Computational Efficiency Analysis}

\appendixentry{app:mechanism_analysis}{Analysis of Contribution Assessment and Diversity Mechanisms}
\appendixsubentry{sec:appendix_shapley}{Shapley Analysis of Anti-Forgetting vs. Task Adaptation}
\appendixsubentry{app:diversity_metrics}{Sample Diversity}
\appendixsubentry{sec:appendix_ensemble}{Analysis of Covariance Approximation Strategies}
\appendixsubentry{sec:appendix_hard_sample_dynamics}{Empirical Verification of Dynamic Adaptation on Hard Samples}
\appendixsubentry{app:k_sensitivity}{Sensitivity to the Projection Dimension $k$}

\appendixentry{sec:limitations}{Limitations}
}

\newpage
\section{More Details Supporting Our Contribution Function $\mathcal{C}$}
\label{app:more_details_contribution}

This appendix provides additional support for the contribution function $\mathcal{C}$.
Sec.~\ref{app:why_not_shapley} explains why exact Shapley values and
validation-dependent contribution estimates cannot be directly deployed under our
strict data-oblivious setting. Sec.~\ref{app:proxy_support} shows that the adopted
loss-based proxies remain statistically aligned with validation-based
Shapley-style reference contributions in the aspects most relevant to soft
weighting. Sec.~\ref{app:alternative_metrics} discusses several practical
validation-free alternatives and empirically compares representative variants,
showing that the standard loss-based instantiation offers the best
effectiveness--efficiency trade-off.
\subsection{Inapplicable Contribution Measures}
\label{app:why_not_shapley}

A natural theoretical foundation for sample contribution is the Shapley value~\cite{shapley1953value}. Let $U(\cdot)$ denote a utility function and $D$ be a training set of $N$ samples. For a sample $z\in D$, its Shapley value is
\begin{equation}
\phi_z(U)
:=
\frac{1}{N}
\sum_{k=1}^{N}
\binom{N-1}{k-1}^{-1}
\sum_{\substack{S\subseteq D\setminus\{z\}\\ |S|=k-1}}
\left[U(S\cup\{z\})-U(S)\right].
\end{equation}
This provides a rigorous conceptual foundation for contribution modeling. However, exact Shapley computation requires repeated retraining over exponentially many subsets, which is infeasible for large-scale fine-tuning. Thus, exact Shapley values can only serve as a theoretical reference, but not as the deployed contribution measure in DRIVE.

To avoid exhaustive retraining, recent work has proposed one-run approximations that estimate contribution by accumulating gradient alignment along a single training trajectory~\cite{wang2025data}. In this case, one  define validation-based reference contributions using retain and target validation gradients:
\begin{align}
a_i^{\mathrm{ref}}
&:=
\sum_{t=1}^{T}\eta_t
\left\langle
\nabla_{\theta_t}\ell(f(x_i;\theta_t),y_i),
\nabla_{\theta_t}\ell(f(D_{\mathrm{val}}^{\mathrm{retain}};\theta_t),y_{\mathrm{val}}^{\mathrm{retain}})
\right\rangle,\\
t_i^{\mathrm{ref}}
&:=
\sum_{t=1}^{T}\eta_t
\left\langle
\nabla_{\theta_t}\ell(f(x_i;\theta_t),y_i),
\nabla_{\theta_t}\ell(f(D_{\mathrm{val}}^{\mathrm{target}};\theta_t),y_{\mathrm{val}}^{\mathrm{target}})
\right\rangle.
\end{align}
These quantities are much more tractable than exact Shapley values and are useful as diagnostic references. However, they still require two external supervision channels: a retain validation set and a target validation set. Similarly, meta-learning-based contribution learning~\cite{ren2018learning} also depends on explicit validation supervision. In contrast, DRIVE is designed for a strict data-oblivious fine-tuning setting, where neither the original pre-training data nor auxiliary retain/target validation sets are assumed to be available during training.

Under the strict data-oblivious setting considered by DRIVE, more principled contribution measures are either computationally intractable (exact Shapley) or fundamentally incompatible with the available supervision (validation-based approximations and meta-learning). This motivates our use of a training-time computable, validation-free proxy in the main text.

\subsection{Statistical Support for the Loss-Based Contribution Proxies}
\label{app:proxy_support}

Although the loss-based definitions in Eq.~\eqref{eq:utilities} are adopted for practicality, the question is whether they preserve the information that matters for weighting. Since Eq.~(\ref{eq:DRIVEr_fixed_point}) depends on the relative ordering and separation of contribution scores, a proxy should preserve trend, ranking structure, head-sample identity, and the agreement of the unified contribution score with its reference counterpart. In this experiment, the unified score is computed with the same setting as the main text, $c_i=\lambda a_i+(1-\lambda)t_i$ with $\lambda=0.8$, and $c_i^{\mathrm{ref}}=\lambda a_i^{\mathrm{ref}}+(1-\lambda)t_i^{\mathrm{ref}}$. Following the validation-based Shapley-style formulation in \cite{wang2025data}, we use the sample contributions computed on the retain validation set and the target validation set as reference anti-forgetting and task-adaptation contributions, denoted by $a_i^{\mathrm{ref}}$ and $t_i^{\mathrm{ref}}$, respectively. These reference quantities are used only for evaluating the quality of the loss-based proxies.

We therefore report four complementary statistics. Quantile Monotonicity Score measures global trend preservation by sorting samples with the proxy score, partitioning them into equal-frequency quantile bins, and computing the fraction of adjacent bins whose mean reference score is monotonically non-decreasing. Top-50\% Overlap measures head-sample consistency as the overlap between the top half selected by the proxy and by the reference. Spearman's $\rho$ evaluates ranking agreement, while Pearson's $r$ serves as a supplementary linear amplitude-based statistic. All reported results are highly significant ($p<0.001$ for every metric under the corresponding tests).

Across these decision-oriented evaluations, the loss-based instantiation used in DRIVE shows strong agreement with the validation-based Shapley-style reference contributions. The quantile trends remain clearly monotone, the Top-50\% overlap is high for the influential samples that dominate weighting, and both Spearman and Pearson correlations are strong. Importantly, this conclusion holds not only for the decomposed anti-forgetting and task-adaptation components, but also for the unified contribution score $c_i$, which is the quantity directly used by DRIVE to determine sample weights.

\begin{table}[t]
\centering
\small
\caption{Agreement between loss-based proxy contributions and validation-based Shapley reference scores on ResNet-50 fine-tuned on Flowers102, using an ImageNet subset as the retain validation set.}
\label{tab:proxy_shapley_agreement}
\setlength{\tabcolsep}{6pt}
\renewcommand{\arraystretch}{1.15}
\begin{tabular}{lccc}
\toprule
\textbf{Metric} & \textbf{Anti-forgetting} & \textbf{Task adaptation} & \textbf{Unified score $c_i$} \\
\midrule
Quantile Monotonicity Score $\uparrow$ & 0.89 & 0.98 & 0.92 \\
Spearman $\rho$ $\uparrow$ & 0.76 & 0.87 & 0.83 \\
Top-50\% Overlap $\uparrow$ & 0.87 & 0.95 & 0.89 \\
Pearson $r$ $\uparrow$ & 0.65 & 0.87 & 0.81 \\
\bottomrule
\end{tabular}
\vspace{-14pt}
\end{table}

\subsection{Alternative Validation-Free Metrics}
\label{app:alternative_metrics}

While the main text adopts loss-based contributions, the DRIVE framework can accommodate other practical validation-free surrogates. Below we discuss several representative alternatives. Here, \(L\) denotes the number of prediction positions over which the surrogate score is averaged; for LLM fine-tuning, it is the sequence length, while for vision classification we set \(L=1\).

\paragraph{Gradient-based metrics.}
The gradient norm provides a more direct first-order measure of sample-level update influence than scalar loss~\cite{paul2021deep}. A natural alternative is
\begin{align}
a_i &= -\left\lVert \nabla_{\theta_{\mathrm{pre}}}\,\ell\!\bigl(f(x_i;\theta_{\mathrm{pre}}),y_i\bigr) \right\rVert_{2}, \\
t_i &= \left\lVert \nabla_{\theta}\,\ell\!\bigl(f(x_i;\theta),y_i\bigr) \right\rVert_{2}.
\end{align}
This formulation is theoretically appealing and closely connected to GraNd-style sample valuation~\cite{paul2021deep}, but it requires per-sample backward computation and is significantly more expensive in large-scale fine-tuning.

\paragraph{Error-based metrics.}
EL2N measures the $\ell_2$ discrepancy between the predicted probability vector and the target one-hot vector~\cite{paul2021deep}:
\begin{equation}
\mathrm{EL2N}(x_i;\theta)
=
\frac{1}{L}\sum_{j=1}^{L}\left\|p_j(x_i;\theta)-y_j\right\|_2,
\end{equation}
based on which we define
\begin{equation}
a_i=-\mathrm{EL2N}(x_i;\theta_{\mathrm{pre}}), \qquad
t_i=\mathrm{EL2N}(x_i;\theta).
\end{equation}
This yields a bounded and relatively robust signal, but in practice it is less directly aligned with optimization dynamics than the adopted loss-based formulation.

\paragraph{Uncertainty-based metrics.}
Prediction entropy provides another alternative, motivated by semantic uncertainty~\cite{kuhn2023semantic}:
\begin{align}
a_i &= - \frac{1}{L}\sum_{j=1}^{L}\left(-\sum_{k=1}^{|V|}p_{j,k}(x_i;\theta_{\mathrm{pre}})
\log p_{j,k}(x_i;\theta_{\mathrm{pre}})\right),\\
t_i &= \frac{1}{L}\sum_{j=1}^{L}\left(-\sum_{k=1}^{|V|}p_{j,k}(x_i;\theta)
\log p_{j,k}(x_i;\theta)\right).
\end{align}
Entropy-based scoring emphasizes uncertainty, but it may over-focus on ambiguous or difficult samples and is not always as stable for balancing retention and adaptation.

\paragraph{Margin-based metrics.}
A margin-based alternative is
\begin{align}
a_i &= \frac{1}{L}\sum_{j=1}^{L}\Bigl(z_{j,y_{i,j}}(x_i;\theta_{\mathrm{pre}})-\max_{k\neq y_{i,j}} z_{j,k}(x_i;\theta_{\mathrm{pre}})\Bigr),\\
t_i &= -\frac{1}{L}\sum_{j=1}^{L}\Bigl(z_{j,y_{i,j}}(x_i;\theta)-\max_{k\neq y_{i,j}} z_{j,k}(x_i;\theta)\Bigr),
\end{align}
which prioritizes boundary-near target samples while favoring confidently mastered prior knowledge.

\paragraph{Replacement study.}
To verify that the loss-based instantiation is a reasonable default, we compare DRIVE against two representative alternatives: a gradient-norm variant (\textsc{DRIVE-Gra}) and an entropy-based variant (\textsc{DRIVE-Entro}). As shown in Table~\ref{tab:contribution_replace_time}, \textsc{DRIVE-Gra} achieves the highest overall score, indicating that richer gradient signals can improve contribution estimation. However, this gain comes at a computational cost, requiring roughly twice the training time of standard DRIVE. By contrast, \textsc{DRIVE-Entro} underperforms both. Notably, the loss-based DRIVE remains competitive in accuracy while offering the strongest efficiency profile. This supports our choice of the loss-based contribution function as the default practical instantiation of $\mathcal{C}$.

\begin{table}[t]
\centering
\small
\caption{Replacement study of practical contribution mechanisms on Llama-3.2 3B. Gradient-based scoring is slightly stronger in accuracy but substantially more expensive, while the standard loss-based DRIVE offers the best effectiveness--efficiency trade-off.}
\label{tab:contribution_replace_time}
\setlength{\tabcolsep}{6pt}
\renewcommand{\arraystretch}{1.15}
\begin{adjustbox}{max width=\columnwidth}
\begin{tabular}{@{}c|cccc|c|c|c@{}}
\toprule
\multirow{2}{*}[-0.6ex]{\textbf{Method}} &
\multicolumn{4}{c|}{\textbf{General Capability Acc.~(\%)}} &
\textbf{Target Acc.~(\%)} &
\textbf{Overall} &
\textbf{Relative Time} \\
\cmidrule(lr){2-5} \cmidrule(lr){6-6} \cmidrule(lr){7-7} \cmidrule(lr){8-8}
& \textbf{CS} & \textbf{MMLU} & \textbf{MBPP} & \textbf{Avg} & \textbf{GSM8K} & \textbf{Avg~(\%)} & \textbf{Cost} \\
\midrule
\textsc{DRIVE-Gra}   & 55.11 & \textbf{55.94} & \textbf{41.48} & 50.84 & \textbf{65.13} & \textbf{57.99} & $2.12\times$ \\
\textsc{DRIVE-Entro} & 55.14 & 55.40 & 40.40 & 50.31 & 59.97 & 55.14 & $1.04\times$ \\
\textsc{DRIVE}       & \textbf{55.92} & 55.71 & 41.00 & \textbf{50.88} & 64.50 & 57.69 & $1.00\times$ \\
\bottomrule
\end{tabular}
\end{adjustbox}
\vspace{-14pt}
\end{table}

\section{Solution for Objective \eqref{eq:weight-optim} and Convergence Analysis}
\label{sec:appendix_optimization}

We analyze the optimization and efficient implementation of Eq.~\eqref{eq:weight-optim}. 
Sec.~\ref{sec:appendix_convexity} proves concavity and uniqueness, 
Sec.~\ref{sec:appendix_derivation} derives the KL-regularized iterative update rule, 
Sec.~\ref{app:no_prior_optimization} explains how the no-prior ablation is optimized, 
Sec.~\ref{sec:appendix_weight_solver} establishes solver convergence, and 
Sec.~\ref{app:random_projection} justifies the randomized projection used for efficient covariance inversion.

\subsection{Concavity and Uniqueness of Eq. \eqref{eq:weight-optim}}
\label{sec:appendix_convexity}

Recall the unified weight optimization objective introduced in Eq.~\eqref{eq:weight-optim}. 
Following the implementation and the solver analysis, we use the ridge-regularized 
weighted covariance \(\bm{\Sigma}_{\bm w}\), where the ridge term is absorbed into the 
notation. The objective is
\begin{equation}
\label{eq:appendix_obj}
\mathcal{J}(\bm{w})
=
\underbrace{\sum_{i=1}^{N} w_i c_i}_{\mathcal{J}_{\mathrm{linear}}}
-
\underbrace{\tau \sum_{i=1}^{N} w_i \log \frac{w_i}{p_i}}_{\mathcal{J}_{\mathrm{KL}}}
+
\underbrace{\eta \log| \bm{\Sigma}_{\bm w}|}_{\mathcal{J}_{\mathrm{div}}},
\end{equation}
subject to the simplex constraint
\[
\Delta^{N-1}
=
\left\{
\bm{w}\in\mathbb{R}^{N}
\mid
\sum_{i=1}^{N}w_i=1,\ 
w_i\ge 0
\right\}.
\]
We assume \(p_i>0\) for all \(i\), \(\tau>0\), and \(\eta\ge0\).

\begin{proposition}[Strict Concavity and Uniqueness]
The objective \(\mathcal{J}(\bm{w})\) is strictly concave on the simplex. 
Consequently, Eq.~\eqref{eq:weight-optim} admits a unique global maximizer 
\(\bm{w}^*\).
\end{proposition}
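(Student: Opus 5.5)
The plan is to split $\mathcal{J}$ into its three terms as in Eq.~\eqref{eq:appendix_obj}, establish concavity of each term separately, and get strictness from the KL term alone.

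\textbf{Linear term.} $\mathcal{J}_{\mathrm{linear}}(\bm w)=\sum_i w_ic_i$ is linear in $\bm w$, hence concave; it contributes nothing to the Hessian.

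\textbf{KL term.} On the relative interior of $\Delta^{N-1}$, $-\tau\sum_i w_i\log(w_i/p_i)$ has diagonal Hessian $-\tau\,\mathrm{diag}(1/w_i)$. Since $w_i\le 1$, this is $\preceq -\tau \mathbf{I}$, so the term is $\tau$-strongly concave, and in particular strictly concave. To cover the boundary (where some $w_i=0$), I would use the convention $0\log 0=0$ together with the strict convexity of $u\mapsto u\log u$ on $[0,\infty)$. Because $p_i>0$, each summand $w_i\log w_i - w_i\log p_i$ is strictly convex on $[0,\infty)$. A sum of coordinatewise strictly convex functions is strictly convex on $\mathbb{R}^N_{\ge0}$, so the negated, $\tau$-scaled KL term is strictly concave on the whole simplex.

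\textbf{Diversity term.} The map $\bm w\mapsto\bm\Sigma_{\bm w}=\sum_i w_i\bar{\bm x}_i\bar{\bm x}_i^\top+\epsilon\mathbf I$ is affine. With the ridge absorbed, it takes values in the positive definite cone for every $\bm w\ge 0$, so $\log|\bm\Sigma_{\bm w}|$ is finite everywhere on the simplex. Concavity of $\log\det$ on positive definite matrices can then be invoked or checked directly: restrict to a line $\bm\Sigma+s\bm V$ and write the function as $\log|\bm\Sigma|+\sum_j\log(1+s\mu_j)$, where $\mu_j$ are the eigenvalues of $\bm\Sigma^{-1/2}\bm V\bm\Sigma^{-1/2}$. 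Composition with an affine map preserves concavity, and $\eta\ge0$ keeps the sign. Equivalently, the Hessian is $-\eta\,(\bar{\bm x}_i^\top\bm\Sigma_{\bm w}^{-1}\bar{\bm x}_j)^2_{ij}$, which is the negated Hadamard square of a PSD Gram matrix and is therefore negative semidefinite by the Schur product theorem.

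\textbf{Combining.} Summing a concave function, a strictly concave function and a concave function gives a strictly concave $\mathcal{J}$ on $\Delta^{N-1}$. Each term is continuous on the compact simplex: the linear term trivially, the entropy term because $u\log u\to 0$ as $u\to 0$, and $\log\det$ because of the ridge. By Weierstrass, a maximizer exists. Uniqueness follows by the usual contradiction: if $\bm w_1\neq\bm w_2$ were both maximizers, their midpoint lies in the convex simplex and strict concavity gives it a strictly larger value.

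\textbf{Main obstacle.} The delicate point is the diversity term, specifically its behaviour at the boundary and without the ridge. If $\epsilon=0$, $\bm\Sigma_{\bm w}$ can be singular for sparse $\bm w$, so $\log|\bm\Sigma_{\bm w}|=-\infty$ and $\mathcal{J}$ is only an extended-valued concave function. I would handle this by relying on the ridge (as the statement's setup does). As a remark, without the ridge one can still argue uniqueness on the convex set $\{\bm w:\bm\Sigma_{\bm w}\succ0\}$, which is nonempty if the features span the space and contains the uniform $\bm w^{(0)}$. In that case the maximizer lies in this set, because $\mathcal{J}\to-\infty$ as $\bm\Sigma_{\bm w}$ approaches singularity.
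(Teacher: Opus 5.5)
Your proposal is correct and follows the paper's proof. It uses the same three-term split and gets strict concavity from the KL term (diagonal Hessian with entries $-\tau/w_i$). It gets concavity of $\log|\bm{\Sigma}_{\bm w}|$ from $\log\det$ composed with an affine map that the ridge keeps positive definite, then existence from compactness and continuity and uniqueness from strict concavity. Your additions only make explicit steps the paper asserts: you extend strict concavity of the KL term to the boundary via strict convexity of $u\log u$ on $[0,\infty)$, and you use the Schur product theorem to show that the Hessian $-\eta\,(\bar{\bm x}_i^\top\bm{\Sigma}_{\bm w}^{-1}\bar{\bm x}_j)^2_{ij}$ is negative semidefinite.
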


\begin{proof}
The linear term \(\mathcal{J}_{\mathrm{linear}}\) is affine in \(\bm w\), so it is both 
convex and concave. The KL term satisfies
\[
D_{\mathrm{KL}}(\bm w\|\bm p)
=
\sum_{i=1}^{N}w_i\log w_i
-
\sum_{i=1}^{N}w_i\log p_i .
\]
Since \(x\log x\) is strictly convex on \(x>0\), 
\(D_{\mathrm{KL}}(\bm w\|\bm p)\) is strictly convex on the relative interior of the 
simplex. Therefore, \(-\tau D_{\mathrm{KL}}(\bm w\|\bm p)\) is strictly concave for 
\(\tau>0\). Equivalently, its Hessian on the positive orthant is diagonal with entries 
\(-\tau/w_i\), which is negative definite.

The diversity term is also concave. Since the ridge-regularized covariance 
\(\bm{\Sigma}_{\bm w}\) is affine in \(\bm w\) and remains positive definite, and since 
\(\log\det(\cdot)\) is concave on the positive definite cone~\cite{boyd2004convex}, 
the composition \(\log|\bm{\Sigma}_{\bm w}|\) is concave in \(\bm w\).

Thus, \(\mathcal{J}(\bm w)\) is the sum of a strictly concave term, a concave term, and 
an affine term, so it is strictly concave on the simplex. The closed simplex is compact 
and convex, and the ridge term makes the diversity term continuous on it. Hence a global 
maximizer exists. Strict concavity further implies that this maximizer is unique.
\end{proof}
\subsection{Derivation of the Iterative Update Rule}
\label{sec:appendix_derivation}

To handle the simplex constraint $\sum w_i = 1$, we introduce a Lagrange multiplier $\nu$. The Lagrangian is:
\begin{equation}
\mathcal{L}(\bm{w}, \nu) =
\sum_{i=1}^N w_i c_i
+ \eta \log |\bm{\Sigma}_{\bm w}|
- \tau \sum_{i=1}^N w_i (\log w_i - \log p_i)
- \nu \left(\sum_{i=1}^N w_i - 1\right).
\end{equation}

Setting the partial derivative with respect to $w_k$ to zero:
\begin{equation}
\frac{\partial \mathcal{L}}{\partial w_k}
=
c_k
+ \eta \frac{\partial}{\partial w_k} \log |\bm{\Sigma}_{\bm w}|
- \tau (\log w_k + 1 - \log p_k)
- \nu
= 0.
\end{equation}

Using the derivative of the log-determinant term,
\begin{equation}
\frac{\partial}{\partial w_k}\log|\bm{\Sigma}_{\bm w}|
=
\mathrm{Tr}\!\left(\bm{\Sigma}_{\bm w}^{-1}
\frac{\partial \bm{\Sigma}_{\bm w}}{\partial w_k}\right)
=
\mathrm{Tr}\!\left(\bm{\Sigma}_{\bm w}^{-1}
\bar{\bm{x}}_k\bar{\bm{x}}_k^\top\right)
=
\bar{\bm{x}}_k^\top \bm{\Sigma}_{\bm w}^{-1}\bar{\bm{x}}_k
=
h_k(\bm{w}),
\end{equation}
we substitute this back into the stationarity condition:
\begin{equation}
c_k + \eta h_k(\bm{w}) - \tau \log w_k + \tau \log p_k - \tau - \nu = 0.
\end{equation}

Rearranging terms to solve for $\log w_k$:
\begin{equation}
\tau \log w_k = \tau \log p_k + c_k + \eta h_k(\bm{w}) - (\nu + \tau).
\end{equation}

Exponentiating both sides yields the general solution form:
\begin{equation}
w_k^* \propto p_k \exp \left( \frac{c_k + \eta h_k(\bm{w})}{\tau} \right).
\end{equation}

This reveals that the optimal weight is determined by the contribution score, the diversity score, and the prior probability $p_k$.

\paragraph{Solution under Uniform Prior.} 
In the standard DRIVE setting (as described in Algorithm~\ref{alg:reweighting}), we assume a non-informative, uniform prior where $p_k = 1/N$ for all samples. Substituting $p_k = 1/N$ into the equation above:
\begin{equation}
w_k^* \propto \frac{1}{N} \exp \left( \frac{c_k + \eta h_k(\bm{w})}{\tau} \right).
\end{equation}
Since the constant factor $1/N$ is common to all samples, it cancels out during the normalization step $\sum w_i = 1$. Thus, under the uniform prior assumption, the update rule simplifies to the standard Softmax form:
\begin{equation}
\label{eq:fixed_point}
w_k^{(t+1)} = \frac{\exp \left( \frac{c_k + \eta h_k(\bm{w}^{(t)})}{\tau} \right)}{\sum_{j=1}^N \exp \left( \frac{c_j + \eta h_j(\bm{w}^{(t)})}{\tau} \right)}.
\end{equation}
This confirms that our standard algorithm is a special case of the general KL-regularized framework where the prior distribution over the data is uniform.

\subsection{Optimization for the No-Prior Ablation}
\label{app:no_prior_optimization}

The ``w/o KL Prior'' ablation removes the prior regularization term from
Eq.~\eqref{eq:weight-optim}.  We
optimize the remaining contribution--diversity objective
\[
\bm w^*
=
\arg\max_{\bm w\in\Delta_{S-1}}
\sum_{i=1}^{S} w_i c_i+\eta\log|\bm{\Sigma}_{\bm w}|,
\qquad
\bm{\Sigma}_{\bm w}
=
\sum_{i=1}^{S}w_i\bar{\bm x}_i\bar{\bm x}_i^\top+\epsilon\bm I .
\]
Here, \(\Delta_{S-1}=\{\bm w:w_i\ge0,\sum_i w_i=1\}\). This objective has no
temperature parameter, and its stationarity condition is implicit because
\(h_i(\bm w)=\bar{\bm x}_i^\top\bm{\Sigma}_{\bm w}^{-1}\bar{\bm x}_i\) depends on
\(\bm w\). For active samples, the KKT condition is
\(c_i+\eta h_i(\bm w^*)=\nu\), with
\(c_i+\eta h_i(\bm w^*)\le\nu\) for inactive samples. Thus, the no-prior objective
does not yield a softmax-form closed update.

For consistency with DRIVE, we use the same iterative covariance-refresh
framework, but replace the KL-induced softmax step with simplex-constrained
ascent. At iteration \(t\), we compute
\(h_i^{(t)}=\bar{\bm x}_i^\top(\bm{\Sigma}_{\bm w}^{(t)})^{-1}\bar{\bm x}_i\) and
\(g_i^{(t)}=c_i+\eta h_i^{(t)}\), then update
\[
\bm w^{(t+1)}
=
\Pi_{\Delta_{S-1}}
\left(
\bm w^{(t)}+\alpha_t\bm g^{(t)}
\right),
\]
where \(\Pi_{\Delta_{S-1}}\) is Euclidean projection onto the simplex. This keeps
weights feasible and uses the same local support set, projected features,
ridge-regularized covariance, and iteration budget as full DRIVE.
\subsection{Convergence of the Weight Solver}
\label{sec:appendix_weight_solver}

We analyze Eq.~\eqref{eq:weight-optim} with the fixed-point solver in Eq.~\eqref{eq:DRIVEr_fixed_point}. To match the implementation, we absorb the ridge term into the covariance:
\[
\bm{\Sigma}_{\bm w}
=
\sum_{i=1}^{N} w_i \bar{\bm{x}}_i \bar{\bm{x}}_i^\top
+
\epsilon \bm I,
\qquad
\epsilon>0,
\]
and define \(h_i(\bm w)=\bar{\bm{x}}_i^\top\bm{\Sigma}_{\bm w}^{-1}\bar{\bm{x}}_i\), with \(\bm h(\bm w)=(h_1(\bm w),\ldots,h_N(\bm w))\). Setting \(\epsilon=0\) recovers the unregularized form when the covariance is nonsingular.

\begin{theorem}[Uniqueness and Linear Convergence]
\label{thm:weight_solver}
Assume \(p_i>0\) for all \(i\), \(\tau>0\), \(\eta\ge 0\), and \(\epsilon>0\). Then \(J(\bm w)\) is \(\tau\)-strongly concave on \(\Delta_{N-1}\), so Eq.~\eqref{eq:weight-optim} admits a unique maximizer \(\bm w^*\). Moreover, assume \(\bm h\) is \(L_h\)-Lipschitz from \((\Delta_{N-1},\|\cdot\|_1)\) to \((\mathbb{R}^N,\|\cdot\|_\infty)\). If \(\eta L_h<\tau\), then \(\bm w^*\) is the unique fixed point of Eq.~\eqref{eq:DRIVEr_fixed_point}, and
\[
\|\bm w^{(t)}-\bm w^*\|_1
\le
\xi^t\|\bm w^{(0)}-\bm w^*\|_1,
\qquad
\xi:=\eta L_h/\tau\in(0,1).
\]
Thus, \(\|\bm w^{(T)}-\bm w^*\|_1=\mathcal O(\xi^T)\).
\end{theorem}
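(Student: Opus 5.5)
The plan has three steps: strong concavity (which gives existence and uniqueness of $\bm w^*$), identification of the fixed points of Eq.~\eqref{eq:DRIVEr_fixed_point} with $\bm w^*$, and a Banach contraction argument in $\|\cdot\|_1$. The contraction rests on an $\ell_\infty\to\ell_1$ Lipschitz bound for the softmax.

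\textbf{Step 1: strong concavity and uniqueness.} I would reuse the decomposition from the Proposition (Strict Concavity and Uniqueness).
\begin{itemize}
\item The linear term is affine.
\item $\eta\log|\bm\Sigma_{\bm w}|$ is concave, since with $\epsilon>0$ it is $\log\det$ composed with an affine map into the positive definite cone.
\item On the relative interior, the KL term has Hessian $\mathrm{diag}(\tau/w_i)\succeq \tau \bm I$, because $w_i\le 1$. So $-\tau KL(\bm w\|\bm p)$ is $\tau$-strongly concave in $\|\cdot\|_2$.
\item Alternatively, for strong concavity in $\|\cdot\|_1$, I would use Pinsker's inequality through the Bregman form: $KL$ is $1$-strongly convex with respect to $\|\cdot\|_1$ on the simplex.
\end{itemize}
Existence follows from compactness and continuity (the ridge keeps $\log|\bm\Sigma_{\bm w}|$ finite on the closed simplex). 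Uniqueness then follows exactly as in the Proposition.

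\textbf{Step 2: fixed points coincide with $\bm w^*$.} Write the update as $T(\bm w)=\mathrm{softmax}\bigl(\log\bm p+(\bm c+\eta\bm h(\bm w))/\tau\bigr)$.
\begin{itemize}
\item Every fixed point of $T$ has strictly positive entries, since $p_i>0$ and the exponential is positive.
\item A fixed point therefore satisfies the Lagrangian stationarity condition derived in Sec.~\ref{sec:appendix_derivation}, with all inequality constraints inactive.
\item By concavity from Step 1, the KKT conditions are sufficient, so any fixed point is the global maximizer.
\item Conversely, $\bm w^*$ is interior, because the gradient of $-\tau w\log w$ blows up at the boundary. Its KKT condition is then exactly $\bm w^*=T(\bm w^*)$.
\end{itemize}
Uniqueness of the fixed point will also follow from Step 3.

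\textbf{Step 3: contraction.} The key lemma is that the softmax $\sigma$ is $1$-Lipschitz from $\|\cdot\|_\infty$ to $\|\cdot\|_1$. I would prove it through the mean value theorem along the segment $\bm z_s=\bm z'+s(\bm z-\bm z')$, which reduces it to bounding $\|J(\bm z_s)\bm v\|_1$. Here $J=\mathrm{diag}(\sigma)-\sigma\sigma^\top$ and $\bm v=\bm z-\bm z'$. The bound goes as follows:
\begin{itemize}
\item We have $\|J\bm v\|_1=\sum_i\sigma_i|v_i-\bar v|$, where $\bar v=\sum_j\sigma_j v_j$. This is a mean absolute deviation under the distribution $\sigma$.
\item The mean absolute deviation is at most the standard deviation, which is at most $(\max_i v_i-\min_i v_i)/2$.
\item That half-range is at most $\|\bm v\|_\infty$.
\end{itemize}
Applying this with $\bm v=\eta(\bm h(\bm w)-\bm h(\bm w'))/\tau$ gives
\[
\|T(\bm w)-T(\bm w')\|_1\le \frac{\eta}{\tau}\|\bm h(\bm w)-\bm h(\bm w')\|_\infty\le \xi\|\bm w-\bm w'\|_1 .
\]
Since $T$ maps the simplex into itself and $\xi<1$, Banach's theorem gives a unique fixed point, which by Step 2 is $\bm w^*$. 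Iterating with $\bm w^*=T(\bm w^*)$ yields $\|\bm w^{(t)}-\bm w^*\|_1\le\xi^t\|\bm w^{(0)}-\bm w^*\|_1$.

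\textbf{Main obstacle.} The delicate point is the sharp constant $1$ in the softmax Lipschitz bound. The naive bound is $2$: it comes from $\sum_i\sigma_i|v_i|+|\bar v|\le 2\|\bm v\|_\infty$ and would only yield $\xi=2\eta L_h/\tau$. The argument therefore relies on the shift invariance of softmax, made explicit through the mean-absolute-deviation-versus-range bound above. I would first check that inequality on two-point distributions (where it is tight) and then invoke Jensen ($\mathbb E|X-\mathbb E X|\le\sqrt{\mathrm{Var}X}$) together with Popoviciu's inequality ($\mathrm{Var}X\le (\max-\min)^2/4$).
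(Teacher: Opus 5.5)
Your proof is correct and follows the paper's overall structure. Strong concavity gives a unique $\bm w^*$. The solver map is a softmax that is $\tau^{-1}$-Lipschitz from $\|\cdot\|_\infty$ to $\|\cdot\|_1$, composed with $\eta\bm h$. Fixed points coincide with the maximizer, and Banach finishes. Where you prove the two key facts by hand, the paper uses convex duality.

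The paper writes $T(\bm w)=\nabla\Psi^*(\bm c+\eta\bm h(\bm w))$ with $\Psi=\tau\mathrm{KL}(\cdot\|\bm p)+\iota_{\Delta_{N-1}}$. It then reads off $\|\nabla\Psi^*(\bm a)-\nabla\Psi^*(\bm b)\|_1\le\tau^{-1}\|\bm a-\bm b\|_\infty$ from the strong convexity/smoothness duality. It also notes that $0\in-\bm c-\eta\bm h(\bm w^*)+\partial\Psi(\bm w^*)$ is the same statement as $\bm w^*=T(\bm w^*)$.

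That route has three advantages. It is shorter. It absorbs the boundary into $\partial\Psi$, so no interiority argument is needed. And it ties the contraction constant to the same modulus that gives uniqueness. Note that what it needs is the $\|\cdot\|_1$ (Pinsker) version of your Step 1. The $\ell_2$ Hessian bound alone would, via norm equivalences, only yield a constant of order $N/\tau$ in the $(\ell_\infty,\ell_1)$ pairing.

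Your route has its own advantages. The softmax-Jacobian computation, which bounds the mean absolute deviation under $\sigma$ by the half-range, is elementary and self-contained. It also shows explicitly, through shift invariance, why the constant is $1$ rather than the naive $2$.

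Two details to tighten:
\begin{itemize}
\item Use the integral form $\sigma(\bm z)-\sigma(\bm z')=\int_0^1 J(\bm z_s)\bm v\,ds$ rather than a vector-valued mean value equality.
\item In the interiority step, say why the remaining partial derivatives stay bounded near the boundary: $\bm\Sigma_{\bm w}\succeq\epsilon\bm I$ gives $0\le h_i\le\|\bar{\bm x}_i\|_2^2/\epsilon$. Alternatively, skip that direction entirely: the Banach fixed point is interior and stationary, hence the maximizer by concavity.
\end{itemize}
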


\begin{proof}
The linear term in \(J(\bm w)\) is affine, \(\log|\bm{\Sigma}_{\bm w}|\) is concave because \(\bm{\Sigma}_{\bm w}\) depends affinely on \(\bm w\), and \(-\tau\mathrm{KL}(\bm w\|\bm p)\) is \(\tau\)-strongly concave on the simplex. Hence \(J(\bm w)\) is \(\tau\)-strongly concave and has a unique maximizer \(\bm w^*\).

Let \(\Psi(\bm w)=\tau\mathrm{KL}(\bm w\|\bm p)+\iota_{\Delta_{N-1}}(\bm w)\). Since \(\Psi\) is \(\tau\)-strongly convex, \(\|\nabla\Psi^*(\bm a)-\nabla\Psi^*(\bm b)\|_1\le \tau^{-1}\|\bm a-\bm b\|_\infty\). Writing the solver map as \(T(\bm w)=\nabla\Psi^*(\bm c+\eta\bm h(\bm w))\), for any \(\bm u,\bm v\in\Delta_{N-1}\),
\[
\|T(\bm u)-T(\bm v)\|_1
\le
\tau^{-1}\eta\|\bm h(\bm u)-\bm h(\bm v)\|_\infty
\le
(\eta L_h/\tau)\|\bm u-\bm v\|_1.
\]
Thus \(T\) is a contraction when \(\xi=\eta L_h/\tau<1\).

The optimality condition \(0\in-\bm c-\eta\bm h(\bm w^*)+\partial\Psi(\bm w^*)\) is equivalent to \(\bm w^*=T(\bm w^*)\). Hence \(\bm w^*\) is the unique fixed point, and recursively applying the contraction gives the stated linear rate. Finally, the Lipschitz assumption holds under bounded features: if \(\|\bar{\bm{x}}_i\|_2\le R_x\), then \(\bm{\Sigma}_{\bm w}\succeq\epsilon\bm I\) implies \(|h_i(\bm u)-h_i(\bm v)|\le R_x^4\epsilon^{-2}\|\bm u-\bm v\|_1\), so one may take \(L_h\le R_x^4/\epsilon^2\).
\end{proof}

\subsection{Justification for Randomized Dimensionality Reduction}
\label{app:random_projection}

To reduce the \(\mathcal O(d^3)\) cost of covariance inversion, DRIVE projects centered features by
\(\tilde{\bm{x}}_i=R^\top\bar{\bm{x}}_i\), where \(R\in\mathbb{R}^{d\times k}\) and \(k\ll d\), reducing the dominant cost to \(\mathcal O(k^3)\). We use a normalized Gaussian projection, \(R_{ab}\sim\mathcal N(0,1/k)\), equivalently \(R=G/\sqrt{k}\) with \(G_{ab}\sim\mathcal N(0,1)\). Since \(\bm{\Sigma}_{\bm w}\) is estimated on an augmented local support of size \(S=|\mathcal B|+M\) with FIFO buffer \(M=128\), we set \(k=128\) as a practical support-scale-matched sketch dimension. This choice should be interpreted as preserving the dominant low-dimensional geometry, rather than as a full-rank embedding guarantee when \(\operatorname{rank}(Z)>k\).

\begin{proposition}[Effective subspace embedding for weighted local geometry]
Let \(Z=W^{1/2}\bar X\in\mathbb{R}^{S\times d}\), where \(\bar X\) is the centered feature matrix and \(W=\operatorname{diag}(w_1,\ldots,w_S)\). Let \(U_{\mathrm{eff}}\in\mathbb{R}^{d\times r_{\mathrm{eff}}}\) be an orthonormal basis of an effective \(r_{\mathrm{eff}}\)-dimensional subspace of \(\operatorname{span}(Z^\top)\), with \(r_{\mathrm{eff}}\le k\). Let \(R\in\mathbb{R}^{d\times k}\) have entries \(R_{ab}\sim\mathcal N(0,1/k)\). There exists a universal constant \(C>0\) such that, for any \(\varepsilon,\delta\in(0,1)\), if
\[
k\ge C\varepsilon^{-2}\left(r_{\mathrm{eff}}+\log(1/\delta)\right),
\]
then with probability at least \(1-\delta\),
\[
(1-\varepsilon)I_{r_{\mathrm{eff}}}
\preceq
U_{\mathrm{eff}}^\top RR^\top U_{\mathrm{eff}}
\preceq
(1+\varepsilon)I_{r_{\mathrm{eff}}}.
\]
Equivalently, for every \(z\in\operatorname{span}(U_{\mathrm{eff}})\),
\[
(1-\varepsilon)\|z\|_2^2
\le
\|R^\top z\|_2^2
\le
(1+\varepsilon)\|z\|_2^2 .
\]
\end{proposition}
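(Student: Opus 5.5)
The plan is to reduce the statement to the standard concentration bound for the extreme singular values of a Gaussian matrix. First I would rotate coordinates. Let $G=\sqrt{k}\,R$, so $G$ has i.i.d.\ $\mathcal N(0,1)$ entries. Set $A:=U_{\mathrm{eff}}^\top G\in\mathbb{R}^{r_{\mathrm{eff}}\times k}$. Because $U_{\mathrm{eff}}$ has orthonormal columns and the standard Gaussian law on $\mathbb{R}^d$ is rotation invariant, the columns of $G$ are i.i.d.\ $\mathcal N(0,I_d)$. Hence the columns of $A$ are i.i.d.\ $\mathcal N(0,I_{r_{\mathrm{eff}}})$, so $A$ is itself an $r_{\mathrm{eff}}\times k$ standard Gaussian matrix. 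The quantity to control is then
\[
U_{\mathrm{eff}}^\top RR^\top U_{\mathrm{eff}}=\tfrac1k AA^\top .
\]
Its eigenvalues are $s_j(A)^2/k$, where $s_j$ are the singular values of $A^\top\in\mathbb{R}^{k\times r_{\mathrm{eff}}}$, a tall matrix since $r_{\mathrm{eff}}\le k$.

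Second, I would invoke the nonasymptotic bound for Gaussian matrices (Gordon's inequality plus Gaussian concentration of Lipschitz functions, as in Vershynin). For all $s\ge0$, with probability at least $1-2e^{-s^2/2}$,
\[
\sqrt{k}-\sqrt{r_{\mathrm{eff}}}-s\;\le\; s_{\min}(A^\top)\;\le\; s_{\max}(A^\top)\;\le\;\sqrt{k}+\sqrt{r_{\mathrm{eff}}}+s .
\]
I would take $s=\sqrt{2\log(2/\delta)}$ and put $\gamma:=(\sqrt{r_{\mathrm{eff}}}+s)/\sqrt{k}$. Then every eigenvalue of $\tfrac1k AA^\top$ lies in $[(1-\gamma)^2,(1+\gamma)^2]$. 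Choosing $\gamma\le\varepsilon/3$ gives $(1+\gamma)^2\le1+\varepsilon$ and $(1-\gamma)^2\ge1-\varepsilon$ for $\varepsilon<1$. The requirement $\gamma\le\varepsilon/3$ follows from $k\ge 9\varepsilon^{-2}\cdot2(r_{\mathrm{eff}}+2\log(2/\delta))$, using $(a+b)^2\le2a^2+2b^2$. Since $\log(2/\delta)\le c\,(1+\log(1/\delta))$ and $r_{\mathrm{eff}}\ge1$, this is implied by $k\ge C\varepsilon^{-2}(r_{\mathrm{eff}}+\log(1/\delta))$ for a universal constant $C$.

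Finally, the equivalent vector form follows directly. Write $z=U_{\mathrm{eff}}y$. Then $\|z\|_2=\|y\|_2$ and $\|R^\top z\|_2^2=y^\top U_{\mathrm{eff}}^\top RR^\top U_{\mathrm{eff}}y$, so the Loewner sandwich gives the two-sided norm bound. Conversely, the norm bound over all $y$ gives back the Loewner sandwich.

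The main obstacle is the singular value concentration step: it must hold with the right dependence on $\delta$ and not only in expectation. If a self-contained argument were preferred, I would instead use an $\varepsilon$-net on the unit sphere of $\mathbb{R}^{r_{\mathrm{eff}}}$ of size $9^{r_{\mathrm{eff}}}$. For each fixed unit $y$, $\|A^\top y\|_2^2\sim\chi^2_k$, so Laurent–Massart gives pointwise concentration. A union bound over the net, followed by the standard net-to-sphere approximation for the quadratic form, then yields the same $k\gtrsim\varepsilon^{-2}(r_{\mathrm{eff}}+\log(1/\delta))$ condition with a worse but still universal constant.
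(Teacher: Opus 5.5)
Your proof is correct. Note that the paper gives no proof of this proposition: it states it in Appendix~\ref{app:random_projection} and relies only on the main text's appeal to the Johnson--Lindenstrauss lemma, plus a remark about the regime $\operatorname{rank}(Z)>k$. So your argument supplies a proof rather than competing with one.

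The route the paper implicitly points to is the standard JL-based subspace-embedding argument, which is essentially your fallback. It combines three pieces: a distributional JL ($\chi^2_k$) tail bound for each fixed unit vector, a union bound over a $9^{r_{\mathrm{eff}}}$-point net of the unit sphere of $\operatorname{span}(U_{\mathrm{eff}})$, and a net-to-sphere step for the quadratic form.

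Your main route is different. You observe that $A=U_{\mathrm{eff}}^\top G$ is an $r_{\mathrm{eff}}\times k$ standard Gaussian matrix and then apply the Gordon/Davidson--Szarek bound on its extreme singular values, together with Gaussian concentration. This buys a short proof with an explicit constant: your condition $k\ge 18\varepsilon^{-2}(r_{\mathrm{eff}}+2\log(2/\delta))$ is implied by $C=54$ once $r_{\mathrm{eff}}\ge 1$. Its drawback is that it is specific to Gaussian sketches. The net argument goes through unchanged for any sketch with the distributional JL property, such as sub-Gaussian or sparse sign matrices, at the cost of a worse constant.

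You should make explicit one hypothesis that both your reduction and the statement use silently: $U_{\mathrm{eff}}$ must be fixed independently of $R$. Otherwise $A$ need not be a Gaussian matrix. This matters in the paper's pipeline, for two reasons. First, $W$ is computed from the projected features $R^\top\bar{\bm x}_i$. Second, $R$ is reused across all steps, so the features themselves depend on $R$ through earlier weighted updates. An ``effective subspace'' extracted from $Z$ is therefore not independent of $R$. The proposition, and your proof, certify a subspace fixed before $R$ is drawn, not uniformly over the training run.
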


\noindent
\textit{Remark.}
When the full local rank \(r=\operatorname{rank}(Z)\) exceeds \(k\), the projected covariance is necessarily rank at most \(k\), so the proposition cannot hold over the entire \(\operatorname{span}(Z^\top)\). In this regime, our use of \(k=128\) should be viewed as a low-rank randomized approximation to the dominant local geometry. The approximation quality depends on the effective intrinsic rank of the centered features, and sensitivity to \(k\) is evaluated empirically in Appendix~\ref{app:k_sensitivity}.

\section{Detailed Theoretical Proofs}
\label{sec:appendix_proofs}
This appendix supports the theoretical claims in Sec.~\ref{sec:theory_regimes}.
Sec.~\ref{app:proof_linear_regime} proves the linear-regime results on contribution
reallocation, adaptation toward \(\bm{\theta}^{*}\), and diversity-induced conditioning.
Sec.~\ref{app:proof_local_nonlinear_regime} extends the argument to the local nonlinear
regime with explicit Hessian remainder control.

\subsection{Proof for the Linear Regime in Sec.~\ref{sec:theory_regimes}}
\label{app:proof_linear_regime}
 
\paragraph{Notation used in this subsection.}
We work in the linear regression abstraction of the main text, where
\[
\bm{\Sigma}_{\bm w}\;:=\;\sum_{i=1}^N w_i \bm{x}_i\bm{x}_i^\top
\]
denotes the (\emph{uncentered}) weighted Gram matrix. This is the natural object for the linear regression analysis below and matches the main text linear regime. \emph{Important:} in the local nonlinear regime
(App.~\ref{app:proof_local_nonlinear_regime}) the symbol $\bm{\Sigma}_{\bm w}$ is reused, but it instead denotes the \emph{centered} feature covariance
$\sum_i w_i\bar{\bm x}_i\bar{\bm x}_i^\top$ that appears in the diversity objective. The two usages are intentionally different (mirroring the two regimes in the main text), and we will not mix them within a single subsection.
 
We justify the two linear-regime claims in Sec.~\ref{sec:theory_regimes}:
(i) unified contribution reallocation improves adaptation along conflicting directions, and
(ii) the diversity term $\log|\bm{\Sigma}_{\bm w}|$ improves conditioning, accelerates convergence, and enhances estimation robustness.
 
Consider the weighted linear regression model $f(\bm{x})=\bm{\theta}^{\top}\bm{x}$ with loss
\[
\mathcal{L}_{\bm w}(\bm{\theta})=\frac12\sum_{i=1}^N w_i (y_i-\bm{\theta}^{\top}\bm{x}_i)^2.
\]
We inherit $\bm{\theta}^{*}$ from the main text as the ideal target parameter, and use
$\bm{\theta}_{\bm w}^{\mathrm{opt}}:=\arg\min_{\bm\theta}\mathcal L_{\bm w}(\bm\theta)$ to denote the weighted optimum.
Define the optimization error
$\bm{\epsilon}(t):=\bm{\theta}(t)-\bm{\theta}_{\bm w}^{\mathrm{opt}}$.
The gradient flow $\dot{\bm\theta}=-\nabla\mathcal L_{\bm w}(\bm\theta)$ satisfies
\[
\frac{d}{dt}\bm{\epsilon}(t)=-\bm{\Sigma}_{\bm w}\bm{\epsilon}(t),
\]
so the contraction rate along any unit direction $\bm{u}$ is governed by the effective eigenvalue
$\mu_{\bm{u}}=\bm{u}^{\top}\bm{\Sigma}_{\bm w}\bm{u}$.
 
\paragraph{Contribution-side analysis.}
Let $\bm{u}_{\mathrm{con}}$ denote a conflicting direction, i.e., a direction that is important for the target task but mainly supported by samples with high pre-training loss. Under stability-oriented weighting such as FLOW, samples aligned with $\bm{u}_{\mathrm{con}}$ receive insufficient mass, so
\[
\mu_{\mathrm{con}}
=
\bm{u}_{\mathrm{con}}^{\top}\bm{\Sigma}_{\bm w}\bm{u}_{\mathrm{con}}
=
\sum_{i=1}^N w_i (\bm{x}_i^{\top}\bm{u}_{\mathrm{con}})^2
\]
becomes small, slowing contraction along $\bm{u}_{\mathrm{con}}$ and possibly stalling adaptation.
 
DRIVE uses the unified contribution $c_i=\lambda a_i+(1-\lambda)t_i$. Since $a_i$ favors stability while $t_i$ favors task adaptation, decreasing $\lambda$ increases the relative score of target-relevant conflicting samples. Because the weight update is monotone in $c_i$, this reallocates mass toward samples that support $\bm{u}_{\mathrm{con}}$, thereby increasing $\mu_{\mathrm{con}}$ and accelerating error decay along the conflicting direction.
 
To connect this to the ideal target $\bm{\theta}^{*}$, let $r_i^{*}:=y_i-(\bm{\theta}^{*})^\top\bm{x}_i$ be the residual of $\bm{\theta}^{*}$. Along any unit direction $\bm u$, the projected residual gradient at $\bm{\theta}^{*}$ is
\[
g_{\bm u}^{*}\;:=\;\bm u^\top\nabla\mathcal L_{\bm w}(\bm{\theta}^{*})
=-\sum_i w_i r_i^{*}(\bm{x}_i^\top\bm u).
\]
Restrict $\bm\theta$ to the one-dimensional slice $\{\bm\theta^{*}+s\bm u:s\in\mathbb R\}$. The loss is quadratic in $s$:
\[
\mathcal L_{\bm w}(\bm\theta^{*}+s\bm u)
=\mathcal L_{\bm w}(\bm\theta^{*})+s\,g_{\bm u}^{*}+\tfrac{s^2}{2}\mu_{\bm u},
\]
whose minimizer is at $s^{*}=-g_{\bm u}^{*}/\mu_{\bm u}$, yielding slice-deviation $|g_{\bm u}^{*}|/\mu_{\bm u}$. Whenever $\bm u$ is an eigenvector of $\bm{\Sigma}_{\bm w}$, this slice-deviation coincides exactly with the projection of the full deviation $\bm\theta_{\bm w}^{\mathrm{opt}}-\bm\theta^{*}$ onto $\bm u$. Thus, when the ideal target fits the task-relevant samples well so that $g_{\bm u}^{*}$ is small, increasing $\mu_{\bm u}$ reduces the projected fitting gap to $\bm{\theta}^{*}$. By assigning larger weights to target-relevant conflicting samples, DRIVE can therefore move the model closer to $\bm{\theta}^{*}$ along directions that would otherwise be under-weighted by purely stability-oriented weighting. This proves the contribution-side claim.
 
\paragraph{Diversity-side analysis.}
Under the standard D-optimal design abstraction, \(w_i\) is interpreted as a
relative precision or allocation weight, with conditionally independent noise
satisfying \(\mathrm{Var}(y_i\mid\bm{x}_i)=\sigma^2/w_i\). Then the weighted
least-squares estimator has covariance
\(\mathrm{Cov}(\hat{\bm{\theta}})=\sigma^2\bm{\Sigma}_{\bm w}^{-1}\).
More generally, without this precision-weight interpretation, the exact covariance
takes a sandwich form. We use the D-optimal abstraction to analyze how enlarging
\(|\bm{\Sigma}_{\bm w}|\) reduces generalized variance and improves estimator stability.
 
To see why it also improves conditioning, by Jacobi's formula
\[
\frac{\partial}{\partial w_i}\log|\bm{\Sigma}_{\bm w}|
=
\mathrm{tr}\!\big(\bm{\Sigma}_{\bm w}^{-1}\bm x_i\bm x_i^\top\big)
=
\bm{x}_i^\top \bm{\Sigma}_{\bm w}^{-1}\bm{x}_i.
\]
Let $\bm{\Sigma}_{\bm w}=\sum_{j=1}^d \lambda_j \bm{u}_j\bm{u}_j^\top$ be the eigendecomposition and write $\bm{x}_i=\sum_j \alpha_{ij}\bm{u}_j$. Then
\[
\frac{\partial}{\partial w_i}\log|\bm{\Sigma}_{\bm w}|
=
\sum_{j=1}^d \frac{\alpha_{ij}^2}{\lambda_j}.
\]
Hence directions with small eigenvalues receive larger gradient pressure. The log-determinant term therefore preferentially lifts under-covered directions, flattens the spectrum, and reduces the condition number $\kappa(\bm{\Sigma}_{\bm w})=\lambda_{\max}(\bm{\Sigma}_{\bm w})/\lambda_{\min}(\bm{\Sigma}_{\bm w})$.
 
Finally, under discrete-time gradient descent with step size $\eta_{\mathrm{gd}}<2/\lambda_{\max}(\bm{\Sigma}_{\bm w})$, the worst-case convergence factor is monotonically improved as $\kappa(\bm{\Sigma}_{\bm w})$ decreases. Therefore, by enlarging small eigenvalues and improving conditioning, the diversity term accelerates optimization and enhances robustness in the linear regime.
 
\subsection{Local Nonlinear Analysis with Hessian Remainder Control}
\label{app:proof_local_nonlinear_regime}
 
\paragraph{Notation used in this subsection.}
We extend the local analysis in Sec.~\ref{sec:theory_regimes} by explicitly accounting for the non-Gauss--Newton Hessian component and the local nonlinear remainder. Throughout this subsection, $\bm H_{\bm w}(\bm\theta):=\nabla^2\mathcal L_{\bm w}(\bm\theta)$ is the full weighted Hessian. We use
\[
\bm G_{\bm w}\;:=\;\bm G_{\bm w}(\bm\theta_{\mathrm{pre}})
\;=\;\sum_{i=1}^N w_i\,\bm J_i^\top\bm B_i\bm J_i
\]
for the weighted Gauss--Newton block evaluated at $\bm\theta_{\mathrm{pre}}$, where $\bm G_{\bm w}(\bm\theta)$ is the GN block introduced in the main text, $\bm J_i:=\nabla_{\bm\theta}f_i(\bm\theta_{\mathrm{pre}})$ is the per-sample Jacobian of the model output, and $\bm B_i$ is the corresponding output-space loss Hessian. Following the main text local nonlinear regime, we reserve
\[
\bm{\Sigma}_{\bm w}\;:=\;\sum_i w_i\bar{\bm{x}}_i\bar{\bm{x}}_i^\top
\]
for the \emph{centered} feature covariance used in the diversity objective $\log|\bm{\Sigma}_{\bm w}|$, where $\bar{\bm x}_i$ is the (mini-batch) centered feature. \emph{Note that this $\bm{\Sigma}_{\bm w}$ is intentionally distinct from the (uncentered) weighted Gram matrix denoted by the same symbol in App.~\ref{app:proof_linear_regime}; the two regimes use the natural object for their respective analyses.} The contribution scores $a_i,t_i$, the prior $p_i$, the diversity score $h_i$, the trade-off $\lambda$, the diversity coefficient $\eta$ and the temperature $\tau$ are inherited from the main text.
 
Consider the local neighborhood
$\mathcal B_R=\{\bm{\theta}:\|\bm{\theta}-\bm{\theta}_{\mathrm{pre}}\|_2\le R\}$.
 
\begin{assumption}[Local Hessian smoothness]
\label{ass:hessian_smooth}
There exists $L_H\ge 0$ such that
$\|\bm H_{\bm w}(\bm\theta)-\bm H_{\bm w}(\bm\theta_{\mathrm{pre}})\|_2
\le L_H\|\bm\theta-\bm\theta_{\mathrm{pre}}\|_2$
for all $\bm\theta\in\mathcal B_R$.
\end{assumption}
 
\begin{assumption}[Bounded non-GN residual]
\label{ass:non_gn}
At $\bm\theta_{\mathrm{pre}}$ we have the decomposition $\bm H_{\bm w}(\bm\theta_{\mathrm{pre}})=\bm G_{\bm w}+\bm E_{\bm w}$ with $\|\bm E_{\bm w}\|_2\le\kappa$.
\end{assumption}
 
\begin{assumption}[Spectral comparison on the local subspace]
\label{ass:spec_comp}
Let $\mathcal S$ be the local tangent subspace relevant to the update. There exist $\alpha,\beta>0$ such that for any unit vector $\bm v\in\mathcal S$,
\[
\alpha\,\lambda_{\min}(\bm{\Sigma}_{\bm w})
\;\le\;\bm v^\top\bm G_{\bm w}\bm v
\;\le\;\beta\,\lambda_{\max}(\bm{\Sigma}_{\bm w}).
\]
\end{assumption}
 
Assumption~\ref{ass:spec_comp} formalizes the link between the diversity-side object $\bm{\Sigma}_{\bm w}$ (centered feature covariance) and the GN block $\bm G_{\bm w}$, which is the actual driver of local curvature. It is not an a priori statement that the two are equal but rather that they share a comparable spectrum on the directions that the optimizer can move along.
 
\paragraph{Local strong convexity and smoothness.}
For any $\bm\theta\in\mathcal B_R$, write
\[
\bm H_{\bm w}(\bm\theta)
\;=\;
\bm G_{\bm w}+\bm E_{\bm w}+\bm R_{\bm w}(\bm\theta),
\qquad
\bm R_{\bm w}(\bm\theta)
\;:=\;
\bm H_{\bm w}(\bm\theta)-\bm H_{\bm w}(\bm\theta_{\mathrm{pre}}).
\]
By Assumption~\ref{ass:hessian_smooth}, $\|\bm R_{\bm w}(\bm\theta)\|_2\le L_H R$. Combining with Assumptions~\ref{ass:non_gn}--\ref{ass:spec_comp} and the triangle inequality, for any unit $\bm v\in\mathcal S$ and $\bm\theta\in\mathcal B_R$,
\[
\alpha\,\lambda_{\min}(\bm{\Sigma}_{\bm w})-\kappa-L_H R
\;\le\;
\bm v^\top\bm H_{\bm w}(\bm\theta)\bm v
\;\le\;
\beta\,\lambda_{\max}(\bm{\Sigma}_{\bm w})+\kappa+L_H R.
\]
Define
\[
m_{\bm w}\;:=\;\alpha\,\lambda_{\min}(\bm{\Sigma}_{\bm w})-\kappa-L_H R,
\quad
M_{\bm w}\;:=\;\beta\,\lambda_{\max}(\bm{\Sigma}_{\bm w})+\kappa+L_H R.
\]
If $m_{\bm w}>0$, then on $\mathcal S\cap\mathcal B_R$ the weighted objective is locally $m_{\bm w}$-strongly convex and $M_{\bm w}$-smooth, with local condition number bounded by $M_{\bm w}/m_{\bm w}$.
 
This yields two standard consequences. First, if gradient descent uses step $0<\zeta\le 1/M_{\bm w}$ and stays in $\mathcal B_R\cap\mathcal S$, the local objective gap contracts at rate $1-m_{\bm w}/M_{\bm w}$ per step. Second, if the empirical gradient is perturbed by $\bm e$ with $\|\bm e\|_2\le\delta$, the corresponding perturbed local stationary point $\tilde{\bm\theta}_{\bm w}$ satisfies
\[
\|\tilde{\bm\theta}_{\bm w}-\bm\theta_{\bm w}^{\mathrm{loc}}\|_2
\;\le\;\delta/m_{\bm w},
\]
where $\bm\theta_{\bm w}^{\mathrm{loc}}$ denotes the unperturbed local stationary point. Thus, a larger $m_{\bm w}$ improves both local descent and robustness, provided that the nonlinear residual $\kappa+L_H R$ is not dominant.
 
\paragraph{Connection to the ideal target.}
Suppose $\bm\theta^{*}\in\mathcal B_R\cap\mathcal S$. By local $m_{\bm w}$-strong convexity on $\mathcal S$,
\[
\|\bm\theta_{\bm w}^{\mathrm{loc}}-\bm\theta^{*}\|_2
\;\le\;
m_{\bm w}^{-1}\,
\|\Pi_{\mathcal S}\nabla\mathcal L_{\bm w}(\bm\theta^{*})\|_2,
\]
where $\Pi_{\mathcal S}$ is the projection onto the local update subspace. Therefore DRIVE can approach the ideal target when (a) the projected weighted residual gradient at $\bm\theta^{*}$ is small and (b) the effective lower curvature $m_{\bm w}$ is large. The contribution term targets condition (a) by upweighting task-relevant samples that are better explained by $\bm\theta^{*}$; the diversity term targets condition (b) by improving the centered feature covariance spectrum.
 
\paragraph{Connecting to the DRIVE weighting rule.}
With the diversity score fixed, one exponential weighting step satisfies
\[
w_i(\lambda)\;\propto\;p_i\exp\!\big((\lambda a_i+(1-\lambda)t_i+\eta h_i)/\tau\big).
\]
For each sample, let \(\bm J_i=\nabla_{\bm{\theta}} f(\bm{x}_i;\bm{\theta}_{\mathrm{pre}})\) denote the model-output Jacobian and let \(\bm B_i=\nabla_{\bm{o}}^2\ell(\bm{o},\bm y_i)|_{\bm{o}=f(\bm{x}_i;\bm{\theta}_{\mathrm{pre}})}\) denote the output-space loss Hessian. For a task-relevant direction $\bm v\in\mathcal S$, define the per-sample contribution to $\bm v^\top\bm G_{\bm w}\bm v$ as
\[
q_i(\bm v)\;:=\;\bm v^\top\bm J_i^\top\bm B_i\bm J_i\bm v\;\ge 0.
\]
Differentiating the softmax weights and simplifying yields
\[
\frac{d}{d\lambda}\sum_{i=1}^N w_i(\lambda)\,q_i(\bm v)
\;=\;
\tau^{-1}\operatorname{Cov}_{w(\lambda)}\!\big(q_i(\bm v),\,a_i-t_i\big).
\]
Therefore, when task-relevant conflicting samples have larger task-adaptation scores, i.e.,
\[
\operatorname{Cov}_{w(\lambda)}\!\big(q_i(\bm v),\,t_i-a_i\big)\;>\;0,
\]
decreasing $\lambda$ increases the effective spectral mass along $\bm v$, which enlarges $\bm v^\top\bm G_{\bm w}\bm v$ and, through Assumption~\ref{ass:spec_comp}, supports a larger $m_{\bm w}$ and a smaller projected fitting gap toward $\bm\theta^{*}$.
 
The diversity term $\log|\bm{\Sigma}_{\bm w}|$ further discourages collapse of the centered feature covariance: as in App.~\ref{app:proof_linear_regime}, its eigenvalue derivative is $1/\lambda_j(\bm{\Sigma}_{\bm w})$, so it places stronger pressure on under-covered feature directions with small eigenvalues, encouraging a more balanced feature spectrum. Under Assumption~\ref{ass:spec_comp}, this can increase $\lambda_{\min}(\bm{\Sigma}_{\bm w})$, hence $m_{\bm w}$, and improve the local condition number $M_{\bm w}/m_{\bm w}$ whenever $\kappa+L_H R$ is controlled.
 
In summary, DRIVE provides a local nonlinear guarantee with explicit Hessian remainder control: contribution reallocation reduces the residual gradient toward the ideal target $\bm\theta^{*}$ along task-relevant conflicting directions, while diversity regularization improves feature-space coverage and supports better local conditioning. This result is local in $\mathcal B_R$ and does not claim a global non-convex guarantee.
\section{More Experimental Details}
\label{full_details}
In this section, we provide a comprehensive overview of our experimental setup to ensure reproducibility. We begin by detailing the implementation and configurations of the baseline methods in \ref{app:baselines}. Subsequently, we provide descriptions of the diverse downstream vision datasets utilized in our experiments in \ref{sec:evaluation_details}. We then outline the specific evaluation metrics and benchmarks employed for Large Language Models in \ref{sec:llm_eval_details}. Finally, we present the implementation details, including hyperparameter settings and training configurations for both vision and language models, in \ref{app:impl_details}.

\subsection{Baseline Details}
\label{app:baselines}

We compare DRIVE against SOTA methods spanning parameter-space regularization, PEFT, sample selection, sample reweighting, and token-level loss reweighting. All are implemented under the \textit{data-oblivious} setting (no access to pre-training data).

\textbf{Standard Fine-Tuning (Standard FT):} Updates all parameters using standard cross-entropy loss. It typically suffers severe catastrophic forgetting, serving as the lower bound for general capability preservation.

\textbf{Linear Probing (LP):} Freezes the pre-trained backbone and only trains the task-specific head. While maximizing retention, it often yields sub-optimal task adaptation.

\textbf{$\ell_2$-Regularization~\cite{kirkpatrick2017overcoming}:} Constrains parameter drift from pre-trained weights $\theta_\text{pre}$ via an $\ell_2$ penalty: $\mathcal{L}(\boldsymbol{\theta}) = \sum \ell_i(\boldsymbol{\theta}) + \lambda \|\boldsymbol{\theta} - \boldsymbol{\theta}_\text{pre}\|_2^2$. Higher $\lambda$ favors retention over adaptation.

\textbf{WiSE-FT \cite{wortsman2022robust}:} Linearly interpolates between pre-trained ($\boldsymbol{\theta}_\text{pre}$) and fine-tuned ($\boldsymbol{\theta}_{\text{ft}}$) weights: $\boldsymbol{\theta}_{\text{wise}} = (1-\alpha)\boldsymbol{\theta}_\text{pre} + \alpha \boldsymbol{\theta}_{\text{ft}}$. We set $\alpha = 0.5$ as pre-training validation data is unavailable.

\textbf{LoRA \cite{hu2022lora}:} Injects trainable rank decomposition matrices into Transformer layers while freezing the backbone. Limiting trainable parameters inherently mitigates forgetting \cite{biderman2024lora}. We set rank $r=64$.

\textbf{DEFT-UCS \cite{das2024deft}:} A dataset pruning framework that selects a representative ``core-set" via K-Means clustering on pre-trained embeddings. We use the recommended ``hard sampling" strategy. Unlike DRIVE's continuous weighting, DEFT-UCS performs discrete data sub-sampling prior to training.

\textbf{FLOW \cite{sanyal2025upweighting}:} Dynamically weights samples inversely to their pre-trained loss: $w_i \propto \exp(-\ell_i(\boldsymbol{\theta}_\text{pre}) / \tau)$, upweighting ``easy" samples to anchor the model. FLOW lacks explicit task adaptation mechanisms and ignores diversity.

\textbf{TALR \cite{lin2026sft}:} Token-Adaptive Loss Reweighting dynamically rescales token-level losses according to predicted token probabilities, down-weighting low-probability hard tokens to reduce their disproportionate impact on general capability degradation.

\subsection{Vision Datasets and Evaluation Details}
\label{sec:evaluation_details}
 
For vision tasks, we primarily utilize Top-1 Accuracy as the evaluation metric. We report the Target Accuracy on the fine-tuning datasets to assess adaptation performance. To monitor catastrophic forgetting, we evaluate the ImageNet-1K Retention Accuracy, where the pre-trained classification head is restored during inference~\cite{russakovsky2015imagenet}. The Overall Avg is calculated as the arithmetic mean of the Target Accuracy and the ImageNet-1K Retention Accuracy. For the pre-trained loss used to compute \(a_i\), we freeze the ImageNet-pretrained backbone and use a task-specific linear-probe head trained on the downstream training set.

\begin{itemize}[leftmargin=*,nosep]
\item ImageNet-1K~\cite{russakovsky2015imagenet}: Large-scale pre-training source with 1,000 diverse classes. \item CIFAR-10~\cite{krizhevsky2009learning}: Standard benchmark with 10 classes of 32×32 images. \item CIFAR-100~\cite{krizhevsky2009learning}: Fine-grained extension of CIFAR-10 with 100 classes. \item Caltech101~\cite{fei2004learning}: General object recognition benchmark with 101 categories. \item Flowers102~\cite{nilsback2008automated}: Fine-grained classification of 102 flower species. \item Cars~\cite{krause20133d}: Fine-grained dataset containing 196 car models. \item Dogs~\cite{khosla2011novel}: Fine-grained dataset covering 120 dog breeds. \item Food101~\cite{bossard2014food}: Large-scale dataset with 101 food categories. \item Beans~\cite{elfatimi2022beans}: Agricultural dataset for leaf disease classification. \item EuroSAT~\cite{helber2019eurosat}: Satellite imagery for land-use and land-cover detection. \end{itemize}

\subsection{LLM Datasets and Evaluation Details}
\label{sec:llm_eval_details}

We assess model performance across three dimensions: Commonsense Reasoning, General Capabilities (MMLU, MBPP) to monitor forgetting, and Target Tasks for mathematical reasoning. Code generation capabilities are further evaluated using HumanEval and the Magicoder dataset. For LLM training and contribution scoring, each sequence is treated as one sample, and sample-level cross-entropy is averaged over non-padding target tokens, avoiding length bias.

We employ the lm-evaluation-harness~\cite{gao12608602framework} library for standardized assessment. We report Pass@1 for code generation and Accuracy for other tasks. The specific configurations are: General Capabilities: MMLU is evaluated in a 5-shot setting. Commonsense Reasoning: HellaSwag, PIQA, OpenBookQA, ARC-Easy, and ARC-Challenge are evaluated in a 0-shot setting. Mathematical Reasoning: GSM8K is assessed using 5-shot prompting. Code Generation: We evaluate HumanEval in a 0-shot setting and MBPP in a 3-shot setting. For aggregation, we calculate the General Capability score as the average of the three general benchmarks, while the Overall Avg. is computed as the mean of this General Capability score and the Target Task accuracy.

\begin{itemize}[leftmargin=*,nosep]
    \item HellaSwag ~\cite{zellers2019hellaswag}: A commonsense reasoning benchmark for selecting a plausible continuation.
    
    \item ARC (Easy \& Challenge)~\cite{clark2018think}: Grade-school science questions. \textit{ARC-e} targets basic reasoning, while \textit{ARC-c} involves complex questions requiring deep inference.
    
    \item PIQA ~\cite{bisk2020piqa}: Evaluates physical commonsense through solutions to everyday physical goals.
    
    \item OpenBookQA ~\cite{mihaylov2018can}: Elementary science QA combining open-book facts with world knowledge.
    
    \item MMLU ~\cite{hendrycks2021measuring}: A comprehensive benchmark covering 57 subjects to evaluate knowledge breadth.
    
    \item MBPP~\cite{austin2021program}: Contains 974 basic Python problems requiring code satisfying provided test cases.
    
    \item GSM8K ~\cite{cobbe2021training}: 1,319 grade-school math word problems requiring multi-step arithmetic reasoning.

     \item MetaMathQA~\cite{yu2024metamath}: Bootstraps math questions into instruction data for reasoning.

    \item Magicoder ~\cite{wei2024magicoder}: Leverages open-source snippets to create coding problems for instruction tuning.
    
    \item HumanEval ~\cite{chen2021evaluating}: Evaluates code synthesis via 164 handwritten Python problems with unit tests.
\end{itemize}

\subsection{Implementation Details}
\label{app:impl_details}
\subsubsection{Vision Model Implementation Details}

For the vision benchmarks, we performed an extensive hyper-parameter search over a grid of learning rates (lr $\in \{5e-2, 1e-2, 5e-3, 1e-3, 5e-4, 1e-4\}$) for standard fine-tuning, linear probing, and our proposed method. Experiments were conducted using ResNet18 and ResNet50 architectures across nine downstream datasets: CIFAR-10, CIFAR-100, Flowers102, Caltech101, Stanford Dogs, Stanford Cars, Food101, Beans, and EuroSAT. We use a global batch size of 128 for all vision experiments. We selected the optimal learning rates using a held-out validation split from the downstream training data, and used the final test set only once for reporting.

\begin{table}[ht]
\vspace{-0.05in}
   \caption{Hyperparameter configurations for DRIVE across various datasets.}
    \centering
    \small
    \begin{tabular}{l|ccccccccc}
    \toprule
   Datasets & \scalebox{0.8}{CIFAR10} & \scalebox{0.8}{CIFAR100} & \scalebox{0.8}{Flowers} & \scalebox{0.8}{Caltech} & \scalebox{0.8}{Dogs} & \scalebox{0.8}{Cars} & \scalebox{0.8}{Food} & \scalebox{0.8}{Beans} & \scalebox{0.8}{EuroSAT} \\
    \midrule
     Epochs   & 20 & 25 & 25 & 30 & 30 & 30 & 30 & 20 & 20 \\
        \midrule
        ResNet18 & 1e-3 & 5e-3 & 5e-2 & 1e-2 & 5e-3 & 1e-2 & 1e-2 & 1e-2 & 1e-3 \\
    
        ResNet50 & 5e-4 & 1e-3 & 1e-2 & 1e-2 & 5e-3 & 1e-2 & 1e-2 & 1e-2 & 1e-3 \\
        \bottomrule
    \end{tabular}
\vspace{-10pt}
\end{table}

To ensure a fair comparison, we maintained consistent training configurations across all baselines. Specifically, we utilized an SGD optimizer with a momentum of 0.9 and a weight decay of $5 \times 10^{-4}$. The learning rate followed a Cosine Annealing schedule for all fine-tuning tasks, whereas linear probing employed a constant learning rate.

\subsubsection{LLMs Implementation Details}
All Large Language Models (LLMs) are fine-tuned using the AdamW optimizer paired with a cosine learning rate scheduler. To ensure a fair comparison, we enforce a uniform training configuration across all models (Gemma-2-2B, Llama-3.2-3B, and Llama-3.1-8B). 
Regarding dataset-specific configurations, for MetaMathQA, we set the global batch size to 128 with a maximum sequence length of 1024; for Magicoder, we use a global batch size of 64 and a maximum sequence length of 4096. 
Common settings include a warmup ratio of 0.03 and a training duration of 3 epochs. To ensure statistical reliability, all results are reported as the mean and standard deviation across five independent runs. 
All experiments were conducted on four NVIDIA A100 GPUs. For LoRA baselines, the rank is set to $r=64$. 
The learning rate is tuned for each method--model combination using validation performance. LoRA uses $2\mathrm{e}{-4}$ following the PEFT setting. All full-parameter methods, including Standard FT, FLOW, $\ell_2$-Reg., DEFT-UCS, and DRIVE, share the same grid $\{5\mathrm{e}{-6}, 1\mathrm{e}{-5}, 2\mathrm{e}{-5}\}$. DRIVE selects $2\mathrm{e}{-5}$ for Gemma-2-2B and Llama-3.2-3B, and $1\mathrm{e}{-5}$ for Llama-3.1-8B.

\section{Additional Experimental Results and Analysis}
\label{app:extended_experiments}

In this appendix, we provide a comprehensive empirical analysis to complement the main text. We begin by detailing fine-grained performance across diverse downstream vision datasets in Appendix~\ref{app:extended_vision_experiments} and verifying architectural generalization on Vision Transformers in Appendix~\ref{app:vit_experiments}. Subsequently, Appendix~\ref{app:extended_llm_experiments} extends the LLM evaluation to code generation tasks. We then validate our design choices through in-depth sensitivity analyses: we examine the magnitude and temporal schedule of the balance parameter $\lambda$ in Appendix~\ref{app:lambda_schedule} and Appendix~\ref{app:tau_impact}, respectively, followed by the prior regularization threshold $\tau$ in Appendix~\ref{app:regularization_tau}, the diversity coefficient $\eta$ in Appendix~\ref{app:eta_impact}, and component-wise contributions on vision benchmarks in Appendix~\ref{app:ablation_vision}. Finally, we report the computational efficiency of DRIVE in Appendix~\ref{appendix:efficiency}, demonstrating its scalability compared to baselines.

\subsection{Analysis of Vision Tasks}
\label{app:extended_vision_experiments}

To evaluate the robustness of DRIVE, we present the fine-grained performance on six diverse downstream vision datasets: Caltech101, Stanford Dogs, Stanford Cars, Food101, Beans, and EuroSAT.

\textbf{Target Task Adaptation.} Table~\ref{tab:combined_acc} details the top-1 accuracy on each specific target dataset after fine-tuning. DRIVE consistently demonstrates strong performance across both ResNet-18 and ResNet-50 architectures. Notably, on fine-grained classification tasks such as Stanford Cars and Food101, DRIVE achieves significant improvements over competing baselines. For instance, on the Food101 dataset with ResNet-50, DRIVE outperforms FLOW, an anti-forgetting oriented weighting method, while maintaining a superior adaptation score compared to other methods.

\textbf{Anti-forgetting Capability.} Table~\ref{tab:combined_acc} reports the Top-1 ImageNet-1K accuracy (the pre-training task) after fine-tuning. A major challenge in fine-tuning is the catastrophic drop in pre-trained performance, which is particularly severe in Standard FT. DRIVE effectively mitigates this issue, maintaining significantly higher retention accuracy compared to Standard FT. Furthermore, DRIVE demonstrates anti-forgetting capabilities that are competitive with or superior to $\ell_2$-Regularization, while simultaneously achieving higher accuracy on the target tasks.

\textbf{Consolidated Performance Analysis.} Table~\ref{tab:dataset_means} presents the average accuracy across the six datasets, reflecting the overall balance between representation preservation and task adaptation. Across both ResNet-18 and ResNet-50 architectures, DRIVE achieves the highest average accuracy on the majority of datasets, including Food101 and EuroSAT. While methods like Linear Probing preserve pre-trained knowledge perfectly, they often fail to adapt effectively to new tasks, resulting in lower overall average scores compared to DRIVE. This demonstrates that DRIVE achieves a robust stability-plasticity trade-off, successfully resolving the dilemma.

\begin{table*}[t]
  \centering
  \small
  \definecolor{highlightblue}{HTML}{E0F7FA}
  \caption{Combined performance comparison across six datasets. For each dataset, \textbf{Tgt} denotes the target task accuracy (higher is better for adaptation), and \textbf{Gen} denotes the ImageNet-1K accuracy (higher is better for anti-forgetting)~(Accuracy in \%). Best results are \textbf{bold}, second-best results are \underline{underlined}.}
  \vspace{-7pt}
  \label{tab:combined_acc}
  \resizebox{\textwidth}{!}{
    \begin{tabular}{ll|cc|cc|cc|cc|cc|cc}
    \toprule
    \multirow{2}{*}{} & \multirow{2}{*}{\textbf{Method}} & \multicolumn{2}{c|}{\textbf{Caltech101}} & \multicolumn{2}{c|}{\textbf{Dogs}} & \multicolumn{2}{c|}{\textbf{Cars}} & \multicolumn{2}{c|}{\textbf{Food101}} & \multicolumn{2}{c|}{\textbf{Beans}} & \multicolumn{2}{c}{\textbf{EuroSAT}} \\
    \cmidrule(lr){3-4} \cmidrule(lr){5-6} \cmidrule(lr){7-8} \cmidrule(lr){9-10} \cmidrule(lr){11-12} \cmidrule(lr){13-14}
          &        & \textbf{Gen} & \textbf{Tgt} & \textbf{Gen} & \textbf{Tgt} & \textbf{Gen} & \textbf{Tgt} & \textbf{Gen} & \textbf{Tgt} & \textbf{Gen} & \textbf{Tgt} & \textbf{Gen} & \textbf{Tgt} \\
    \midrule
    \multirow{7}[0]{*}{\rotatebox{90}{\textbf{ResNet18}}} & Linear probing & \textbf{69.76} & 88.63 & \textbf{69.76} & 78.55 & \textbf{69.76} & 50.21 & \textbf{69.76} & 60.03 & \textbf{69.76} & 85.78 & \textbf{69.76} & 91.35 \\
          & Standard FT & 35.79 & \textbf{94.89} & 58.01 & 80.30 & 1.84  & \textbf{87.64} & 0.23  & 74.03 & 48.95 & \textbf{97.66} & 24.79 & \underline{96.79} \\
          & $\ell_2$-Regularization & 45.16 & \textbf{94.89} & 60.11 & 81.33 & 20.79 & \underline{83.79} & 10.20 & \underline{75.69} & 40.77 & \underline{96.88} & 39.09 & 95.42 \\
          & WiSE-FT & 62.96 & 90.60 & 66.89 & 75.42 & 32.57 & 65.46 & 20.79 & 70.30 & 44.87 & 96.41 & 50.75 & 90.18 \\
          & DEFT-UCS & 59.69 & 89.59 & 59.98 & 79.62 & 28.54 & 60.31 & 10.77 & 70.94 & 41.97 & 95.31 & 50.32 & 90.01 \\
          & FLOW  & \underline{65.72} & 90.46 & 66.31 & \underline{82.22} & \underline{57.65} & 61.85 & \underline{53.79} & 73.43 & \underline{65.79} & 94.06 & \underline{66.00} & 96.26 \\
          \rowcolor{highlightblue} \cellcolor{white}
          & \textbf{DRIVE~(Ours)} & 64.91 & \underline{93.90} & \underline{66.95} & \textbf{84.82} & 55.89 & 76.72 & 53.61 & \textbf{77.48} & 64.78 & 94.84 & 65.78 & \textbf{97.17} \\
    \midrule
    \multirow{7}[0]{*}{\rotatebox{90}{\textbf{ResNet50}}} & Linear probing & \textbf{80.10} & 92.04 & \textbf{80.10} & 85.40 & \textbf{80.10} & 42.78 & \textbf{80.10} & 53.08 & \textbf{80.10} & 86.41 & \textbf{80.10} & 92.35 \\
          & Standard FT & 42.53 & \textbf{97.73} & 69.25 & 87.15 & 0.92  & \textbf{90.27} & 0.42  & 76.80 & 53.79 & \textbf{98.75} & 34.79 & \underline{97.04} \\
          & $\ell_2$-Regularization & 45.73 & \underline{95.82} & \underline{78.96} & 82.86 & 14.61 & \underline{89.97} & 17.63 & 76.11 & 60.21 & 95.31 & 66.46 & 94.21 \\
          & WiSE-FT & \underline{74.76} & 92.71 & 74.56 & 74.30 & 10.84 & 63.79 & 27.87 & 72.33 & 70.64 & 92.34 & 71.54 & 95.43 \\
          & DEFT-UCS & 60.78 & 90.68 & 72.54 & 80.67 & 26.87 & 67.15 & 22.95 & 71.23 & 64.56 & 90.47 & 67.64 & 92.33 \\
          & FLOW  & 71.63 & 92.79 & 75.57 & \underline{89.63} & \underline{67.98} & 70.76 & 55.78 & \underline{77.68} & \underline{77.19} & 95.00 & \underline{75.32} & 96.37 \\
          \rowcolor{highlightblue} \cellcolor{white}
          & \textbf{DRIVE~(Ours)} & 71.03 & 94.96 & 73.04 & \textbf{89.94} & 66.91 & 77.68 & \underline{67.40} & \textbf{79.37} & 76.34 & \underline{96.41} & 74.55 & \textbf{98.44} \\
    \bottomrule
    \end{tabular}%
  }
\end{table*}

\begin{table*}[ht]
  \centering
  \small
  \caption{Comparison of Average Accuracy~(\%) across six downstream vision datasets. This metric reflects the overall balance between representation preservation and task adaptation. Best results are \textbf{bold}, second-best results are \underline{underlined}.}
  \vspace{-7pt}
  \label{tab:dataset_means}
  \definecolor{highlightblue}{HTML}{E0F7FA}
    \begin{tabular}{ll|cccccc}
    \toprule
    \textbf{Model} & \textbf{Method} & \textbf{Caltech101} & \textbf{Dogs} & \textbf{Cars} & \textbf{Food101} & \textbf{Beans} & \textbf{EuroSAT} \\
    \midrule
    \multirow{7}{*}{\textbf{ResNet18}} 
    & Linear probing         & \underline{79.20} & 74.16 & \underline{59.99} & \underline{64.90} & 77.77 & 80.56 \\
    & Standard FT            & 65.34 & 69.16 & 44.74 & 37.13 & 73.31 & 60.79 \\
    & $\ell_2$-Regularization & 70.03 & 70.72 & 52.29 & 42.95 & 68.83 & 67.26 \\
    & WiSE-FT                & 76.78 & 71.16 & 49.02 & 45.55 & 70.64 & 70.47 \\
    & DEFT-UCS               & 74.64 & 69.80 & 44.43 & 40.86 & 68.64 & 70.17 \\
    & FLOW                   & 78.09 & \underline{74.27} & 59.75 & 63.61 & \textbf{79.93} & \underline{81.13} \\
    \rowcolor{highlightblue} \cellcolor{white}
    & \textbf{DRIVE (Ours)}   & \textbf{79.41} & \textbf{75.89} & \textbf{66.31} & \textbf{65.55} & \underline{79.81} & \textbf{81.48} \\
    \midrule
    \multirow{7}{*}{\textbf{ResNet50}} 
    & Linear probing         & \textbf{86.07} & \textbf{82.75} & 61.44 & 66.59 & 83.26 & \underline{86.23} \\
    & Standard FT            & 70.13 & 78.20 & 45.60 & 38.61 & 76.27 & 65.92 \\
    & $\ell_2$-Regularization & 70.78 & 80.91 & 52.29 & 46.87 & 77.76 & 80.34 \\
    & WiSE-FT                & \underline{83.74} & 74.43 & 37.32 & 50.10 & 81.49 & 83.49 \\
    & DEFT-UCS               & 75.73 & 76.61 & 47.01 & 47.09 & 77.52 & 79.99 \\
    & FLOW                   & 82.21 & \underline{82.60} & \underline{69.37} & \underline{66.73} & \underline{86.10} & 85.85 \\
    \rowcolor{highlightblue} \cellcolor{white}
    & \textbf{DRIVE (Ours)}   & 83.00 & 81.49 & \textbf{72.30} & \textbf{73.39} & \textbf{86.38} & \textbf{86.50} \\
    \bottomrule
    \end{tabular}
    \vspace{-14pt}
\end{table*}

\subsection{Experiments on Vision Transformers}

\label{app:vit_experiments}

To verify the architectural generalization of DRIVE beyond Convolutional Neural Networks (CNNs), we extended our evaluation to the Vision Transformer (ViT) architecture. Specifically, we fine-tuned a pre-trained ViT-B/16\cite{dosovitskiy2021an} model on the Food101 dataset, a challenging fine-grained classification benchmark.

As presented in Table~\ref{tab:vit_food101}, the results verify that DRIVE generalizes effectively to Transformer-based architectures, offering a superior trade-off compared to existing baselines on the challenging fine-grained Food101 benchmark.

While Standard FT achieves high target adaptation, it suffers from severe catastrophic forgetting. Specifically, the ImageNet-1K accuracy drops by over 24\% compared to the pre-trained baseline, confirming that the high plasticity of ViTs leads to significant drift from the general manifold without regularization.

\begin{wraptable}{r}{0.50\textwidth}

\centering
\setlength{\tabcolsep}{4pt}
\caption{Performance comparison (Accuracy in \%) of ViT-B/16 on Food101. Best in \textbf{bold}, second in \underline{underlined}.}
\vspace{-7pt}
\label{tab:vit_food101}
\scriptsize
\definecolor{highlightblue}{HTML}{E0F7FA}
\resizebox{\linewidth}{!}{%
\begin{tabular}{lccc}
\toprule
Method & IN-1K & Food101 & Avg. \\
\midrule
Pre-trained & 82.95 & - & - \\
\midrule
Standard FT & 58.42 & \textbf{92.24} & 75.33 \\
Linear Probe & \textbf{82.95} & 83.93 & \underline{83.44} \\
$l_2$-Reg. & 57.94 & \underline{91.78} & 74.86 \\
WiSE-FT & 55.38 & 90.64 & 73.01 \\
DEFT-UCS & 67.35 & 90.15 & 78.75 \\
FLOW & \underline{76.72} & 89.86 & 83.29 \\
\rowcolor{highlightblue}
\textbf{DRIVE (Ours)} & 76.15 & 91.45 & \textbf{83.80} \\
\bottomrule
\end{tabular}%
}
\vspace{-0.25in}
\end{wraptable}

DRIVE successfully mitigates this issue, recovering the general accuracy by nearly 18\% compared to Standard FT. Crucially, this substantial gain in stability comes with negligible cost to adaptation, as DRIVE maintains a competitive target accuracy comparable to full fine-tuning. DRIVE demonstrates better adaptability than rigid baselines. It surpasses Linear Probe by more than 7.5\% on the target task, proving that DRIVE effectively learns task-specific features where simple probing fails. Furthermore, DRIVE outperforms the baseline FLOW by approximately 1.6\% in target accuracy, suggesting that our contribution-based weighting captures fine-grained patterns more effectively than loss-based strategies.

\subsection{Experiments on Language Models}
\label{app:extended_llm_experiments}

Table~\ref{tab:code1} presents results of fine-tuning LLMs on Magicoder. We evaluate target-task performance using HumanEval and general capability retention using Commonsense Reasoning (CS), MMLU, and GSM8K. DRIVE consistently achieves the highest overall average across evaluated models, demonstrating a strong balance between stability and plasticity. Particularly on Llama-3.2-3B, DRIVE improves the overall score by 2.86\% over LoRA and by 0.98\% over FLOW, while substantially improving general capability retention over Standard FT. Although some retention-oriented baselines, such as TALR on Llama-3.2-3B, obtain slightly higher general capability, they achieve lower HumanEval accuracy. Overall, DRIVE attains the best overall score across all evaluated models.

\begin{table*}[t]
      \caption{Performance comparison of models fine-tuned on the Magicoder dataset across various benchmarks. Best results are \textbf{bolded} and second-best results are \underline{underlined}.}
    \vspace{-7pt}
    \label{tab:code1}
    \scriptsize
    \definecolor{highlightblue}{HTML}{E0F7FA}
    \centering
    \setlength{\tabcolsep}{12pt}
    \renewcommand{\arraystretch}{1}
    \begin{adjustbox}{max width=\textwidth}
\begin{tabular}{@{}c|c|cccc|c|>{\columncolor{white}[\tabcolsep][0pt]}c@{}}
        \toprule
        \multirow{2}{*}[-0.6ex]{\centering\textbf{Model}} &
        \multirow{2}{*}[-0.6ex]{\centering\textbf{Method}} &
        \multicolumn{4}{c|}{\textbf{General Capability Acc.~(\%)}} &
        \textbf{Target Acc.~(\%)} &
        \textbf{Overall~(\%)} \\
        \cmidrule(lr){3-6} \cmidrule(lr){7-7} \cmidrule(lr){8-8}
        & & \textbf{CS} & \textbf{MMLU} & \textbf{GSM8K}
        & \textbf{Avg} & \textbf{HumanEval} & \textbf{Avg} \\
        \midrule

        \multirow{9}{*}{\rotatebox{90}{\textbf{Gemma-2-2B}}}
        & Pre-trained
        & 57.23$_{\pm{0.00}}$ & 53.14$_{\pm{0.00}}$
        & 23.65$_{\pm{0.00}}$ & 44.67$_{\pm{0.00}}$
        & 20.12$_{\pm{0.00}}$ & 32.40$_{\pm{0.00}}$ \\
        \cmidrule(lr){2-8}
        & Standard FT
        & 55.71$_{\pm{1.17}}$ & 50.64$_{\pm{0.41}}$
        & 20.96$_{\pm{2.04}}$ & 42.44$_{\pm{0.94}}$
        & \textbf{35.49$_{\pm{0.51}}$} & 38.97$_{\pm{0.62}}$ \\
        & WiSE-FT
        & 56.86$_{\pm{1.59}}$ & 49.50$_{\pm{0.39}}$
        & 22.87$_{\pm{2.18}}$ & 43.08$_{\pm{0.86}}$
        & 30.73$_{\pm{0.70}}$ & 36.91$_{\pm{0.34}}$ \\
        & $\ell_2$-Reg
        & 54.72$_{\pm{1.35}}$ & 49.45$_{\pm{0.40}}$
        & 20.64$_{\pm{2.15}}$ & 41.60$_{\pm{1.16}}$
        & 30.49$_{\pm{0.43}}$ & 36.05$_{\pm{0.49}}$ \\
        & LoRA
        & 56.27$_{\pm{1.65}}$ & 51.74$_{\pm{0.40}}$
        & 21.49$_{\pm{2.87}}$ & 43.17$_{\pm{1.26}}$
        & 32.32$_{\pm{0.61}}$ & 37.75$_{\pm{0.69}}$ \\
        & DEFT-UCS
        & 51.46$_{\pm{1.35}}$ & 50.65$_{\pm{0.40}}$
        & 20.65$_{\pm{2.46}}$ & 40.92$_{\pm{1.13}}$
        & 33.05$_{\pm{0.51}}$ & 36.99$_{\pm{0.57}}$ \\
        & FLOW
        & 57.84$_{\pm{1.22}}$ & \textbf{53.05$_{\pm{0.42}}$}
        & 23.91$_{\pm{2.09}}$ & 44.93$_{\pm{0.96}}$
        & 33.66$_{\pm{0.67}}$ & 39.30$_{\pm{0.55}}$ \\
        & TALR
        & \underline{58.02$_{\pm{1.20}}$}
        & \underline{52.92$_{\pm{0.41}}$}
        & \underline{24.37$_{\pm{2.08}}$}
        & \underline{45.10$_{\pm{0.97}}$}
        & 33.54$_{\pm{0.43}}$
        & \underline{39.32$_{\pm{0.35}}$} \\
        \rowcolor{highlightblue} \cellcolor{white}
        & DRIVE
        & \textbf{58.24$_{\pm{1.18}}$} & 52.56$_{\pm{0.40}}$
        & \textbf{24.94$_{\pm{2.11}}$}
        & \textbf{45.25$_{\pm{0.28}}$}
        & \underline{35.24$_{\pm{0.51}}$}
        & \textbf{40.25$_{\pm{0.37}}$} \\
        \midrule

        \multirow{9}{*}{\rotatebox{90}{\textbf{Llama-3.2-3B}}}
        & Pre-trained
        & 54.62$_{\pm{0.00}}$ & 56.54$_{\pm{0.00}}$
        & 27.75$_{\pm{0.00}}$ & 46.30$_{\pm{0.00}}$
        & 28.66$_{\pm{0.00}}$ & 37.48$_{\pm{0.00}}$ \\
        \cmidrule(lr){2-8}
        & Standard FT
        & 52.46$_{\pm{1.17}}$ & 52.18$_{\pm{0.39}}$
        & 25.37$_{\pm{2.19}}$ & 43.34$_{\pm{0.90}}$
        & \textbf{43.41$_{\pm{0.67}}$} & 43.38$_{\pm{0.62}}$ \\
        & WiSE-FT
        & 50.79$_{\pm{1.69}}$ & 52.84$_{\pm{0.40}}$
        & 25.67$_{\pm{2.03}}$ & 43.10$_{\pm{0.86}}$
        & 37.80$_{\pm{0.43}}$ & 40.45$_{\pm{0.42}}$ \\
        & $\ell_2$-Reg
        & 54.58$_{\pm{1.32}}$ & 52.94$_{\pm{0.41}}$
        & \textbf{28.57$_{\pm{2.21}}$} & 45.36$_{\pm{0.45}}$
        & 40.12$_{\pm{0.51}}$ & 42.74$_{\pm{0.26}}$ \\
        & LoRA
        & 52.78$_{\pm{1.18}}$ & 53.79$_{\pm{0.40}}$
        & 24.09$_{\pm{1.33}}$ & 43.55$_{\pm{0.55}}$
        & 40.12$_{\pm{0.51}}$ & 41.84$_{\pm{0.47}}$ \\
        & DEFT-UCS
        & 50.75$_{\pm{1.28}}$ & 48.15$_{\pm{0.42}}$
        & 22.14$_{\pm{1.98}}$ & 40.35$_{\pm{0.81}}$
        & \underline{41.46$_{\pm{0.61}}$} & 40.91$_{\pm{0.64}}$ \\
        & FLOW
        & \underline{56.59$_{\pm{1.18}}$}
        & \underline{55.91$_{\pm{0.39}}$}
        & 25.78$_{\pm{2.19}}$
        & \underline{46.09$_{\pm{0.88}}$}
        & 41.34$_{\pm{0.51}}$ & 43.72$_{\pm{0.33}}$ \\
        & TALR
        & \textbf{56.72$_{\pm{1.17}}$}
        & \textbf{55.96$_{\pm{0.40}}$}
        & 26.22$_{\pm{2.10}}$
        & \textbf{46.30$_{\pm{0.80}}$}
        & 41.22$_{\pm{0.55}}$
        & \underline{43.76$_{\pm{0.52}}$} \\
        \rowcolor{highlightblue} \cellcolor{white}
        & DRIVE
        & 55.79$_{\pm{1.21}}$ & 55.49$_{\pm{0.40}}$
        & \underline{26.66$_{\pm{2.08}}$}
        & 45.98$_{\pm{0.65}}$
        & \textbf{43.41$_{\pm{0.51}}$}
        & \textbf{44.70$_{\pm{0.39}}$} \\
        \midrule

        \multirow{9}{*}{\rotatebox{90}{\textbf{Llama-3.1-8B}}}
        & Pre-trained
        & 61.59$_{\pm{0.00}}$ & 65.32$_{\pm{0.00}}$
        & 50.95$_{\pm{0.00}}$ & 59.29$_{\pm{0.00}}$
        & 34.15$_{\pm{0.00}}$ & 46.72$_{\pm{0.00}}$ \\
        \cmidrule(lr){2-8}
        & Standard FT
        & 52.02$_{\pm{1.25}}$ & 57.28$_{\pm{0.41}}$
        & 40.41$_{\pm{2.05}}$ & 49.90$_{\pm{0.88}}$
        & \textbf{65.49$_{\pm{0.55}}$} & 57.70$_{\pm{0.50}}$ \\
        & WiSE-FT
        & 55.79$_{\pm{1.45}}$ & 60.52$_{\pm{0.40}}$
        & 43.68$_{\pm{1.95}}$ & 53.33$_{\pm{0.73}}$
        & 60.73$_{\pm{0.55}}$ & 57.03$_{\pm{0.49}}$ \\
        & $\ell_2$-Reg
        & 56.69$_{\pm{1.38}}$ & 62.17$_{\pm{0.39}}$
        & 43.79$_{\pm{2.10}}$ & 54.22$_{\pm{0.92}}$
        & 60.61$_{\pm{0.55}}$ & 57.42$_{\pm{0.63}}$ \\
        & LoRA
        & 55.44$_{\pm{1.22}}$ & 59.33$_{\pm{0.41}}$
        & 45.44$_{\pm{1.88}}$ & 53.40$_{\pm{0.58}}$
        & 61.22$_{\pm{0.33}}$ & 57.31$_{\pm{0.39}}$ \\
        & DEFT-UCS
        & 57.58$_{\pm{1.30}}$ & 62.83$_{\pm{0.40}}$
        & 42.17$_{\pm{1.98}}$ & 54.19$_{\pm{0.87}}$
        & 57.20$_{\pm{0.67}}$ & 55.70$_{\pm{0.19}}$ \\
        & FLOW
        & 59.24$_{\pm{1.19}}$ & 63.29$_{\pm{0.39}}$
        & \underline{49.64$_{\pm{2.15}}$}
        & 57.39$_{\pm{0.97}}$
        & 62.80$_{\pm{0.43}}$ & 60.10$_{\pm{0.50}}$ \\
        & TALR
        & \underline{59.58$_{\pm{1.20}}$}
        & \underline{63.34$_{\pm{0.39}}$}
        & \textbf{49.73$_{\pm{2.10}}$}
        & \underline{57.55$_{\pm{0.77}}$}
        & 62.68$_{\pm{0.67}}$
        & \underline{60.12$_{\pm{0.63}}$} \\
        \rowcolor{highlightblue} \cellcolor{white}
        & DRIVE
        & \textbf{60.51$_{\pm{1.32}}$}
        & \textbf{63.46$_{\pm{0.40}}$}
        & 49.51$_{\pm{2.02}}$
        & \textbf{57.83$_{\pm{0.59}}$}
        & \underline{64.27$_{\pm{0.33}}$}
        & \textbf{61.05$_{\pm{0.34}}$} \\
        \bottomrule
    \end{tabular}
    \end{adjustbox}
    \vspace{-8pt}
\end{table*}

\subsection{Impact of Balance Parameter $\lambda$ on Language Tasks}
\label{app:lambda_schedule}

The parameter $\lambda \in [0, 1]$ controls the trade-off between anti-forgetting and task adaptation. Table~\ref{tab:lambda} illustrates this spectrum on Llama-3.2-3B.
\begin{itemize}[leftmargin=*,nosep]
    \item \textbf{Task Adaptation Focus ($\lambda=0.2$):} When $\lambda$ is low, the model prioritizes the target task, achieving a high GSM8K score. However, this comes at the cost of general capabilities, with significant drops in Commonsense (CS) and MMLU scores.
    \item \textbf{Anti-forgetting Focus ($\lambda=0.8$):} Conversely, a high $\lambda$ preserves general capabilities but restricts the model's plasticity, leading to lower performance on the target task.
    \item \textbf{Optimal Balance:} The dynamic schedule used in DRIVE (decaying $\lambda$) or a balanced fixed value effectively navigates this trade-off, achieving the best overall averaged performance.
\end{itemize}

\begin{table*}[h]
\caption{Comparison of different $\lambda$ on Llama-3.2-3B. Higher $\lambda$ favors general capabilities, while lower $\lambda$ benefits target adaptation. $\lambda=0.8$ achieves the best overall balance.}
\label{tab:lambda}
\vspace{-7pt}
\centering
\small
\setlength{\tabcolsep}{8pt}
\begin{tabular}{@{}c|cccc|c|c@{}}
\toprule
\multirow{2}{*}[-0.6ex]{\centering\textbf{$\lambda$ value}} & \multicolumn{4}{c|}{\textbf{General Capability Acc.~(\%)}} & \textbf{Target Acc.~(\%)} & \textbf{Overall~(\%)} \\
\cmidrule(lr){2-5} \cmidrule(lr){6-6} \cmidrule(lr){7-7} 
& \textbf{CS} & \textbf{MMLU} & \textbf{MBPP} & \textbf{Avg} & \textbf{GSM8K} & \textbf{Avg} \\
\midrule
$\lambda=0.2$ & 42.23& 55.55& 39.20& 45.66& \textbf{63.81} & 54.74 \\
$\lambda=0.5$ & 54.95& 55.64& 40.40 & 50.33& 60.82& 55.58\\
$\lambda=0.8$ & \textbf{55.79} & \textbf{55.86} & \textbf{41.64}& \textbf{51.10}& 60.29& \textbf{55.70}\\
\bottomrule
\end{tabular}
\vspace{-18pt}
\end{table*}

\subsection{Temporal Dynamics of $\lambda$: Increasing vs. Decaying Schedule}
\label{app:tau_impact}

To validate the temporal importance of the trade-off parameter, we conduct an ablation study using a linearly increasing schedule for $\lambda$ ($0 \to 1$). This configuration prioritizes task adaptation during the initial phase of training, serving as a counter-factual baseline to our proposed decay strategy ($1 \to 0$). The results across three representative LLMs are summarized in Table~\ref{tab:lambda_increase}.

Compared to the standard DRIVE results (see Table~\ref{tab:main-language}), this reverse schedule yields consistently sub-optimal outcomes. For instance, the Overall Avg score for Gemma-2-2B drops to 54.92\%, while Llama-3.1-8B achieves only 66.75\%. These findings underscore that ``initial anchoring" to the pre-trained distribution is indispensable. In the early stages of fine-tuning, when gradient updates are typically largest, a low $\lambda$ fails to provide sufficient regularization (via the KL-divergence term) against the prior distribution. This allows the model to deviate prematurely from the general knowledge manifold, a distortion that subsequent increases in $\lambda$ cannot fully rectify. Consequently, maintaining a high $\lambda$ at the onset is crucial for resolving the stability-plasticity dilemma.

\begin{table*}[t]
\vspace{12pt}
\caption{Performance on GSM8K and general capabilities using a linearly increasing schedule for $\lambda$. All accuracies are reported in \%.}
\label{tab:lambda_increase}
\vspace{-7pt}
\centering
\small
\setlength{\tabcolsep}{8pt} 
\begin{tabular}{@{}c|cccc|c|c@{}}
\toprule
\multirow{2}{*}[-0.6ex]{\centering\textbf{Model}} & \multicolumn{4}{c|}{\textbf{General Capability Acc.~(\%)}} & \textbf{Target Acc.~(\%)} & \textbf{Overall~(\%)} \\
\cmidrule(lr){2-5} \cmidrule(lr){6-6} \cmidrule(lr){7-7}
& \textbf{CS} & \textbf{MMLU} & \textbf{MBPP} & \textbf{Avg} & \textbf{GSM8K} & \textbf{Avg} \\
\midrule
Gemma-2-2B   & 56.57 & 52.59 & 29.00 & 46.05& 63.79 & 54.92 \\
Llama-3.2-3B & 54.28 & 54.23& 37.00& 48.50& 63.12 & 55.81 \\
Llama-3.1-8B & 58.48 & 62.29 & 45.80 & 55.52 & 77.98 & 66.75 \\
\bottomrule
\end{tabular}
\end{table*}

\subsection{Impact of Prior Regularization $\tau$}
\label{app:regularization_tau}

The hyperparameter $\tau$ controls weight smoothing. In our implementation, $\tau$ is set by the percentile of pre-training loss for scale invariance. Table~\ref{tab:tau} evaluates different percentiles on Llama-3.2-3B. The 10th percentile gives the best overall performance in this LLM setting. A sharper distribution better preserves the contribution-diversity signal while still assigning non-zero weights to other samples. Larger $\tau$ smooths weights toward the prior. With a uniform prior, overly large $\tau$ makes the weights closer to uniform, weakens DRIVE's weighting signal, and can approach Standard FT-like behavior, degrading the overall trade-off.

\begin{table*}[t]
\vspace{-0.1in}
\caption{Comparison of different $\tau$ percentiles on Llama-3.2-3B. All accuracies are reported in \%.}
\label{tab:tau}
\vspace{-7pt}
\centering
\small
\setlength{\tabcolsep}{10pt}
\renewcommand{\arraystretch}{1.2}
\begin{tabular}{@{}c|cccc|c|c@{}}
\toprule
\multirow{2}{*}[-0.6ex]{\centering\textbf{$\tau$-Percentile(\%)}} &
\multicolumn{4}{c|}{\textbf{General Capability Acc.~(\%)}} &
\textbf{Target Acc.~(\%)} &
\textbf{Overall~(\%)} \\
\cmidrule(lr){2-5} \cmidrule(lr){6-6} \cmidrule(lr){7-7}
& \textbf{CS} & \textbf{MMLU} & \textbf{MBPP} & \textbf{Avg}
& \textbf{GSM8K} & \textbf{Avg} \\
\midrule
10 & \textbf{55.92} & 55.71 & \textbf{41.00} & \textbf{50.88} & \textbf{64.50} & \textbf{57.69} \\
30 & 55.19 & 55.80 & 38.80 & 49.93 & 61.90 & 55.92 \\
50 & 54.91 & 54.34 & 38.80 & 49.35 & 60.80 & 55.08 \\
70 & 55.06 & 55.65 & 39.60 & 50.10 & 59.14 & 54.62 \\
90 & 53.13 & \textbf{55.90} & 40.00 & 49.68 & 58.01 & 53.85 \\
\bottomrule
\end{tabular}
\vspace{-14pt}
\end{table*}

\subsection{Impact of Diversity Regularization $\eta$ on Language Tasks}
\label{app:eta_impact}
The diversity coefficient $\eta$ controls the strength of the covariance determinant term. Table~\ref{tab:eta} studies the sensitivity of DRIVE to this coefficient on language tasks.
\begin{itemize}[leftmargin=*,nosep]
    \item \textbf{Effect of Diversity Strength:} General capability metrics vary non-monotonically as $\eta$ increases. Moderate diversity regularization improves the general capability average, peaking at $\eta=0.75$, while overly large values reduce CS, MMLU, and MBPP. This suggests diversity regularization is beneficial when balanced, but excessive emphasis can hurt capability preservation.
    \item \textbf{Robustness:} DRIVE remains relatively robust across a range of $\eta$ values, with the best overall performance observed at $\eta=1.00$. This setting achieves the highest GSM8K accuracy and overall average, indicating that a moderate-to-strong diversity coefficient provides the best balance between target adaptation and general capability retention.
\end{itemize}

\begin{table*}[h]
\vspace{0pt}
\caption{Comparison of different $\eta$ on Llama-3.2-3B. }
\label{tab:eta}
\vspace{-7pt}
\centering
\small 
\setlength{\tabcolsep}{10pt}
\renewcommand{\arraystretch}{1.2} 
\begin{tabular}{@{}c|cccc|c|c@{}}
\toprule
\multirow{2}{*}[-0.6ex]{\centering\textbf{$\eta$ value}} &\multicolumn{4}{c|}{\textbf{General Capability Acc.~(\%)}} & \textbf{Target Acc.~(\%)} &  \textbf{Overall~(\%)}  \\
\cmidrule(lr){2-5} \cmidrule(lr){6-6}   \cmidrule(lr){7-7}  
 & \textbf{CS} & \textbf{MMLU} & \textbf{MBPP} &\textbf{Avg}  &\textbf{GSM8K} &\textbf{Avg} \\
\midrule
\textbf{0.25} & 55.08 & 56.30 & 40.88 & 50.75 & 59.67 & 55.21 \\
\textbf{0.50} & 54.56 & 56.67 & \textbf{41.52} & 50.92 & 58.65 & 54.79 \\
\textbf{0.75} & 55.67 & \textbf{56.79} & 41.00 & \textbf{51.15} & 60.61 & 55.88 \\
\textbf{1.00} & \textbf{55.92} & 55.71 & 41.00 & 50.88 & \textbf{64.50} & \textbf{57.69} \\
\textbf{1.25} & 55.19 & 55.80 & 38.80 & 49.93 & 61.90 & 55.92 \\
\textbf{1.5}  & 54.26 & 54.21 & 37.00 & 48.49 & 61.50 & 55.00 \\
\bottomrule
\end{tabular}
\vspace{-14pt}
\end{table*}

\subsection{Ablation Study on Vision Tasks}
\label{app:ablation_vision}

To verify the generalizability of our findings beyond LLMs, we conducted an ablation study on the ResNet-50 vision model (Table~\ref{tab:ablation_vision}). The results indicate that removing any individual component of DRIVE compromises the overall performance, with the full method consistently achieving the highest average accuracy across both benchmarks (72.34\% on CIFAR-100 and 83.85\% on Flowers102). Specifically, the diversity regularization ($\eta=0$) proves critical for adaptation; its removal leads to a substantial decline in target accuracy, such as a 7.27\% drop on CIFAR-100 and a 3.74\% decrease on Flowers102. This confirms that expanding the feature volume is necessary to capture task-specific nuances effectively. Additionally, the uniform-prior KL regularization remains the foundation of the framework, as its removal causes the most significant performance degradation, reducing the average accuracy on CIFAR-100 by approximately 9.5\% due to weight collapse.

Furthermore, the method navigates the stability-plasticity dilemma through the balance parameter. While the setting without anti-forgetting ($\lambda=0$) yields the highest target accuracy (86.40\% on CIFAR-100 and 93.67\% on Flowers102), it incurs a retention cost, reducing ImageNet performance to 57.49\% and 71.98\%, respectively. Conversely, DRIVE maintains equilibrium. It secures competitive adaptation performance (83.03\% on CIFAR-100 and 92.67\% on Flowers102) while retaining more knowledge than the plasticity-only baseline. This demonstrates that DRIVE optimizes the joint trade-off, ensuring reliable adaptation without erasing pre-trained representations.

\begin{table}[ht]
    \vspace{-0.15in}
    \caption{Ablation study of DRIVE components on ResNet-50 (Accuracy in \%).}
    \label{tab:ablation_vision}
    \centering
    \small
    \renewcommand{\arraystretch}{1.1}
    \setlength{\tabcolsep}{5pt} 
    \begin{tabular}{c|cccccc} 
        \toprule
        \multirow{2.5}{*}{\textbf{Method}} & \multicolumn{3}{c}{CIFAR-100} & \multicolumn{3}{c}{Flowers102} \\
        \cmidrule(lr){2-4} \cmidrule(lr){5-7}
        & IN-1K & Target Acc. & Avg. & IN-1K & Target Acc. & Avg. \\
        \midrule
        DRIVE w/o Diversity ($\eta=0$) & 
        \textbf{65.79} & 75.76 & 70.78 & 
        \textbf{77.93} & 88.93 & 83.43 \\
        
        DRIVE w/o KL Prior  & 
        55.93 & 69.71 & 62.82 & 
        66.94 & 87.52 & 77.23 \\
        
        DRIVE w/o anti-forgetting ($\lambda=0$) & 
        57.49 & \textbf{86.40} & 71.95 & 
        71.98 & \textbf{93.67} & 82.83 \\
        
        DRIVE w/o task-adaptation ($\lambda=1$) & 
        63.49 & 80.91 & 72.20 & 
        76.26 & 90.73 & 83.50\\
        
        \textbf{DRIVE} & 
        61.64 & 83.03 & \textbf{72.34} & 
        75.02 & 92.67 & \textbf{83.85} \\
        \bottomrule
    \end{tabular}
    \vspace{-14pt}
\end{table}

\subsection{Computational Efficiency Analysis}
\label{appendix:efficiency}

We first evaluate the computational overhead on vision tasks using CIFAR-100 with ResNet-18 and ResNet-50 on a single NVIDIA A100 GPU. As shown in Table~\ref{tab:vision_time_efficiency}, we measure the average GPU time per epoch. Compared to Standard FT, FLOW introduces slight latency (3.3\%--4.6\%). DRIVE incurs a moderate increase (11.8\%--14.7\%) due to projection and determinant calculations required for diversity regularization, yet remains efficient for standard backbones.

\begin{table}[h]
\centering
\small
\vspace{-0.15in}
\caption{Training efficiency on CIFAR-100, reported as GPU seconds per epoch.}
\label{tab:vision_time_efficiency}
\vspace{-0pt}
\begin{tabular}{llcc}
\toprule
Model & Method & GPU Time / Epoch (s) & Overhead (\%) \\ \midrule
\multirow{3}{*}{ResNet-18} & Standard FT & 280 & - \\
 & FLOW & 293 & +4.64\% \\
 & DRIVE (Ours) & 313 & +11.79\% \\ \midrule
\multirow{3}{*}{ResNet-50} & Standard FT & 389 & - \\
 & FLOW & 402 & +3.34\% \\
 & DRIVE (Ours) & 446 & +14.65\% \\ \bottomrule
\end{tabular}
\vspace{-10pt}
\end{table}

We further evaluate scalability on MetaMathQA using Gemma-2B (single A100) and Llama-3-8B (dual A100s). Table~\ref{tab:llm_time_efficiency} reports the average GPU time per epoch. Both methods incur manageable costs: FLOW adds approximately 10\% overhead for computing loss distributions, while DRIVE adds roughly 17\% due to simultaneous forward passes and covariance computation. Notably, the Johnson-Lindenstrauss transform ($d=4096 \to k=128$) ensures that high-dimensional geometric calculations remain linear, preventing prohibitive scaling on larger models.

\begin{table}[h]
\centering
\small
\vspace{-8pt}
\caption{Training efficiency on MetaMathQA, reported as GPU hours per epoch.}
\label{tab:llm_time_efficiency}
\begin{tabular}{llcc}
\toprule
Model & Method & GPU Time / Epoch (h) & Overhead (\%) \\ \midrule
\multirow{3}{*}{Gemma-2-2B} & Standard FT & 5.45 & - \\
 & FLOW & 6.00 & +10.09\% \\
 & DRIVE (Ours) & 6.38 & +17.06\% \\ \midrule
\multirow{3}{*}{Llama-3.1-8B} & Standard FT & 19.42 & - \\
 & FLOW & 21.36 & +9.99\% \\
 & DRIVE (Ours) & 22.75 & +17.15\% \\ \bottomrule
\end{tabular}
\vspace{-14pt}
\end{table}

\subsection{Paired Statistical Analysis of the Main LLM Results}
\label{app:paired_significance}

We assess the statistical reliability of the Overall improvements in Table~\ref{tab:main-language} using original training-log scores from five runs with matched random seeds and data orders. DRIVE is paired with the strongest baseline for each model: TALR for Gemma-2-2B and Llama-3.2-3B, and LoRA for Llama-3.1-8B. No additional training or reruns are performed.

For each matched seed, we compute DRIVE's Overall improvement and apply a two-sided paired $t$-test against zero mean improvement. Displayed Overall scores are calculated from rounded component means, and $\Delta$ is their difference. The 95\% confidence intervals, raw $p$-values, and small-sample-corrected paired effect sizes Hedges' $g_z$ are computed from the original unrounded paired scores. Table~\ref{tab:paired_significance} reports these statistics and Holm-adjusted $p$-values across the three model-wise tests; confidence intervals are reported for individual comparisons without multiplicity adjustment.

\begin{table}[t]
\centering
\caption{Paired statistical analysis of DRIVE against the strongest baseline for each LLM, using five runs with matched random seeds and data orders. Overall scores are percentages; differences and confidence intervals are expressed in percentage points. Positive $\Delta$ favors DRIVE.}
\label{tab:paired_significance}
\small
\setlength{\tabcolsep}{4pt}
\renewcommand{\arraystretch}{1.15}
\resizebox{\linewidth}{!}{%
\begin{tabular}{@{}llrrrrrrr@{}}
\toprule
\textbf{Model}
& \textbf{Baseline}
& \shortstack{\textbf{Baseline}\\\textbf{Overall}}
& \shortstack{\textbf{DRIVE}\\\textbf{Overall}}
& \textbf{$\Delta$}
& \textbf{95\% CI}
& \textbf{Raw $p$}
& \shortstack{\textbf{Holm-adjusted}\\\textbf{$p$}}
& \textbf{Hedges' $g_z$} \\
\midrule
Gemma-2-2B & TALR & 53.87 & 55.55 & +1.68 & $[0.85,\,2.51]$ & 0.0049 & 0.0148 & 2.00 \\
Llama-3.2-3B & TALR & 56.26 & 57.69 & +1.43 & $[0.46,\,2.42]$ & 0.0152 & 0.0152 & 1.45 \\
Llama-3.1-8B & LoRA & 65.70 & 67.27 & +1.57 & $[0.77,\,2.37]$ & 0.0055 & 0.0148 & 1.95 \\
\bottomrule
\end{tabular}%
}
\end{table}

All three paired comparisons favor DRIVE, with individual 95\% confidence intervals excluding zero. All improvements remain significant after Holm correction (adjusted $p \leq 0.0152$). Hedges' $g_z$ ranges from 1.45 to 2.00, indicating large improvements relative to the variability of paired differences. These results support the statistical reliability of DRIVE's reported 1.43--1.68 percentage-point Overall gains over the strongest evaluated baselines under this five-seed protocol.

\section{Analysis of Contribution Assessment and Diversity Mechanisms}
\label{app:mechanism_analysis}

We analyze DRIVE's contribution assessment, diversity modeling, and covariance approximation. Sec.~\ref{sec:appendix_shapley} studies sample-level stability--plasticity conflict via Shapley-style analysis. Sec.~\ref{app:diversity_metrics} discusses diversity metrics and DRIVE-Cluster. Sec.~\ref{sec:appendix_ensemble} compares covariance approximations. Sec.~\ref{sec:appendix_hard_sample_dynamics} examines hard-sample adaptation dynamics; Sec.~\ref{app:k_sensitivity} reports sensitivity to projection dimension \(k\).

\subsection{Shapley Analysis of Anti-Forgetting vs. Task Adaptation}
\label{sec:appendix_shapley}

To validate the \textit{Stability-Plasticity Dilemma} at the sample level and justify our contribution-based weighting, we conducted an analysis using the Shapley value framework. We utilize the Gradient Inner Product (GIP) \cite{koh2017understanding, wang2025data} as an efficient approximation of the One-Run Shapley value to quantify the contribution of each training sample toward two goals: (1) preserving pre-trained knowledge (Anti-forgetting, $a_i$) and (2) learning the new task (Task Adaptation, $t_i$).

\subsubsection{Experimental Setup}
\label{sec:exp_shapley_setup}

\textbf{Full-Batch Validation Gradient Calculation.}
We use full validation sets at every training step to ensure accurate estimation of the optimization direction. Let $\mathcal{D}_{\mathrm{val}}^{\mathrm{target}}$ and $\mathcal{D}_{\mathrm{val}}^{\mathrm{retain}}$ denote the complete validation sets for the target task (e.g., Flowers102) and the retention task (e.g., ImageNet subset), respectively.
At each training step $t$, we compute the exact average gradient over these datasets:
\begin{align}
\mathbf{g}_t^{\mathrm{target}}
&:= \nabla_{\theta}\left[\frac{1}{|\mathcal{D}_{\mathrm{val}}^{\mathrm{target}}|}\sum_{(x,y)\in \mathcal{D}_{\mathrm{val}}^{\mathrm{target}}}
\ell\bigl(f(x;\theta_t),y\bigr)\right],
\label{eq:g_target_full}\\
\mathbf{g}_t^{\mathrm{retain}}
&:= \nabla_{\theta}\left[\frac{1}{|\mathcal{D}_{\mathrm{val}}^{\mathrm{retain}}|}\sum_{(x,y)\in \mathcal{D}_{\mathrm{val}}^{\mathrm{retain}}}
\ell_{\text{retain}}\bigl(f(x;\theta_t),y\bigr)\right].
\label{eq:g_retain_full}
\end{align}

\paragraph{Cumulative Sample Contribution.}
Following \cite{wang2025data}, we quantify sample utility by summing marginal contributions along the optimization trajectory. Let $\mathcal{T}_i$ be steps where sample $x_i$ appears in the mini-batch. We define Shapley-style anti-forgetting ($a_i^{\mathrm{shap}}$) and task-adaptation ($t_i^{\mathrm{shap}}$) contributions as cumulative alignment between the sample gradient and full-batch validation gradients:
\begin{equation}
a_i^{\mathrm{shap}} := \sum_{t \in \mathcal{T}_i} \underbrace{\eta_t \left\langle \nabla_{\theta}\ell(f(x_i;\theta_t), y_i),\; \mathbf{g}_t^{\mathrm{retain}} \right\rangle}_{\text{Step-wise Contribution to Retention}},
\qquad
t_i^{\mathrm{shap}} := \sum_{t \in \mathcal{T}_i} \underbrace{\eta_t \left\langle \nabla_{\theta}\ell(f(x_i;\theta_t), y_i),\; \mathbf{g}_t^{\mathrm{target}} \right\rangle}_{\text{Step-wise Contribution to Adaptation}}.
\label{eq:ai_cumulative}
\end{equation}
Here, $\eta_t$ denotes the learning rate \cite{wang2025data}. The inner product measures geometric alignment, where a positive value indicates the sample facilitates loss minimization on the corresponding validation set.

\subsubsection{Empirical Results and Analysis}
\label{sec:exp_shapley_results}

\textbf{Validation of Loss-based Proxies.} In the main text, we proposed loss-based metrics to define the contribution scores (e.g., $a_i \propto -\ell_\text{pre}$, $t_i \propto \ell_{\text{cur}}$). To verify these heuristic proxies, we analyzed their correlation with theoretically grounded Shapley values calculated via Eq.~\eqref{eq:ai_cumulative}. Our empirical analysis reveals a significant positive correlation. The loss-based anti-forgetting metric $a_i$ strongly correlates with the reference $a_i^{\text{shap}}$ (Pearson's $r=0.6493$), corroborating that samples with lower pre-training loss generally contribute disproportionately to parameter stability. The strong positive correlation between $t_i$ and $t_i^{\text{shap}}$ ($r=0.8679$) confirms high-loss samples drive adaptation. This validates using our efficient loss-based proxy in DRIVE for scalable training.

\textbf{Disentangling Loss from Forgetting.}
While correlated ($r \approx 0.65$), loss magnitude alone does not fully explain forgetting; we posit gradient conflict as the primary driver. To verify this, we isolated the top 40\% high-loss samples from Flowers102 and partitioned them by Gradient Inner Product (GIP) with the retention task. This yielded a Conflicting Group ($GIP < 0$, 293 images) that opposes retention, and a Synergistic Group ($GIP > 0$, 117 images) geometrically aligned with it.

\begin{wraptable}{r}{0.38\textwidth} 
\vspace{-0.1in} 
\centering
\small
\caption{Impact of Gradient Alignment. Synergistic samples prevent forgetting despite high loss.}
\label{tab:grad_conflict}
\vspace{-7pt}
\begin{tabular}{lcc}
\toprule
Subset & Target (\%) & Retain (\%) \\
\midrule
Pre-trained & - & 80.10 \\
Conflicting & 84.85& 38.42 \\
Synergistic & 83.76 & 79.15\\
\bottomrule
\end{tabular}
\vspace{-18pt} 
\end{wraptable}
As summarized in Table~\ref{tab:grad_conflict}, we evaluate performance by training on distributions combining the remaining 60\% low-loss samples with either the Synergistic or Conflicting high-loss subgroups. Fine-tuning on the Synergistic mixture achieves competitive target accuracy while maintaining ImageNet performance comparable to the pre-trained baseline.

In contrast, the Conflicting mixture triggers severe forgetting. This dichotomy confirms that high-loss samples can be learned with limited forgetting when their gradients align with the retention task.

\paragraph{Conflict Analysis via Min-Max Normalization.}
To intuitively visualize the trade-off landscape independent of raw gradient magnitudes, we project the accumulated Shapley values into the unit interval $[0, 1]$ using Min-Max normalization. Let $\mathbf{a}^{\text{shap}}$ and $\mathbf{t}^{\text{shap}}$ be the vectors containing the contributions of all samples. The normalized scores are defined as:
\begin{equation}
\widetilde a_i := \frac{a_i^{\text{shap}} - \min(\mathbf{a}^{\text{shap}})}{\max(\mathbf{a}^{\text{shap}}) - \min(\mathbf{a}^{\text{shap}})}, \qquad
\widetilde t_i := \frac{t_i^{\text{shap}} - \min(\mathbf{t}^{\text{shap}})}{\max(\mathbf{t}^{\text{shap}}) - \min(\mathbf{t}^{\text{shap}})}.
\end{equation}

\begin{wrapfigure}{r}{0.4\textwidth}
    \vspace{-0.20in}
    \centering
    \includegraphics[width=0.4\textwidth]{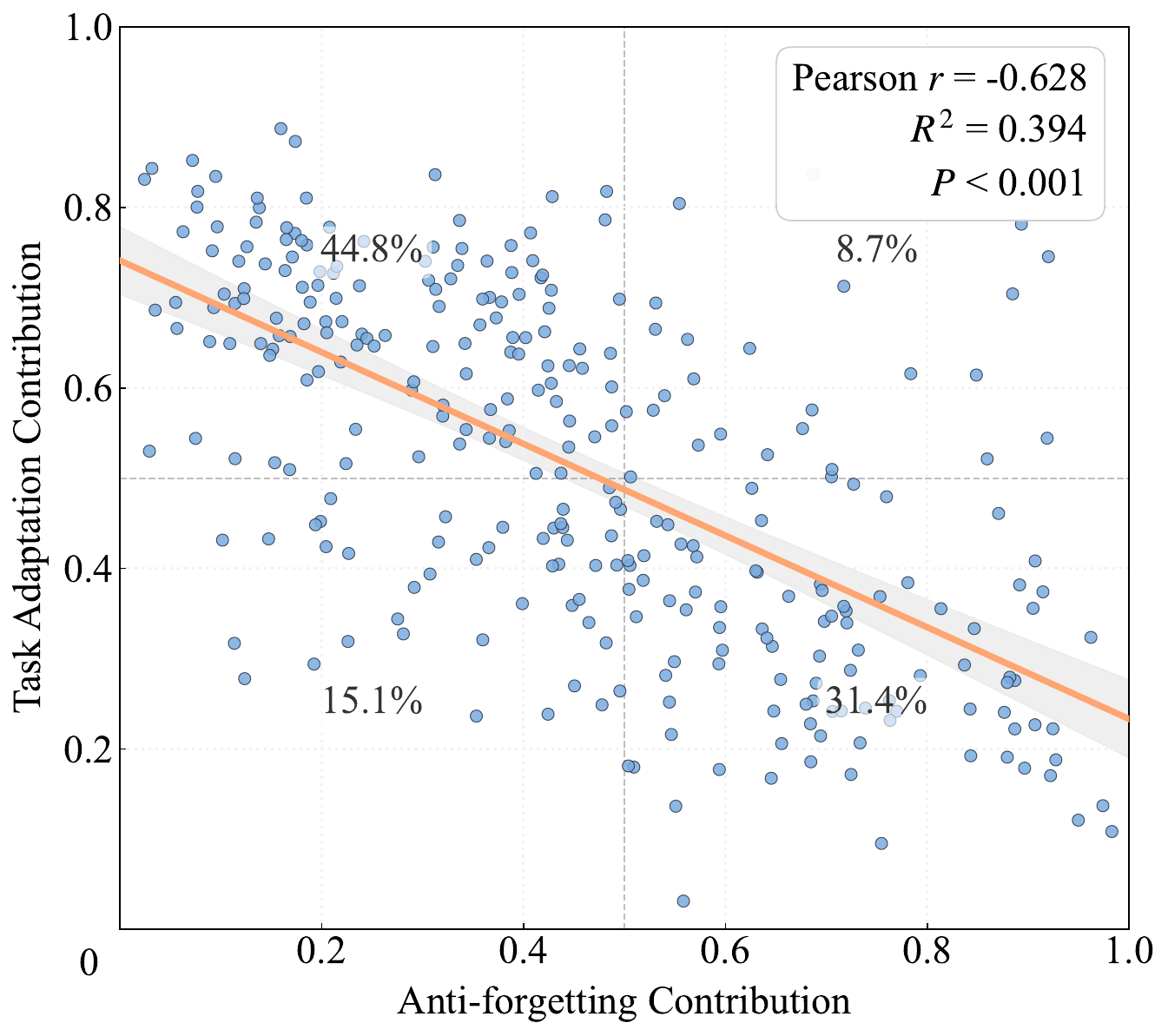}
    \vspace{-0.2in}
    \caption{Scatter plot of normalized Anti-forgetting ($\widetilde a_i$) vs. Task Adaptation ($\widetilde t_i$) contributions. }
    \label{fig:shapley_scatter}
    \vspace{-0.2in} 
\end{wrapfigure}
Fig.~\ref{fig:shapley_scatter} plots the normalized task-adaptation contribution $\widetilde{t}_i$ versus the anti-forgetting contribution $\widetilde{a}_i$. The regression analysis reveals a pronounced negative correlation (Pearson's $r=-0.628$, $R^2=0.394$), supporting a strong empirical conflict between the two objectives: samples conducive to adaptation often tend to be less favorable for retention. The data distribution is heavily skewed toward the off-diagonal quadrants: 44.8\% of samples cluster in the Adaptation-Biased region (top-left) and 31.4\% in the Retention-Biased region (bottom-right). Conversely, the Co-contributing region (top-right) remains sparse, containing only 8.7\% of the data. This scarcity of synergistic samples underscores the stability-plasticity dilemma and motivates DRIVE's dynamic trade-off mechanism, which navigates the conflicting landscape more effectively than naive uniform averaging.

\subsection{Sample Diversity}
\label{app:diversity_metrics}

Sample diversity is a multi-faceted construct that ensures the model explores the parameter space effectively while avoiding overfitting to redundant patterns \cite{song2024scaling, stap2025effect}. In this section, we expand on the metrics used to quantify diversity in recent literature. 

\subsubsection{Diversity in Fine-tuning}

\paragraph{1. Semantic and Clustering-based Metrics.}
These metrics rely on the distance between samples in an embedding space to ensure coverage \cite{das2024deft, li2025unlocking}.
    \cite{li2025unlocking} identify representative core sets via clustering-based strategies to improve sample efficiency \cite{li2025unlocking}. While effective, these methods are often discrete and non-differentiable.
DEFT-UCS utilizes core-set selection for text-editing tasks to reduce data size while preserving performance \cite{das2024deft}. However, it lacks the continuous, joint optimization framework provided by DRIVE.

\paragraph{2. Gradient-space Diversity.}
Diversity in the gradient space is often a more direct proxy for optimization robustness \cite{aljundi2019gradient, xia2024less}. \cite{xia2024less} demonstrate that selecting data based on gradient alignment can target specific downstream capabilities. DRIVE captures a similar signal through the task-adaptation term $t_i$, which prioritizes samples essential for mastering the new task. GSS prioritizes samples whose gradients maximally align with the feasible regions of previous tasks to mitigate forgetting \cite{aljundi2019gradient}.

\paragraph{3. Multi-level Diversity (Macro to Micro).}
Recent trends categorize diversity across hierarchies of the instruction dataset \cite{li2025macro}.
Studies show that increasing response diversity (micro-diversity) is more beneficial for human alignment than simply expanding prompt quantity (macro-diversity) \cite{song2024scaling}.
Methods like DEITA prioritize semantic complexity and quality over volume, effectively reducing the sample size needed for fine-tuning \cite{liu2024what, zhang2025survey}.

While existing methods often rely on selection-based or discrete metrics, DRIVE integrates diversity directly into the objective function via a differentiable covariance determinant, providing a unified and scalable solution.

\subsubsection{Details of DRIVE-Cluster}
\label{app:DRIVE_cluster}

In the replacement study, we compare DRIVE with DRIVE-Cluster, a heuristic baseline that enforces diversity through rigid geometric constraints. This variant corresponds to the DRIVE framework with $\eta=0$, where the diversity requirement is encoded directly into the prior distribution rather than a dynamic regularizer.

\paragraph{Formulation.}
Given $K$ non-empty clusters $\{\mathcal{C}_k\}_{k=1}^{K}$ obtained via K-Means clustering, we construct a cluster-balanced prior $\bm{p}=\{p_i\}_{i=1}^N$ over the entire dataset. To ensure each semantic cluster holds equal probability mass ($1/K$) initially, the prior is defined as:
\begin{equation}
    p_i = \frac{1}{K|\mathcal{C}_k|}, \quad \forall i \in \mathcal{C}_k.
\end{equation}
The weight optimization is then performed on the entire dataset by minimizing the KL-divergence against this global prior, subject to a single global normalization constraint:
\begin{equation}
\begin{aligned}
\max_{\bm w} \quad & \sum_{i=1}^N w_i c_i - \tau KL(\bm{w}||\bm{p}) \quad \text{s.t.}  & \sum_{i=1}^N w_i = 1, \quad w_i \ge 0, \quad \forall i.
\end{aligned}
\end{equation}
This formulation unifies all samples into a single optimization landscape, allowing them to compete globally based on their contribution scores, modulated by the cluster-balanced prior.

\paragraph{Closed-form Solution.}
By introducing a Lagrange multiplier for the global constraint $\sum w_i = 1$, we derive the optimal solution. The stationarity condition yields:
\begin{equation}
w_i^* \propto p_i \exp(c_i / \tau).
\end{equation}
Enforcing the global normalization condition, we obtain the standard Softmax form:
\begin{equation}
w_i^* = \frac{p_i \exp(c_i / \tau)}{\sum_{j=1}^N p_j \exp(c_j / \tau)}.
\end{equation}
This result highlights that DRIVE-Cluster effectively weights samples by scaling their exponential contribution with the prior $p_i$. Since $p_i \propto 1/|\mathcal{C}_k|$, samples from dense (redundant) clusters are naturally down-weighted in the global competition, ensuring that the final weight distribution maintains semantic diversity without requiring explicit cluster-level hard constraints.


\subsection{Analysis of Covariance Approximation Strategies}
\label{sec:appendix_ensemble}

In Section~\ref{sec:efficient_reweighting_finetuning}, we employ randomized dimensionality reduction via the Johnson-Lindenstrauss (JL) transform to approximate the high-dimensional feature space $\mathbb{R}^d$ in a lower-dimensional subspace $\mathbb{R}^k$. This design choice balances computational efficiency with geometric fidelity. In this section, we justify this choice by comparing it against two alternative strategies: a computationally cheaper diagonal approximation and a theoretically more robust ensemble projection method.

\subsubsection{Comparison with Diagonal Approximation}
A computationally cheaper alternative to our randomized projection strategy is the diagonal approximation, which assumes feature independence by setting off-diagonal elements of the weighted covariance matrix $\bm{\Sigma}_w$ to zero. This approach is widely adopted in methods like EWC \cite{kirkpatrick2017overcoming} and scalable Laplace approximations \cite{ritter2018scalable} to reduce the storage and inversion cost from $\mathcal{O}(d^3)$ to $\mathcal{O}(d)$:
\begin{equation}
    \log |\text{diag}(\bm{\sigma}^2)| = \sum_{j=1}^d \log(\sigma_j^2).
\end{equation}
However, this approximation creates a critical failure mode for diversity weighting: it cannot distinguish between diverse orthogonal features and redundant collinear features, provided their marginal variances are identical. Since DRIVE aims to explicitly penalize geometric redundancy (i.e., feature correlation), preserving off-diagonal interactions via random projection is theoretically necessary to enforce D-Optimality. Consequently, despite its linear complexity, the diagonal approximation is unsuitable for our objective.

\subsubsection{Empirical Analysis of Single vs. Ensemble Random Projections}
While a single random projection preserves pairwise distances with high probability, ensembling multiple independent projections could theoretically reduce the approximation variance. We investigate whether this potential gain justifies the added computational cost.

\paragraph{Experimental Setup.}
To evaluate the impact of ensembling, we compare the performance of DRIVE using a single Gaussian random projection matrix $\mathbf{R} \in \mathbb{R}^{d \times k}$ against an ensemble of $M$ independent matrices $\{\mathbf{R}_m\}_{m=1}^M$. For the ensemble method, the diversity regularization term averages the log-determinants across $M$ projections:
\begin{equation}
    \mathcal{J}(w) = \frac{1}{M} \sum_{m=1}^M \eta \log \left| \sum_{i=1}^N w_i (\mathbf{R}_m^\top \bar{\bm{x}}_i)(\mathbf{R}_m^\top \bar{\bm{x}}_i)^\top \right|
\end{equation}
We conducted experiments on ResNet-50 fine-tuned on CIFAR-100 and Flowers102, setting $k=128$ and varying $M \in \{1, 5, 10\}$. Other hyperparameters remain consistent with the main experiments.
\begin{table}[h]
\centering
\small
\caption{Performance of DRIVE with varying ensemble sizes on CIFAR-100 and Flowers102.}
\label{tab:ensemble_ablation}
\begin{tabular}{l|ccc|ccc}
\toprule
\multirow{2}{*}{\textbf{Method}} & \multicolumn{3}{c|}{\textbf{CIFAR-100}} & \multicolumn{3}{c}{\textbf{Flowers102}}  \\
& \textbf{Target} & \textbf{IN-1K} & \textbf{Avg.} & \textbf{Target} & \textbf{IN-1K} & \textbf{Avg.}\\
\midrule
Single Proj. ($M=1$) & 83.03 & 61.64 & 72.34 & 92.67 & 75.02 & 83.85 \\
\midrule
Ensemble ($M=5$) & 83.08 & \textbf{61.67} & \textbf{72.38} & 92.65 & \textbf{75.09} & 83.87  \\
Ensemble ($M=10$) & \textbf{83.10} & 61.65 & \textbf{72.38} & \textbf{92.71} & 75.05 & \textbf{83.88}  \\
\bottomrule
\end{tabular}
\end{table}

As observed in Table~\ref{tab:ensemble_ablation}, increasing the ensemble size from $M=1$ to $M=10$ yields negligible performance gains. Specifically, on CIFAR-100, the average accuracy improves by only 0.04\%, and on Flowers102, the improvement is marginal with a gain of just 0.03\%. This performance plateau suggests that a single Gaussian projection ($M=1$) is sufficiently robust for capturing the geometric structure required by our diversity regularizer. The variance introduced by a single projection does not destabilize the optimization trajectory. Therefore, given the lack of significant benefits from ensembling, we use a single random projection in our main experiments to maintain model simplicity.

\subsection{Empirical Verification of Dynamic Adaptation on Hard Samples}
\label{sec:appendix_hard_sample_dynamics}

To empirically validate FLOW's under-fitting behavior on challenging samples, we conducted an experiment monitoring the optimization dynamics on the most difficult subset of the data.

\begin{wrapfigure}{r}{0.5\textwidth}
    \vspace{-0.3in} 
    \centering
    \includegraphics[width=0.5\textwidth]{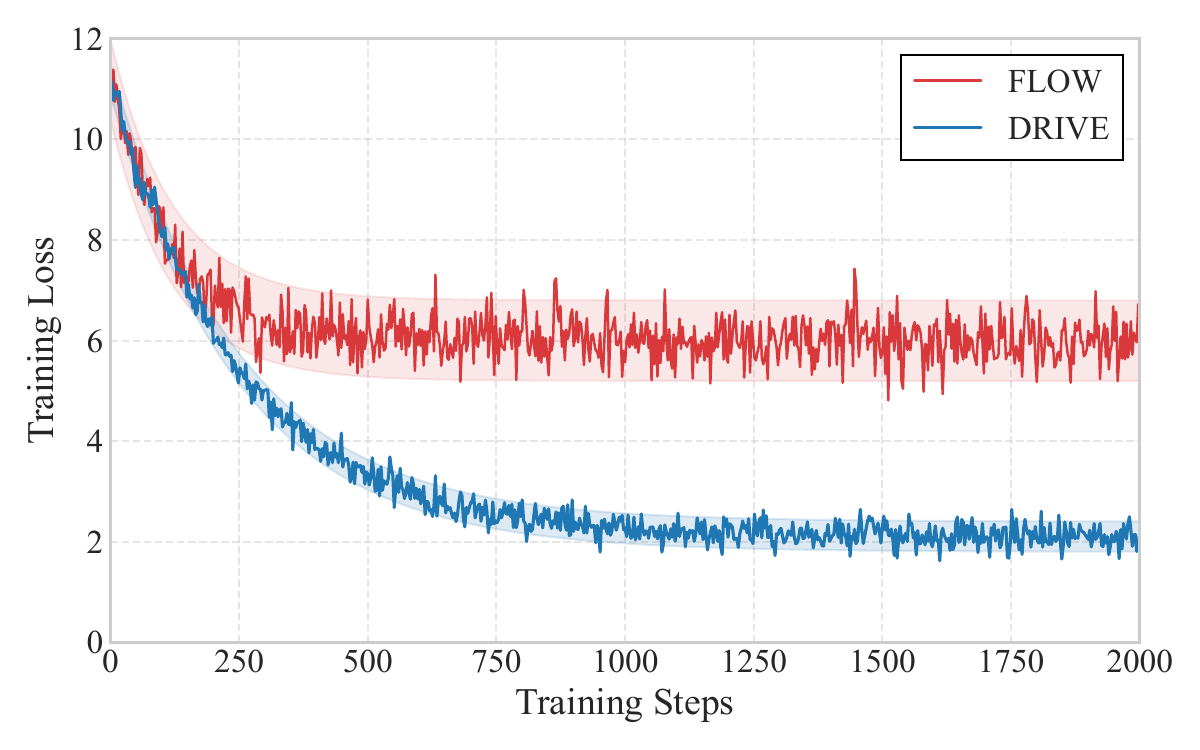}
    \vspace{-0.25in}
    \caption{Loss Dynamics on $\mathcal{D}_{hard}$. FLOW exhibits high-variance stalling (jagged), while DRIVE achieves stable adaptation (smooth).}
    \label{fig:hard_sample_loss}
    \vspace{-0.2in}
\end{wrapfigure}

\textbf{Setup.} We fine-tune a ResNet-50 backbone on CIFAR-100. Before fine-tuning, we select the top 10\% training samples (approximately 5{,}000 images) that incur the largest loss under the pre-trained model $\bm{\theta}_{\mathrm{pre}}$, and define this subset as a fixed hard-sample probe set $\mathcal{D}_{\mathrm{hard}}$ (not used for model selection or hyperparameter tuning). During training, for both FLOW and DRIVE, we monitor the raw (unweighted) mean loss on $\mathcal{D}_{\mathrm{hard}}$ over time.

As illustrated in Figure~\ref{fig:hard_sample_loss}, the distinct trajectories confirm our theoretical analysis. The loss curve for FLOW remains nearly flat, stagnating at a high value ($\approx$ 6.0). Notably, the curve exhibits severe fluctuations and instability. This indicates that while the model attempts to update, the ill-conditioned landscape causes the optimization to oscillate without making effective progress. By assigning negligible weights ($w_i \approx 0$) to high-loss samples, FLOW effectively freezes the model parameters in the ``conflicting subspace,'' preventing the gradient updates required to master these samples.

In contrast, DRIVE exhibits a steady, smooth, and significant decrease, with the final loss dropping to approximately 2.0. This substantial reduction serves as direct evidence that the model has successfully ``learned'' the underlying patterns of these hard samples.

To provide a mechanistic explanation for the optimization stalling observed in FLOW, we monitored the condition number $\kappa(\Sigma_w) = \lambda_{\max}/\lambda_{\min}$ of the weighted covariance matrix.
Our empirical observations reveal a stark contrast in the optimization geometry: FLOW exhibits spectral collapse, resulting in an ill-conditioned landscape with $\kappa(\Sigma_w)$ reaching the order of $10^6$. This anisotropy traps the optimization in a steep valley, causing the gradients to bounce ineffectively between valley walls (zig-zagging behavior).
In contrast, thanks to the determinant-based diversity regularization, DRIVE maintains a better-conditioned geometry, stabilizing $\kappa(\Sigma_w)$ at the order of $10^3$.
This reduction by three orders of magnitude ($10^6 \to 10^3$) empirically validates  that DRIVE enforces D-Optimality, thereby flattening the loss landscape and ensuring robust convergence.

\begin{wraptable}{r}{0.32\textwidth}
\vspace{-0.15in}
\centering
\scriptsize
\caption{Sensitivity to $k$ on Gemma-2-2B / MetaMathQA. Relative time is normalized by $k=32$.}
\label{tab:k_sensitivity}
\vspace{-7pt}
\setlength{\tabcolsep}{3pt}
\renewcommand{\arraystretch}{1.03}
\begin{tabular}{ccccc}
\toprule
$k$ & Gen. & Target & Overall & Time \\
\midrule
32  & 46.72 & 54.18 & 50.45 & 1.00 \\
64  & 47.01 & 57.60 & 52.31 & 1.30 \\
128 & 47.64 & 63.46 & 55.55 & 1.70 \\
256 & 47.69 & 63.79 & 55.74 & 2.50 \\
512 & 47.71 & 64.32 & 56.02 & 4.10 \\
\bottomrule
\end{tabular}
\vspace{-0.15in}
\end{wraptable}
\subsection{Sensitivity to the Projection Dimension $k$}
\label{app:k_sensitivity}

We conduct a lightweight sensitivity study on Gemma-2-2B fine-tuned on MetaMathQA, varying the sketch dimension $k$ while keeping all other settings fixed. As shown in Table~\ref{tab:k_sensitivity}, overly small sketches ($k=32,64$) lead to clear drops in both target accuracy and overall performance, indicating that the weighted local covariance geometry is compressed too aggressively. In contrast, increasing the sketch size to $k=128$ yields a substantial performance jump, suggesting that this scale is sufficient to preserve the key local geometry used by DRIVE. Further increasing $k$ to $256$ or $512$ brings only marginal gains, while the training cost rises sharply. Therefore, $k=128$ emerges as the practical elbow point, offering the best trade-off between approximation fidelity and efficiency, and thus serves as a well-justified default choice in DRIVE.

\section{Limitations}
\label{sec:limitations}

While DRIVE demonstrates consistent gains across vision models and LLMs, we note several limitations. First, our formal guarantees are established in linear and locally linearized nonlinear regimes. The local nonlinear analysis extends the purely linear case by retaining explicit Hessian remainder control, but it still relies on smoothness and local-approximation assumptions, and is valid only within a bounded neighborhood of the pre-trained weights. Extending these results to the fully non-convex global dynamics of large transformers remains an open challenge; our support beyond the local regime is currently empirical.

Second, to ensure tractability, we employ the Johnson--Lindenstrauss projection and estimate diversity with a finite FIFO buffer. Although the projection preserves dominant low-rank pairwise structures, it may attenuate diversity signals when the effective intrinsic dimension is high. The FIFO buffer further captures local rather than global diversity and may miss distant redundant samples in the data stream. Moreover, the method's effectiveness relies on reasonable alignment between the pre-trained representation and the target domain.

Finally, diversity introduces computational overhead from weighted covariance and determinant estimation. While dimensionality reduction keeps this cost moderate (e.g., $\sim$17\% for LLMs) in our reported settings, further acceleration would be beneficial for strictly constrained training budgets.

\section*{Broader Impacts}

DRIVE improves the efficiency and reliability of data-oblivious fine-tuning by assigning adaptive sample weights, which can reduce unnecessary training effort and make model adaptation more accessible. At the same time, more efficient fine-tuning may also lower the barrier to harmful, biased, or low-quality downstream adaptation. Our experiments use public datasets and standard benchmarks, and do not collect user data or involve human-subject studies. 

\section*{Reproducibility}

We will release anonymized code, configuration files, training scripts, and evaluation scripts for reproducing the main tables and diagnostic experiments. All reported experiments use public datasets and publicly available base models, with implementation details provided in the main text and appendices.

\section*{Licenses and Assets}

We use publicly available datasets, benchmarks, and base models, following their respective licenses and terms of use. The proposed method does not introduce a new dataset or release new model checkpoints. The released assets consist of anonymized code, configurations, and scripts for reproducing the experiments.


\end{document}